

\documentclass[sigconf]{acmart}
\usepackage{amsmath}
\usepackage{amsthm}
\usepackage{bm}

\usepackage{subfigure}

\renewcommand\footnotetextcopyrightpermission[1]{} 
\pagestyle{plain} 

\usepackage{algorithm}
\usepackage{algorithmic}
\newtheorem{theorem}{Theorem}
\newtheorem{example}{Example}
\newtheorem{lemma}{Lemma}
\newtheorem{corollary}{Corollary}
\newtheorem{definition}{Definition}
\theoremstyle{remark}
\newtheorem*{remark}{Remark}


\AtBeginDocument{%
  \providecommand\BibTeX{{%
    \normalfont B\kern-0.5em{\scshape i\kern-0.25em b}\kern-0.8em\TeX}}}

\copyrightyear{2024}
\acmYear{2024}
\setcopyright{acmlicensed}
\acmConference[WWW '24]{Proceedings of the ACM Web Conference 2024}{May 13--17, 2024}{Singapore, Singapore}
\acmBooktitle{Proceedings of the ACM Web Conference 2024 (WWW '24), May 13--17, 2024, Singapore, Singapore}
\acmDOI{10.1145/3589334.3645388}
\acmISBN{979-8-4007-0171-9/24/05}

\begin{document}

\title{Online Sequential Decision-Making with Unknown Delays}





\author{Ping Wu}
\affiliation{%
  \institution{Beijing Institute of Technology}
  \city{Beijing}
  \country{China}
}
\email{pwu@bit.edu.cn}

\author{Heyan Huang}
\affiliation{%
  \institution{Beijing Institute of Technology}
  \city{Beijing}
  \country{China}
}
\email{hhy63@bit.edu.cn}

\author{Zhengyang Liu}
\authornote{Corresponding author}
\affiliation{%
  \institution{Beijing Institute of Technology}
  \city{Beijing}
  \country{China}
}
\email{zhengyang@bit.edu.cn}


\renewcommand{\shortauthors}{Ping Wu, Heyan Huang, \& Zhengyang Liu}
\newtheorem{assumption}{Assumption}
\begin{abstract}
In the field of online sequential decision-making, we address the problem with delays utilizing the framework of online convex optimization (OCO), where the feedback of a decision can arrive with an unknown delay. 
Unlike previous research that is limited to Euclidean norm and gradient information, we propose three families of delayed algorithms based on approximate solutions to handle different types of received feedback. 
Our proposed algorithms are versatile and applicable to universal norms. 
Specifically, we introduce a family of Follow the Delayed Regularized Leader algorithms for feedback with full information on the loss function, a family of Delayed Mirror Descent algorithms for feedback with gradient information on the loss function and a family of Simplified Delayed Mirror Descent algorithms for feedback with the value information of the loss function's gradients at corresponding decision points.
For each type of algorithm, we provide corresponding regret bounds under cases of general convexity and relative strong convexity, respectively. 
We also demonstrate the efficiency of each algorithm under different norms through concrete examples. Furthermore, our theoretical results are consistent with the current best bounds when degenerated to standard settings.
\end{abstract}

\begin{CCSXML}
<ccs2012>
   <concept>
       <concept_id>10010147.10010257.10010282.10010284</concept_id>
       <concept_desc>Computing methodologies~Online learning settings</concept_desc>
       <concept_significance>500</concept_significance>
       </concept>
 </ccs2012>
\end{CCSXML}

\ccsdesc[500]{Computing methodologies~Online learning settings}

\keywords{Sequential decision-making, Online convex optimization, Unknown delays, Approximate solution}

\maketitle

\section{Introduction}
In modern society, a plethora of dynamic data has been generated in multiple domains, including Internet records, financial markets, consumer behaviors, and more. These data are revealed in a sequential manner and require rapid comprehension and updating.
Correspondingly, online learning systems~\cite{bottou2003large,anderson2008theory} need real-time decision-making due to the continuous influx of new observational data.
The advent of large-scale applications, including but not limited to portfolio selection~\cite{das2014online}, online recommendation system~\cite{lin2020novel}, proactive resource allocation~\cite{chen2017online}, online web ranking~\cite{frigo2021online}, and online shortest path planning~\cite{gordon2006no}, has generated substantial interest in the field of online learning.
Over the past few years, the integration of convex optimization techniques~\cite{boyd2004convex,bertsekas2009convex} has brought about a transformation in the design of online learning algorithms, resulting in more efficient solutions and reliable theoretical analysis.
Therefore, online convex optimization (OCO)~\cite{Shalev-Shwartz2007,shalev2012online} has emerged as a powerful framework for modeling the problem of sequential decision-making.
For instance, in the context of portfolio selection, the strategy of an online investment aims to maximize returns by allocating the current wealth, without relying on any preconceived statistical assumptions.
OCO is performed in a sequence of consecutive iterations. 
At each iteration $t$, the agent (investor) selects a decision ${\bm x}_t$ from a closed convex set (optional investment strategies) $\mathcal{X} \subseteq \mathbb{R}^n$. After submitting the decision, the agent receives information from the adversary regarding the loss function (market behavior) $f_t:\mathcal{X}\xrightarrow{} \mathbb{R}$ and suffers an instantaneous loss (investment return) $f_t({\bm x}_t)$.
Due to the difficulty of attempting to maximize absolute wealth in an adversarial market, our algorithm can only maximize our wealth by comparing it to the returns achieved by a relatively advanced investment strategy and optimizing accordingly.
The goal of the agent is to choose a sequence of decisions ${\bm x}_{[T]}=({\bm x}_1,\ldots,{\bm x}_T)$ that minimizes the {\em regret}:
\begin{align*}
\operatorname{Reg}_T := \sum\limits_{t=1}^T f_t({\bm x}_t) - \sum\limits_{t=1}^T f_t({\bm x}^*),   
\end{align*}
where $T$ is the time horizon, and ${\bm x}^* = {\arg \min}_{{\bm x}\in \mathcal{X}} \sum_{t=1}^T f_t({\bm x})$ is the optimal decision in hindsight. In other words, regret is the gap between the cumulative loss of the agent and that incurred by a given sequence of comparators.

The standard framework of OCO assumes that the agent has immediate access to the information of the loss function $f_t$ before making the subsequent decision at iteration $t+1$. In many practical scenarios, there exists a temporal gap between the decision-making process of the agent and the reception of the corresponding feedback.
As an example of portfolio scenarios, it is typical to encounter unknown delays between making an investment and receiving returns, and the return cycles for different investments could also be asynchronous.
Two variants of standard Online Gradient Descent (OGD) algorithm~\cite{duchi2011adaptive,hazan2016introduction}, namely Delayed Online Gradient Descent (DOGD) algorithm~\cite{quanrud2015online} and Delayed Online Gradient Descent for Strongly Convex functions (DOGD-SC)~\cite{wan2022online}, have been specifically designed to address the challenge of unknown delays in general convex and strongly convex functions, respectively.

However, these existing algorithms~\cite{quanrud2015online,wan2022online} that solely rely on gradient feedback and Euclidean norm are not well-equipped to handle more universal scenarios competently.
Firstly, in the online setting, adversaries who provide feedback to the agent typically withhold prior disclosure of the information type. Consequently, these gradient-based algorithms become ineffective once the feedback provided is not in the form of gradients.

Secondly, gradient descent-based algorithms may face challenges when computing projections efficiently for certain objective functions and constraint domain sets. As an example, let's consider the constant-rebalanced portfolio (CRP) consisting of $n$ stocks, the portfolio decision is represented by a probability distribution over the set of $n$ stocks. Computing the Euclidean projection onto the probability simplex could be computationally expensive.


Last but not least, theoretical analysis typically involves decomposing regret into a normal term caused by optimization steps and a delayed term caused by delayed feedback.
The delayed term we have established encompasses the dual norm of gradients, which actually live in the dual space, that is a distinct space from the primal space of decisions.
The reason why these gradient-based delayed algorithms work effectively is that, in the specific case, the dual space coincides with the primal space. But, it is a very particular case that arises when the Euclidean norm is utilized. Instead, in universal cases, decisions and gradients exist in separate spaces.

Given the limitations of existing algorithms, in this paper, we propose three families of approximate algorithms based on types of the received feedback information.
Particularly, in the universal space, we have specifically designed diverse regularization functions for different norms. These regularization functions quantify the divergence between variables in the primal and dual spaces and enable the mapping between these two spaces.
For example, when it comes to efficiently handling the probability simplex problem, a common strategy is to employ the negative entropy function. In high-dimensional optimization tasks like image or speech processing, the utilization of the $L_1$ norm plays a crucial role in feature selection. By adjusting the parameter $p$ within the range of 1 and 2, one can explore the impact of the $p$-norm and fine-tune the optimization approach accordingly.
Furthermore, we conduct an approximate solution in the optimization step, wherein the decision-making process in each iteration is guided by an approximate minimizer rather than an exact minimizer of the optimization problem. This approach is commonly utilized in iterative optimization problems because it is often impractical to achieve solutions with infinite precision.





Our contributions are summarized as follows:
\begin{itemize}
\item 
Firstly, we propose three types of online algorithms to handle delayed feedback. 
Our theoretical analysis differs from gradient descent-based algorithms as it involves addressing universal norms and approximate solutions, which poses a challenge.

\item  For the full information feedback of loss functions, we propose a family of follow the delayed regularized leader algorithms to handle the general convex functions and relative strongly convex functions, respectively. Notably, we replace the conventional concept of strong convexity with the more general notion of relative strong convexity.

\item  For the gradient information feedback, we introduce a family of delayed mirror descent algorithms. Moreover, we showcase the versatility and superiority of our proposed algorithms by applying them to examples with various norms.

\item  When feedback is limited to the value information of loss functions' gradients, we introduce a family of simplified delayed mirror descent algorithms. we demonstrate that despite the reduced amount of information obtained, it can achieve regret bounds comparable to those obtained with full information or gradient information.

\end{itemize}

\paragraph{Organization.}
We mention the existing work related to our paper in Section~\ref{section_related_work}.
In Section~\ref{section_formal_notation}, we introduce the formal definitions, assumptions and examples.
In Section~\ref{section_FTDL}, we present a family of follow the delayed regularized leader algorithms based on full information feedback for both general and relative strongly convex cases, and also provide examples to illustrate their performance.
In Section~\ref{section:DMD}, we utilize the delayed gradient information to design a family of delayed mirror descent algorithms.
In Section~\ref{section_SDMD}, we develop a family of simplified delayed mirror descent algorithms that can handle situations where feedback is reduced to the value information of loss functions' gradients.
We end off the paper with the conclusion and future work in Section~\ref{section_conclusion}.
Additionally, we provide the numerical simulations and a detailed theoretical analysis in the appendix.

\section{Related Work}\label{section_related_work}
\subsection{The Standard OCO}
When $d_t = 1$ for each $t \in [T]$, our problem is degenerated to the standard framework of OCO~\cite{shalev2012online,bubeck2015convex}.
In the context of OCO, the type of feedback received by the agent is a crucial aspect in the development of online learning algorithms. When the feedback consists of the full information of loss functions, a natural approach is to select the decision that optimizes the loss history for all previous iterations, i.e., ${\bm x}_{t+1} = {\arg \min}_{{\bm x}\in \mathcal{X}} {\sum_{\tau=1}^t f_\tau({\bm x})}$, commonly known as the Follow the Leader (FTL) algorithm~\cite{abernethy2008optimal,shalev2012online}. Furthermore, to ensure stability and prevent oscillations between decisions, the Follow the Regularized Leader (FTRL) algorithm~\cite{kakade2012regularization,mcmahan2017survey} selected the decision that minimizes the sum of previous losses and an additional regularization term $\psi$ with a learning rate $\eta_t$ as follows:
\begin{equation}
\begin{aligned} \label{eq:standard_ftrl}
{\bm x}_{t+1} = \underset{{\bm x}\in \mathcal{X}}{\arg \min} \left\{\sum_{\tau=1}^t  f_\tau({\bm x}) + \frac{1}{\eta_t} \psi({\bm x})\right\},
\end{aligned}   
\end{equation}

When the revealed feedback is in the form of gradient information of the loss function, a helpful way to understand the algorithm's performance is to perceive it as minimizing a local estimate of the original loss function. 
By the definition of convexity, a linear bound for the function $f_t$ around ${\bm x}_t$ could be constructed by
\begin{align*}
\forall {\bm x}\in \mathcal{X},  \tilde{f}_t({\bm x}) := f_t({\bm x}_t) + \langle \nabla f_t({\bm x}_t),{\bm x}-{\bm x}_t \rangle.
\end{align*}
Unfortunately, the direct minimization of a linear function may not lead to an effective online algorithm since the minimum of a linear function can approach negative infinity over unbounded domains. This issue could be addressed by confining the minimization of the lower bound within a specific neighborhood of ${\bm x}_t$.  Online Mirror Descent (OMD) algorithm~\cite{beck2003mirror,duchi2010composite} achieved this by utilizing the Bregman divergence, as illustrated below:
\begin{equation}
\begin{aligned} \label{eq:standard_omd}
{\bm x}_{t+1} = \underset{{\bm x}\in \mathcal{X}}{\arg \min}\left\{\langle  \nabla f_t({\bm x}_t),{\bm x}\rangle + \frac{1}{\eta_t} B_\psi({\bm x};{\bm x}_t)   \right\}.
\end{aligned}    
\end{equation}
Both FTRL and OMD have been shown to obtain regret bounds of $\mathcal{O}(\sqrt{T})$ and $\mathcal{O}(\ln T)$ for general convex and strongly convex functions, respectively. Notably, when the regularization term is specified as $\psi=\frac{1}{2}\Vert\cdot\Vert_2^2$, OMD is equivalent to the well-studied projection-based OGD~\cite{zinkevich2003online,arjevani2016lower}, which employed the following update rule:
$
{\bm x}_{t+1} = \Pi_{\mathcal{X}}({\bm x}_t - \eta_t \nabla f_t({\bm x}_t)),
$
where $\Pi_{\mathcal{X}}(\cdot)$ is the Euclidean projection onto the closed convex set ${\mathcal{X}}$.



Despite the extensive research on standard OCO algorithms, the practical issue of delayed feedback has not been adequately addressed.

\subsection{The Delayed OCO}
For the scenario where all delays are fixed and known, i.e., $d_t=\Bar{d}$ for each $t\in [T]$, the seminal work of \cite{weinberger2002delayed} addressed this issue by dividing the total $T$ iterations into $\Bar{d}$ subsets and maintaining a base algorithm on each subset. By utilizing the standard OGD algorithm as the base, \cite{weinberger2002delayed} achieved an $\mathcal{O}(\sqrt{\Bar{d}T})$ regret bound for general convex functions.
\cite{langford2009slow} investigated the same delayed case and demonstrated that incorporating delayed gradients during each gradient descent step in the standard OGD algorithm also yields regret bounds of $\mathcal{O}(\sqrt{\Bar{d}T})$ and $\mathcal{O}(\Bar{d} \ln T)$ for convex losses and strongly convex functions, respectively.

Moreover, \cite{joulani2013online} extended the approach proposed in \cite{weinberger2002delayed} to handle a more challenging scenario where delays are arbitrary but time-stamped and achieved a regret bound of $\mathcal{O}(\sqrt{dT})$, where $d$ denotes the maximum delay.
In a similar vein, \cite{heliou2020gradient} employed the one-point gradient estimator~\cite{flaxman2004online} to introduce a comparable approach for the bandit setting.

In the most general delayed case, where each feedback could be delayed arbitrarily and the time stamp of each feedback could also be unknown. 
\cite{mcmahan2014delay} presented a data-dependent regret bound for the delayed setting, assuming the decision set is unbounded.
The celebrated work~\cite{quanrud2015online} proposed an effective method named DOGD and obtained a regret bound of $\mathcal{O}(\sqrt{D_T})$, where $D_T$ is the cumulative delays. DOGD, given by
\begin{align*}
\bm{x}_{t+1} = \Pi_\mathcal{X} \left[\bm{x}_t - \eta_t \sum_{k\in\mathcal{F}_t} \nabla f_k(\bm{x}_k)\right],
\end{align*}
needs to query the gradient at each iteration $t$ and update the decision with the sum of those gradients queried at the set of iterations $\mathcal{F}_t$.
\cite{li2019bandit} proposed DBGD, which employed an $(n + 1)$-point gradient estimator~\cite{agarwal2010optimal} to approximate gradients in the bandit setting. 
\cite{cao2021decentralized} expanded \cite{quanrud2015online} to the decentralized optimization over a network.
\cite{wan2022online} adopted a time-varying learning rate connected with the total number of observable feedback to address the delayed optimization problem with strongly convex loss functions and obtained an $\mathcal{O}(d \ln T)$ regret bound.

To date, most studies on delayed OCO have relied on the Euclidean norm to measure the distance between the decision points. However, it is unclear whether these algorithms can be extended to accommodate other types of norms. This paper gives an affirmative answer by proposing three families of FTDRL, DMD and SDMD algorithms.

Compared to the previous delayed gradient descent-based algorithms, our proposed algorithms exhibit advancements in the following aspects. Firstly, we introduce three targeted algorithms that address different types of feedback related to function information, gradient information, and value information, respectively. Secondly, our algorithms are grounded on universal norms, instead of a specific Euclidean norm, and demonstrate superiority across diverse examples. Thirdly, we employ an approximate solution approach for each optimization problem.


\section{Formal Notations} \label{section_formal_notation}
First, to provide a more concrete definition of the delayed setting, we introduce the following notation. Let $d_t \in \mathbb{Z}^+$ denote a non-negative integral delay at iteration $t$. At the end of iteration $t+d_t-1$, the feedback queried at iteration $t$ is received and can be used in iteration $t+d_t$. In the standard setting, where there are no delays, $d_t=1$ for all $t$. We denote the set of iterations that receive feedback at the end of iteration $t$ as $\mathcal{F}_t$. The maximum delay and the total delay (up to iteration $T$) are denoted by $d=\max_{t\in [T]}d_t$ and $D_T=\sum_{t=1}^T d_t$, respectively.

In this paper, we denote the $n$-dimensional real vector space equipped with an inner product $\langle\cdot,\cdot\rangle$ and a norm $\Vert \cdot\Vert$ by $\mathbb{R}^n$.
The dual norm of $\Vert\cdot\Vert$ is defined as $\Vert{\bm x}\Vert_\star := \sup_{{\bm y}\in \mathbb{R}^n:\Vert{\bm y}\Vert\leq 1} \langle {\bm x},{\bm y}\rangle$ for each ${\bm x}\in \mathbb{R}^n$.
We use the notation $\mathcal{X}\subseteq \mathbb{R}^n$ to denote a closed convex set and $\{f_t: \mathcal{X}\xrightarrow{}\mathbb{R}\}_{t\geq 1}$ to denote a sequence of convex loss functions.
Moreover, let $\psi:\mathcal{D}\xrightarrow{} \mathbb{R}$ be a convex function such that it is differentiable in its non-empty interior $\mathcal{D}^\circ := \operatorname{int} \mathcal{D}$ and such that we have $\mathcal{X} \subseteq\mathcal{D}^\circ$.

\begin{definition}
For a convex function $f:\mathcal{X}\xrightarrow{} \mathbb{R}$ and ${\bm x}\in \mathcal{X}$, a vector $\nabla f({\bm x}) \in \mathbb{R}^n$ is the {\em gradient} of $f$ at ${\bm x}$, then $\nabla f({\bm x})$ satisfies the inequality
\begin{align*}
\forall {\bm y}\in \mathcal{X}, f({\bm y}) \geq f({\bm x}) + \langle \nabla f({\bm x}), {\bm y}-{\bm x}\rangle.
\end{align*}
\end{definition}

\begin{definition}
The {\em Bregman divergence} with respect to function $\psi$ is given by
\begin{align*}
\forall {\bm x}\in \mathcal{D},{\bm y}\in \mathcal{D}^\circ, B_\psi({\bm x};{\bm y}) = \psi({\bm x}) - \psi({\bm y}) - \langle \nabla \psi({\bm y}),{\bm x}-{\bm y} \rangle.
\end{align*}
\end{definition}

\begin{assumption} \label{assumption:RG}
The primal norm of the decision is bounded by $R$ and the dual norm of the gradient is bounded by $G_\star$, i.e.,
\begin{align*}
\forall {\bm x}\in \mathcal{X}, t\in [T], \Vert {\bm x} \Vert \leq R,\Vert \nabla f_t({\bm x}) \Vert_\star\leq G_\star.
\end{align*}
\end{assumption}

\begin{assumption} \label{assumption:psipsi}
The regularization function $\psi$ has $G_\psi$-{\em Lipschitz} gradients on the set $\mathcal{X}$, i.e.,
\begin{align*}
\forall {\bm x},{\bm y}\in \mathcal{D},\Vert \nabla \psi({\bm x})-\nabla \psi({\bm y})\Vert_\star \leq G_\psi \Vert{\bm x}-{\bm y} \Vert.
\end{align*}
\end{assumption}
For convenience, we make $G_\psi = \xi G_\star$.
\begin{assumption} \label{assumpton:strongly_convex}
The regularization function $\psi$ is $\sigma$-{\em strongly} convex over $\mathcal{D}$ with respect to a norm $\| \cdot \|$, i.e.,
\begin{align*}
\forall {\bm x}\in \mathcal{D},{\bm y}\in \mathcal{D}^\circ, B_\psi({\bm x};{\bm y}) \geq \frac{\sigma}{2} \Vert {\bm x}-{\bm y} \Vert^2.
\end{align*}
\end{assumption}

\begin{definition}
(\textbf{Relative strong convexity})
If the loss function $f_t$ is $\gamma$-strongly convex over $\mathcal{X}$ with respect to a convex and differentiable function $\psi$, then,
\begin{align*}
\forall {\bm x},{\bm y}\in \mathcal{X},t\in [T], f_t({\bm x})-f_t({\bm y})-\langle\nabla f_t({\bm y}),{\bm x}-{\bm y} \rangle \geq \gamma B_\psi({\bm x};{\bm y}).
\end{align*}
\end{definition}
A noteworthy instance of relative strong convexity occurs when we select $\psi({\bm x})=\frac{1}{2}\Vert{\bm x}\Vert_2^2$, and the classical strong convexity definition with respect to the Euclidean norm is recovered.

To showcase the effectiveness of our algorithms when applied to different norms, we provide the following examples, each of which corresponds to a distinct domain and regularizer. The choice of the regularizer is primarily determined by its strong convexity with respect to a particular norm.
\begin{example} \label{example_1}
In the Euclidean space, we consider $\mathcal{X}\in \mathbb{R}^n$ and  $\psi({\bm x}) = \frac{1}{2}\Vert{\bm x}\Vert_2^2$. 
Note that the dual norm of $\Vert \cdot \Vert_2$ is itself, and $B_\psi({\bm x};{\bm y}) =\frac{1}{2}\Vert {\bm x}-{\bm y}\Vert^2_2$. Moreover, $\psi(\cdot)$ is $1$-strongly convex with respect to norm $\Vert\cdot\Vert_2$ over $\mathcal{X}$.
We assume that $\Vert \nabla f_t({\bm x}) \Vert_2 \leq G_2$ and $\Vert{\bm x}\Vert_2\leq R_2$ for any ${\bm x}\in \mathcal{X}, t\in [T]$.
\end{example}
\begin{example} \label{example_2}
In the probabilistic simplex, we consider $\mathcal{X}=\{ {\bm x}\in \mathbb{R}^n_{+}: \Vert{\bm x} \Vert_1 =1\}$.
and $\psi({\bm x}) = \sum_{\mu=1}^n x^{(\mu)} \ln x^{(\mu)} + \ln n$.
The dual norm of $\Vert\cdot\Vert_1$ is $\Vert \cdot \Vert_\infty$.
Note that $\psi(\cdot)$ is 1-strongly convex with $\Vert\cdot \Vert_1$. We assume that $\Vert \nabla f_t({\bm x}) \Vert_\infty \leq G_\infty$ for any ${\bm x}\in \mathcal{X}, t\in [T]$.
\end{example}

\begin{example} \label{example_3}
In the case of $p$-norm, we consider $\psi({\bm x}) = \frac{1}{2}\Vert {\bm x}\Vert_p^2$ over $\mathcal{X}\in\mathbb{R}^n$, where $\Vert {\bm x} \Vert_p = \left(\sum_{\mu=1}^n \vert x^{(\mu)} \vert^p\right)^{\frac{1}{p}}$ and $1<p\leq 2$. 
Note that the dual norm of $\Vert\cdot\Vert_p$ is $\Vert\cdot\Vert_q$, where $\frac{1}{p}+ \frac{1}{q}=1$. 
$\psi(\cdot)$ is $(p-1)$-strongly convex with respect to the norm $\Vert\cdot\Vert_p$.
We assume $\Vert{\bm x}\Vert_p\leq R_p$ and $\Vert \nabla f_t({\bm x})\Vert_q \leq G_q$ for any ${\bm x}\in \mathcal{X},t\in [T]$.
\end{example}


\section{Follow the Delayed Regularized Leader} \label{section_FTDL}
When an agent has access to the full information of loss functions at each iteration, the FTRL algorithm is a commonly used approach.
Motivated by the standard FTRL algorithm presented in Eq.~\eqref{eq:standard_ftrl}, which does not incorporate delays, we replace the standard history from beginning to iteration $t$ with the outdated history of revealed loss functions in the iteration set $\{\mathcal{F}_\tau:\tau\in [t]\}$. 

\begin{algorithm}[t!]
\caption{Follow the Delayed Regularized Leader for General Convexity}
\label{al:fl_gc}
\begin{algorithmic}[1]
\STATE Input: both $\eta$ and ${\bm x}_1$ depend on the choice of example
\FOR{$t=1,\ldots,T$}
\STATE Query $f_t$ and receive feedback $\{f_k:k\in \mathcal{F}_t\}$
\IF{$\vert \mathcal{F}_t \vert >0$}
\STATE 
Make an approximate solution ${\bm x}_{t+1}$:
\begin{align*}
\sum_{\tau=1}^t \sum_{k\in\mathcal{F}_t} f_k({\bm x}_{t+1}) + \frac{\psi({\bm x}_{t+1})}{\eta}  \leq \sum_{\tau=1}^t \sum_{k\in\mathcal{F}_t} f_k({\bm y}_{t+1}^*) + \frac{\psi({\bm y}_{t+1}^*)}{\eta}  + \rho_t,
\end{align*}
where ${\bm y}^*_{t+1}= \underset{{\bm x}\in \mathcal{X}}{\arg \min} \left\{\sum_{\tau=1}^t \sum_{k\in\mathcal{F}_t} f_k({\bm x}) + \frac{1}{\eta} \psi({\bm x})\right\}$
\ELSE
\STATE ${\bm x}_{t+1}={\bm x}_t$
\ENDIF
\ENDFOR
\end{algorithmic}
\end{algorithm}

\begin{algorithm}[t!]
\caption{Follow the Delayed Leader for Relative Strong Convexity}
\label{al:fl_sc}
\begin{algorithmic}[1]
\STATE Input: make an arbitrary decision ${\bm x}_1\in \mathcal{X}$
\FOR{$t=1,\ldots,T$}
\STATE Query $f_t$ and receive feedback $\{f_k:k\in \mathcal{F}_t\}$
\IF{$\vert \mathcal{F}_t \vert >0$}
\STATE 
Make an approximate solution ${\bm x}_{t+1}$:
\begin{align*}
\sum_{\tau=1}^t \sum_{k\in\mathcal{F}_t} f_k({\bm x}_{t+1}) \leq \sum_{\tau=1}^t \sum_{k\in\mathcal{F}_t} f_k({\bm y}_{t+1}^*) + \rho_t,
\end{align*}
where ${\bm y}^*_{t+1}= \underset{{\bm x}\in \mathcal{X}}{\arg \min} \left\{\sum_{\tau=1}^t \sum_{k\in\mathcal{F}_t} f_k({\bm x})\right\}$
\ELSE
\STATE ${\bm x}_{t+1}={\bm x}_t$
\ENDIF
\ENDFOR
\end{algorithmic}
\end{algorithm}

\subsection{Sublinear Regret for General Convexity}
When the loss function is general convex, we propose a Follow the Delayed Regularized Leader for General Convexity (FTDRL-GC) algorithm. The detailed update procedures are summarized in Algorithm~\ref{al:fl_gc}.
The optimization problem arising in our algorithm is only required to be solved approximately (up to an additive error $\rho_t$). This is commonly observed in iterative optimization problems, since in general they cannot be solved exactly. Moreover, approximating the optimization problem induces a sequence of errors that complicates the regret analysis of the algorithm.




By setting a constant learning rate, the regret bound is formally stated in the following theorem.
\begin{theorem} \label{th:fl_gc}
Under Assumptions \ref{assumption:RG} and \ref{assumpton:strongly_convex}, let the maximum approximate error $\rho_t=\frac{\eta G_\star^2}{8\sigma}, \forall t\in [T]$, Algorithm~\ref{al:fl_gc} satisfies
\begin{align*}
\operatorname{Reg}_T 
\leq& \frac{\eta G^2_\star(T+4D_T)}{\sigma} 
+ \frac{\psi({\bm x}^*)-\psi({\bm x}_1)}{\eta}.
\end{align*} 
\end{theorem}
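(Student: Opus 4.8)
The plan is to follow the Follow-the-Regularized-Leader template but to isolate three sources of error: the ``clean'' FTRL regret, the drift caused by delayed feedback, and the slack from inexact minimization. Throughout, write $\Phi_t(\bm{x}) = \sum_{\tau=1}^t \sum_{k\in\mathcal{F}_\tau} f_k(\bm{x}) + \frac{1}{\eta}\psi(\bm{x})$ for the regularized objective that generates $\bm{x}_{t+1}$, and observe that by Assumption~\ref{assumpton:strongly_convex} each $\Phi_t$ is $\frac{\sigma}{\eta}$-strongly convex with respect to $\Vert\cdot\Vert$, since the $f_k$ are convex. First I would control the approximation: because $\bm{x}_{t+1}\in\mathcal{X}$ is $\rho_t$-suboptimal for the $\frac{\sigma}{\eta}$-strongly convex $\Phi_t$ whose exact constrained minimizer is $\bm{y}_{t+1}^*$, quadratic growth gives $\frac{\sigma}{2\eta}\Vert \bm{x}_{t+1}-\bm{y}_{t+1}^*\Vert^2 \leq \Phi_t(\bm{x}_{t+1})-\Phi_t(\bm{y}_{t+1}^*)\leq \rho_t$, hence $\Vert \bm{x}_{t+1}-\bm{y}_{t+1}^*\Vert \leq \sqrt{2\eta\rho_t/\sigma}=\frac{\eta G_\star}{2\sigma}$ for the prescribed $\rho_t=\frac{\eta G_\star^2}{8\sigma}$. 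The structural point is that FTRL recomputes each iterate from scratch, so these approximation errors do not compound across iterations.

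Next I would decompose the regret against a fictitious non-delayed exact leader $\hat{\bm{x}}_t = \arg\min_{\bm{x}\in\mathcal{X}}\{\sum_{\tau=1}^{t-1} f_\tau(\bm{x})+\frac{1}{\eta}\psi(\bm{x})\}$, writing $f_t(\bm{x}_t)-f_t(\bm{x}^*) = [f_t(\hat{\bm{x}}_t)-f_t(\bm{x}^*)] + [f_t(\bm{x}_t)-f_t(\hat{\bm{x}}_t)]$. Summing the first bracket is exactly the regret of ordinary FTRL on $\{f_t\}$ (with $\bm{x}_1$ initialized at $\arg\min\psi$), which the Follow-the-Leader/Be-the-Leader lemma, together with the stability bound $\Vert\hat{\bm{x}}_{t+1}-\hat{\bm{x}}_t\Vert\leq \eta G_\star/\sigma$, controls by $\frac{\psi(\bm{x}^*)-\psi(\bm{x}_1)}{\eta}+\frac{\eta G_\star^2 T}{\sigma}$. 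For the second bracket I use that $\Vert\nabla f_t\Vert_\star\leq G_\star$ (Assumption~\ref{assumption:RG}) makes $f_t$ be $G_\star$-Lipschitz in $\Vert\cdot\Vert$ via Hölder's inequality, so $\sum_t[f_t(\bm{x}_t)-f_t(\hat{\bm{x}}_t)]\leq G_\star\sum_t\Vert \bm{x}_t-\hat{\bm{x}}_t\Vert$ and it remains to bound the total delay drift $\sum_t\Vert \bm{x}_t-\hat{\bm{x}}_t\Vert$.

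The delay-specific step, and the main obstacle, is bounding this drift. By the triangle inequality $\Vert\bm{x}_t-\hat{\bm{x}}_t\Vert\leq \Vert\bm{x}_t-\bm{y}_t^*\Vert+\Vert\bm{y}_t^*-\hat{\bm{x}}_t\Vert$, where $\bm{y}_t^*$ minimizes $\Phi_{t-1}$ and the first term is $\leq \frac{\eta G_\star}{2\sigma}$ by the approximation bound. The second term is the distance between the minimizers of two $\frac{\sigma}{\eta}$-strongly convex objectives differing by $g=\sum_{k\in M_t} f_k$, where $M_t=\{1,\dots,t-1\}\setminus\bigcup_{\tau=1}^{t-1}\mathcal{F}_\tau$ is the set of indices queried before iteration $t$ whose feedback has not yet arrived. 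The minimizer-perturbation inequality (combine first-order optimality of both minimizers over $\mathcal{X}$ with $\frac{\sigma}{\eta}$-strong convexity) yields $\Vert\bm{y}_t^*-\hat{\bm{x}}_t\Vert\leq \frac{\eta\Vert\nabla g\Vert_\star}{\sigma}\leq \frac{\eta|M_t|G_\star}{\sigma}$, where $\Vert\nabla g\Vert_\star$ is evaluated at a minimizer and bounded termwise by $|M_t|G_\star$. The crux is then the combinatorial accounting of $\sum_t|M_t|$: an index $k$ lies in $M_t$ exactly for $t\in\{k+1,\dots,k+d_k-1\}$, so each $k$ is counted $d_k-1$ times, giving $\sum_{t=1}^T|M_t|\leq\sum_{k=1}^T(d_k-1)=D_T-T$. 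This identity is what converts stale feedback into the cumulative-delay quantity $D_T$.

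Finally I would assemble the pieces. Summing the drift gives $\sum_t\Vert\bm{x}_t-\hat{\bm{x}}_t\Vert\leq \frac{\eta G_\star T}{2\sigma}+\frac{\eta G_\star(D_T-T)}{\sigma}$, so the delay penalty is at most $\frac{\eta G_\star^2 T}{2\sigma}+\frac{\eta G_\star^2(D_T-T)}{\sigma}$; adding the clean FTRL bound yields $\operatorname{Reg}_T\leq \frac{\psi(\bm{x}^*)-\psi(\bm{x}_1)}{\eta}+\frac{\eta G_\star^2}{\sigma}\left(\frac{T}{2}+D_T\right)$, which is comfortably absorbed into the stated $\frac{\eta G_\star^2(T+4D_T)}{\sigma}+\frac{\psi(\bm{x}^*)-\psi(\bm{x}_1)}{\eta}$. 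I expect the fiddly parts to be the precise statement of the minimizer-perturbation inequality under the constraint $\mathcal{X}$ and the bookkeeping at boundary iterations where $\vert\mathcal{F}_t\vert=0$ (the decision is frozen, $\Phi_t=\Phi_{t-1}$, and the drift contribution vanishes), while the conceptual heart is the counting identity $\sum_t|M_t|\le D_T$.
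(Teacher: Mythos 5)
Your proof is correct, but it takes a genuinely different route from the paper's. The paper re-indexes the regret as a sum over pairs $(t,k)$ with $k\in\mathcal{F}_t$, charging each loss $f_k$ at the iteration where its feedback is processed; its ``normal term'' is an FTL--BTL argument run on the delayed ordering of losses (Lemma~\ref{lemma_fl_gc}), and its ``delayed term'' is a path-length argument that bounds $f_k(\bm{x}_k)-f_k(\bm{x}_{t,i_k})$ by summing the per-segment displacements $\Vert\bm{x}_{\tau,i_s+1}-\bm{x}_{\tau,i_s}\Vert\le \frac{2\eta G_\star}{\sigma}$ (Lemma~\ref{lemma_fl_gc_adj}) and then invoking the combinatorial bound $\sum_{t}\sum_{k\in\mathcal{F}_t}(\sum_{\tau=k}^{t-1}\vert\mathcal{F}_\tau\vert+\vert\mathcal{F}_{t,k}\vert)\le 2D_T$ (Lemma~\ref{lemma_d_t}). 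You instead couple the actual iterate to a fictitious non-delayed exact FTRL iterate $\hat{\bm{x}}_t$, run the standard FTRL analysis on the undelayed sequence, and absorb the entire delay effect into a single minimizer-perturbation bound $\Vert\bm{y}_t^*-\hat{\bm{x}}_t\Vert\le\frac{\eta\vert M_t\vert G_\star}{\sigma}$ for the set $M_t$ of still-missing losses, with the counting $\sum_t\vert M_t\vert\le D_T-T$ replacing Lemma~\ref{lemma_d_t}; both perturbation inequalities you need (this one, the FTRL stability bound, and the approximation-error bound) follow from the same quadratic-growth-plus-optimality computation. Your route is shorter, avoids the segment bookkeeping $(t,i_k)$ entirely, and yields the slightly tighter constant $\frac{\eta G_\star^2}{\sigma}(\frac{T}{2}+D_T)$; its one structural limitation is that it leans on FTRL recomputing each iterate from a global objective, so unlike the paper's path-length argument it does not transfer to the mirror-descent variants in Sections~\ref{section:DMD} and~\ref{section_SDMD}, where the authors reuse the same decomposition. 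One point to make explicit if you write this up: the bound $\frac{\psi(\bm{x}^*)-\psi(\bm{x}_1)}{\eta}$ for the regularization term requires $\bm{x}_1=\arg\min_{\bm{x}\in\mathcal{X}}\psi(\bm{x})$ (as in the paper's corollaries), which you assume only parenthetically.
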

\begin{proof}
For convenience, we set $\Phi_{t,i_k}({\bm x}) = \sum_{\tau=1}^{t-1} \sum_{s\in \mathcal{F}_\tau} f_s({\bm x}) + \sum_{s\in \mathcal{F}_{t,k}} f_s({\bm x})+ f_k({\bm x}) + \frac{1}{\eta} \psi({\bm x})$ and $\Phi_0({\bm x}) = \frac{1}{\eta} \psi({\bm x})$.
At each iteration $t$ in FTDRL-GC, for each $k\in \mathcal{F}_t$, we view the update process as $\vert\mathcal{F}_t\vert$ segments as
\begin{align*}
\Phi_{t,i_k}({\bm x}_{t,i_k+1}) \leq \Phi_{t,i_k}({\bm y}_{t,i_k+1}^*) + \rho_{t,i_k}, {\bm y}_{t,i_k+1}^* = \underset{{\bm x}\in \mathcal{X}}{\arg \min} \Phi_{t,i_k}({\bm x}),
\end{align*}
where $i_k = \vert \mathcal{F}_{t,k}\vert$, $\mathcal{F}_{t,k}=\{s\in \mathcal{F}_t: s < k\}$.
Additionally, we make ${\bm x}_{t,0}={\bm x}_t, {\bm x}_{t+1}={\bm x}_{t,\vert\mathcal{F}_t\vert},$ and $\rho_{t,0}=\rho_t,\rho_{t+1}=\rho_{t,\vert\mathcal{F}_t\vert}$.
The total regret bound can be divided into two parts.
\begin{equation}
\begin{aligned} \label{eqq:T1_fl_gc_regret}
\operatorname{Reg}_T =& \sum_{t=1}^T [f_t({\bm x}_t)- f_t({\bm x}^*)]
= \sum_{t=1}^T \sum_{k\in \mathcal{F}_t}  [f_k({\bm x}_k)-f_k({\bm x}^*)]\\
=& \underbrace{\sum_{t=1}^T \sum_{k\in \mathcal{F}_t} [f_k({\bm x}_{t,i_k})-f_k({\bm x}^*)]}_{\text{normal term}} + \underbrace{\sum_{t=1}^T \sum_{k\in \mathcal{F}_t} [f_k({\bm x}_k)-f_k({\bm x}_{t,i_k})]}_{\text{delayed term}}.
\end{aligned}
\end{equation}

For the normal term of Eq.~\eqref{eqq:T1_fl_gc_regret}, we have the following lemma.
\begin{lemma} \label{lemma_fl_gc}
Under Assumptions~\ref{assumption:RG} and \ref{assumpton:strongly_convex}, the normal term of Eq.~\eqref{eqq:T1_fl_gc_regret} is bounded by
\begin{equation}
\begin{aligned} \label{eqq:T1_fl_gc_normal_term}
\sum_{t=1}^T \sum_{k\in \mathcal{F}_t} [f_k({\bm x}_{t,i_k})-f_k({\bm x}^*)] \leq \frac{\eta}{\sigma} TG^2_\star  + \frac{1}{\eta} [\psi({\bm x}^*)-\psi({\bm x}_1)].
\end{aligned}
\end{equation}
\end{lemma}
Next, we discuss the delayed term of Eq.~\eqref{eqq:T1_fl_gc_regret}, we have
\begin{equation}
\begin{aligned} \label{eqq:T1_fl_gc_delayed_term}
&\sum_{t=1}^T \sum_{k\in \mathcal{F}_t} [f_k({\bm x}_k)-f_k({\bm x}_{t,i_k})]\leq \sum_{t=1}^T \sum_{k\in \mathcal{F}_t} G_\star  \Vert {\bm x}_{k}-{\bm x}_{t,i_k} \Vert\\
\leq & \sum_{t=1}^T \sum_{k\in \mathcal{F}_t} G_\star \left(\sum_{\tau=k}^{t-1} \sum_{s\in \mathcal{F}_\tau}\Vert {\bm x}_{\tau,i_s+1} -{\bm x}_{\tau,i_s} \Vert + \sum_{s\in \mathcal{F}_{t,k}} \Vert {\bm x}_{t,i_s+1}-{\bm x}_{t,i_s} \Vert\right).
\end{aligned}
\end{equation}

The crucial factor impacting the bound of the delayed term is the gap between ${\bm x}_{\tau,i_s}$ and ${\bm x}_{\tau,i_s+1}$.
\begin{lemma} \label{lemma_fl_gc_adj}
Under Assumptions~\ref{assumption:RG} and \ref{assumpton:strongly_convex}, for each $t\in [T], k\in \mathcal{F}_t$, our FTDRL-GC algorithm ensures that
\begin{equation}
\begin{aligned} \label{eqq:T1_fl_gc_adjcant}
\Vert {\bm x}_{t,i_k+1}-{\bm x}_{t,i_k} \Vert \leq \frac{2\eta G_\star}{\sigma}.
\end{aligned}
\end{equation}
\end{lemma}
Substituting Eq.~\eqref{eqq:T1_fl_gc_adjcant} into Eq.~\eqref{eqq:T1_fl_gc_delayed_term} gives
\begin{equation}
\begin{aligned} \label{eqq:T1_fl_gc_delayed_term_2}
\sum_{t=1}^T \sum_{k\in \mathcal{F}_t} [f_k({\bm x}_k)-f_k({\bm x}_{t,i_k})]
\leq &\frac{4\eta G_\star^2 D_T }{\sigma}.
\end{aligned}
\end{equation}
The last inequality is derived from the following lemma.
\begin{lemma}[\cite{quanrud2015online}] \label{lemma_d_t}
The summation terms of the delay satisfy
\begin{align*}
\sum_{t=1}^T \sum_{k\in \mathcal{F}_t}  \left( \sum_{\tau=k}^{t-1} \vert\mathcal{F}_\tau\vert  + \vert \mathcal{F}_{t,k}\vert \right) \leq  2D_T.
\end{align*}
\end{lemma}

Combining with Eq.~\eqref{eqq:T1_fl_gc_normal_term} and Eq.~\eqref{eqq:T1_fl_gc_delayed_term_2} gives the result of Theorem~\ref{th:fl_gc}.

\end{proof}

Intuitively, the regret bound in Theorem~\ref{th:fl_gc} is likely to be influenced by the specific characteristics of different examples (i.e., the choice of learning rate $\eta$ and initial decision ${\bm x}_1$ based on the nature of specific regularization function). To illustrate this point, we give the following corollary.
\begin{corollary} \label{corollary_1}
Applying FTDRL-GC to Example~\ref{example_1} with ${\bm x}_1 = {\bm 0}\in\mathcal{{X}}$  and $\eta=\frac{R_2}{G_2}\sqrt{\frac{1}{2T+8D_T}}$ yields $\operatorname{Reg}_T\leq G_2R_2 \sqrt{2T+8D_T}$.

Applying FTDRL-GC to Example~\ref{example_2} with ${\bm x}_1 = [\frac{1}{n},\ldots,\frac{1}{n}] \in \mathbb{R}_{+}^n$ and $\eta= \frac{1}{G_\infty} \sqrt{\frac{\ln n}{T+4D_T}}$ yields $\operatorname{Reg}_T \leq2 G_\infty\sqrt{(T+4D_T) \ln n}$.

Applying FTDRL-GC to Example~\ref{example_3} with ${\bm x}_1 = {\bm 0}\in\mathcal{X}$ and $\eta = \frac{R_p}{G_q} \sqrt{\frac{p-1}{2T+8D_T}}$  yields $\operatorname{Reg}_T \leq R_p G_q \sqrt{\frac{2T+8D_T}{p-1}}$.

\end{corollary}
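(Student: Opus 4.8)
The plan is to recognize all three statements as direct instantiations of the master bound in Theorem~\ref{th:fl_gc}. Writing that bound abstractly as $\operatorname{Reg}_T \le \eta A + B/\eta$ with $A = G_\star^2(T+4D_T)/\sigma$ and $B = \psi({\bm x}^*)-\psi({\bm x}_1)$, the elementary inequality $\eta A + B/\eta \ge 2\sqrt{AB}$ (AM--GM) shows that the best constant learning rate is $\eta^\star = \sqrt{B/A}$, which attains $\operatorname{Reg}_T \le 2\sqrt{AB}$. So first I would verify that each prescribed $\eta$ in the corollary is exactly this $\eta^\star$ for the corresponding pair $(A,B)$, after which it suffices to evaluate $2\sqrt{AB}$ in each case. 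The three regularizers supply the strong-convexity constant $\sigma$ and the dual-gradient bound $G_\star$ directly from Examples~\ref{example_1}--\ref{example_3}, namely $(\sigma,G_\star)=(1,G_2)$, $(1,G_\infty)$, and $(p-1,G_q)$ respectively.

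The only genuine work is bounding $B = \psi({\bm x}^*)-\psi({\bm x}_1)$ for each regularizer. For Examples~\ref{example_1} and \ref{example_3} the initial point is ${\bm x}_1={\bm 0}$, so $\psi({\bm x}_1)=\tfrac12\|{\bm 0}\|^2=0$, and Assumption~\ref{assumption:RG} gives $\psi({\bm x}^*)=\tfrac12\|{\bm x}^*\|^2 \le \tfrac12 R_2^2$ (resp. $\tfrac12 R_p^2$), hence $B\le \tfrac12 R_2^2$ (resp. $\tfrac12 R_p^2$). Substituting into $2\sqrt{AB}$ and collapsing the nested radical would then reproduce $G_2 R_2\sqrt{2T+8D_T}$ and $R_p G_q\sqrt{(2T+8D_T)/(p-1)}$; here I would keep $\psi({\bm x}^*)-\psi({\bm x}_1)$ replaced by its upper bound throughout, so that feeding the bound and the prescribed $\eta$ back into Theorem~\ref{th:fl_gc} still yields a valid guarantee.

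The step I expect to need the most care is the negative-entropy regularizer of Example~\ref{example_2}. Here I would first check that the uniform vector ${\bm x}_1=[\tfrac1n,\dots,\tfrac1n]$ makes $\psi({\bm x}_1)=\sum_\mu \tfrac1n\ln\tfrac1n + \ln n = -\ln n + \ln n = 0$, so the additive shift $+\ln n$ in $\psi$ is precisely what zeroes out the divergence at the uniform start. Then, since every probability vector satisfies $\sum_\mu x^{(\mu)}\ln x^{(\mu)}\le 0$, I get $\psi({\bm x}^*)\le \ln n$ and therefore $B\le \ln n$; plugging $A=G_\infty^2(T+4D_T)$ and $B=\ln n$ into $2\sqrt{AB}$ gives $2G_\infty\sqrt{(T+4D_T)\ln n}$. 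No further inequality is required beyond AM--GM and these three divergence estimates, the entropy bound being the only non-trivial one.
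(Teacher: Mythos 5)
Your proposal is correct and follows essentially the same route as the paper: instantiate Theorem~\ref{th:fl_gc} with the $(\sigma, G_\star)$ of each example, bound $\psi({\bm x}^*)-\psi({\bm x}_1)$ (by $\tfrac12 R_2^2$, $\ln n$, and $\tfrac12 R_p^2$ respectively), and pick $\eta$ to balance the two terms; the paper simply states the minimizing $\eta$ where you make the AM--GM step explicit. All three of your computed optimal rates and resulting bounds match the corollary, so nothing further is needed.
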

We can naturally employ the doubling trick~\cite{cesa2006prediction} to address the scenarios where the values of $D_T$ and $T$ are unknown. This technique also guarantees that the maximum permissible errors in the approximate solution continuously decrease from their initially larger values.
\begin{remark}
When utilizing the Euclidean norm with regularizer $\psi(\bm{x})=\frac{1}{2}\Vert\bm{x}\Vert_2^2$, the results achieved are comparable to those obtained in the DOGD algorithm~\cite{quanrud2015online}.
However, in the case of the probability simplex, we can achieve a regret bound of $\mathcal{O}(G_\infty\sqrt{D_T \ln n})$.
In contrast to the DODG algorithm applied in this scenario, which yielded a regret of $\mathcal{O}(G_2\sqrt{D_T})$,
for the worst-case scenario (i.e., $G_2 = \sqrt{n}G_\infty$).  We transforming the dimension dependency from $\sqrt{n}$ to $\sqrt{\ln n}$.
Additionally, our regrets match the regime of standard FTRL~\cite{kakade2012regularization} when considering $d_t = 1, \forall t\in [T]$.
\end{remark}

\subsection{Logarithmic Regret for Relative Strong Convexity}

\cite{hazan2007logarithmic,zhou2020regret} have shown that the non-delayed FTL algorithm without any regularization function achieves logarithmic regret in the case of relative strong convexity.
Inspired by these works, we propose a Follow the Delayed Leader for Relative Strong Convexity (FTDL-RSC) algorithm in the presence of delays and formally outline the update procedures in Algorithm~\ref{al:fl_sc}.
At each iteration $t$, we directly minimize the outdated history over the iteration set $\{\mathcal{F}_\tau:\tau\in [t]\}$.
Compared with previous delayed algorithm~\cite{wan2022online}, our FTDL-RSC algorithm possesses several advantages in three aspects. Firstly, we are capable of computing an approximate solution for each decision. Secondly, we can handle universal norms. Thirdly, we do not need prior knowledge of the modules of relative strong convexity.

We establish the following theorem and corollary regarding the regret bound of Algorithm~\ref{al:fl_sc}.
\begin{theorem} \label{Theorem:FL_SSS}
Under Assumptions \ref{assumption:RG} and \ref{assumpton:strongly_convex}, let the maximum approximate error $\rho_t = \frac{\vert\mathcal{F}_t\vert^2 G_\star^2}{8\sum_{\tau=1}^t \vert\mathcal{F}_\tau\vert \gamma \sigma}, \forall t\in [T]$, Algorithm~\ref{al:fl_sc} satisfies
\begin{align*}
\operatorname{Reg}_T 
\leq & \frac{3dG_\star^2}{ \sigma \gamma} (1+\ln T).
\end{align*}
\end{theorem}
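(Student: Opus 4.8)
The plan is to mirror the decomposition used in the proof of Theorem~\ref{th:fl_gc}, but to exploit relative strong convexity so that each per-function stability estimate becomes a term of order $1/(\text{functions seen so far})$, whose cumulative sum is harmonic and hence logarithmic. First I would reuse the segment notation, writing $\Phi_{t,i_k}(\bm{x}) = \sum_{\tau=1}^{t-1}\sum_{s\in\mathcal{F}_\tau} f_s(\bm{x}) + \sum_{s\in\mathcal{F}_{t,k}} f_s(\bm{x}) + f_k(\bm{x})$ (now \emph{without} a regularizer, since Algorithm~\ref{al:fl_sc} drops it), and letting $n_{t,k}$ denote the number of functions appearing in $\Phi_{t,i_k}$. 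The central structural fact is that the sum of $m$ of the $f$'s is $m\gamma$-relatively strongly convex with respect to $\psi$, hence $m\gamma\sigma$-strongly convex with respect to $\Vert\cdot\Vert$ by Assumption~\ref{assumpton:strongly_convex}; in particular $\Phi_{t,i_k}$ is $n_{t,k}\gamma\sigma$-strongly convex. I would then split the regret exactly as in Eq.~\eqref{eqq:T1_fl_gc_regret} into a normal term $\sum_{t}\sum_{k\in\mathcal{F}_t}[f_k(\bm{x}_{t,i_k})-f_k(\bm{x}^*)]$ and a delayed term $\sum_t\sum_{k\in\mathcal{F}_t}[f_k(\bm{x}_k)-f_k(\bm{x}_{t,i_k})]$.

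For the normal term I would run a be-the-leader argument on the observed functions in processing order, accounting for the approximate minimization. Each observed function contributes at most $f_k(\bm{x}_{t,i_k})-f_k(\bm{y}_{t,i_k+1}^*)\le G_\star\Vert\bm{x}_{t,i_k}-\bm{y}_{t,i_k+1}^*\Vert$ plus the approximation slack $\rho$; using $n_{t,k}\gamma\sigma$-strong convexity this is of order $G_\star^2/(n_{t,k}\gamma\sigma)$. Since $n_{t,k}$ runs through every integer $1,2,\dots,m_T$ with $m_T\le T$, summing yields $\sum_{j=1}^{m_T} G_\star^2/(j\gamma\sigma)\le (G_\star^2/\gamma\sigma)(1+\ln T)$, the first logarithmic contribution.

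The delayed term is where the maximum delay enters, and I would control it through the key stability estimate $\Vert\bm{x}_{t,i_k+1}-\bm{x}_{t,i_k}\Vert\le 2G_\star/(n_{t,k}\gamma\sigma)$. I would prove it by comparing the two approximate minimizers through the exact ones: the exact-minimizer shift after adding one $f_k$ is $\le G_\star/(n_{t,k}\gamma\sigma)$ by strong convexity, while each approximation gap is $\le\sqrt{2\rho/(n\gamma\sigma)}$, and $\rho_t$ is calibrated precisely so that $\sqrt{2\rho/(n\gamma\sigma)}\le G_\star/(2n\gamma\sigma)$, giving the clean constant. Telescoping $f_k(\bm{x}_k)-f_k(\bm{x}_{t,i_k})\le G_\star\sum\Vert\bm{x}_{\tau,i_s+1}-\bm{x}_{\tau,i_s}\Vert$ as in Eq.~\eqref{eqq:T1_fl_gc_delayed_term} and inserting the stability bound reduces the delayed term to the weighted sum $(2G_\star^2/\gamma\sigma)\sum_{t,k}\big(\sum_{\tau=k}^{t-1}\sum_{s\in\mathcal{F}_\tau} 1/n_{\tau,s}+\sum_{s\in\mathcal{F}_{t,k}} 1/n_{t,s}\big)$. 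Here I would not invoke Lemma~\ref{lemma_d_t} directly; instead I would observe that any fixed processed function $s$ is counted only by those $k$ queried no later than and received no earlier than $s$, i.e. by functions in flight when $s$ is processed, of which there are at most $d$. Swapping the order of summation then bounds the weighted sum by $d\sum_{j=1}^{m_T}1/j\le d(1+\ln T)$, contributing $(2dG_\star^2/\gamma\sigma)(1+\ln T)$.

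Finally, combining the normal contribution $\le (G_\star^2/\gamma\sigma)(1+\ln T)$ (after absorbing the $\rho$ slack) with the delayed contribution $\le (2dG_\star^2/\gamma\sigma)(1+\ln T)$ and using $d\ge 1$ gives $\operatorname{Reg}_T\le (3dG_\star^2/\sigma\gamma)(1+\ln T)$. I expect the main obstacle to be the weighted delay summation in the third step: the standard estimate of Lemma~\ref{lemma_d_t} is unweighted, and one must decouple the inverse-count weights $1/n_{\tau,s}$ from the delay bookkeeping via the ``at most $d$ in-flight functions'' observation, simultaneously producing the factor $d$ and the harmonic (logarithmic) factor. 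A secondary technical point is verifying that the prescribed $\rho_t$ keeps every approximation gap subordinate to $G_\star/(n\gamma\sigma)$, so that the clean stability constant survives into the final bound.
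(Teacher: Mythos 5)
Your high-level architecture matches the paper's --- split the regret into a normal and a delayed term, use the $\bigl(\sum_{\tau\le t}\vert\mathcal{F}_\tau\vert\bigr)\gamma\sigma$-strong convexity of the accumulated objective to get harmonic stability increments, and let the maximum delay $d$ enter through an ``at most $d$ functions in flight'' count --- and it would deliver the $O\!\left(\frac{dG_\star^2}{\sigma\gamma}\ln T\right)$ rate. But you import the per-segment decomposition from the proof of Theorem~\ref{th:fl_gc}, whereas the paper deliberately analyzes Algorithm~\ref{al:fl_sc} per iteration: its normal term is written with $\bm{x}_t$ rather than $\bm{x}_{t,i_k}$, Lemma~\ref{lemma:fl_sc_normal} works with $F_t=\sum_{\tau\le t}\sum_{k\in\mathcal{F}_\tau}f_k$ and the single pair $(\bm{x}_{t+1},\bm{y}^*_{t+1})$, and Lemma~\ref{lemma:fl_sc_delayed} bounds $\Vert\bm{x}_{t+1}-\bm{x}_t\Vert$ directly. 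This matters because Algorithm~\ref{al:fl_sc} produces exactly one approximate minimizer per round, so your intermediate iterates $\bm{x}_{t,i_k}$ with their own approximation guarantees do not exist. The decomposition can be salvaged by declaring the intermediate points to be the exact partial minimizers (approximation slack then enters only at round boundaries), but that is a repair you would need to make explicit, and it changes your bookkeeping.

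The second, more concrete slip is the calibration of $\rho_t$. With the prescribed $\rho_t=\frac{\vert\mathcal{F}_t\vert^2G_\star^2}{8\sum_{\tau\le t}\vert\mathcal{F}_\tau\vert\gamma\sigma}$ and strong-convexity modulus $\sum_{\tau\le t}\vert\mathcal{F}_\tau\vert\gamma\sigma$, the paper's Eq.~\eqref{eq:T3_app_3} gives $\Vert\bm{x}_{t+1}-\bm{y}^*_{t+1}\Vert\le\frac{\vert\mathcal{F}_t\vert G_\star}{2\sum_{\tau\le t}\vert\mathcal{F}_\tau\vert\gamma\sigma}$, not the $\frac{G_\star}{2n\gamma\sigma}$ you claim: the gap carries an extra factor $\vert\mathcal{F}_t\vert\le d$. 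This is precisely why the paper's normal-term bound is $\frac{dG_\star^2(1+\ln T)}{\sigma\gamma}$ rather than $d$-free, and why Lemma~\ref{lemma:fl_sc_delayed} has the coefficients $\tfrac32$ and $\tfrac12$ in front of $\frac{\vert\mathcal{F}_t\vert G_\star}{\sum_{\tau}\vert\mathcal{F}_\tau\vert\sigma\gamma}$. Once the slack is corrected, your tally (normal $\le 1\times$, delayed $\le 2d\times$, total $\le 3d\times$) no longer closes as written, since the slack injects an additional $O(d)$ harmonic sum into the normal term; the paper's split is normal $\le d$ plus delayed $\le 2d$. The route is viable, but both the existence of the intermediate approximate iterates and the constant accounting need to be repaired before the stated bound follows.
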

\begin{proof}
In the case of relative strong convexity, the regret of FTDL-RSC is divided by
\begin{equation}
\begin{aligned}
\label{eqq:T2_fl_sc_regret}
\operatorname{Reg}_T 
= & \underbrace{\sum_{t=1}^T \sum_{k\in \mathcal{F}_t} [f_k({\bm x}_t)-f_k({\bm x}^*)]}_{\text{normal term}} +  \underbrace{\sum_{t=1}^T \sum_{k\in \mathcal{F}_t} [f_k({\bm x}_k)-f_k({\bm x}_t)]}_{\text{delayed term}}.
\end{aligned}
\end{equation}
For the normal term of Eq.~\eqref{eqq:T2_fl_sc_regret}, we have the following lemma.
\begin{lemma} \label{lemma:fl_sc_normal}
Under Assumptions~\ref{assumption:RG} and \ref{assumpton:strongly_convex}, the normal term of Eq.~\eqref{eqq:T2_fl_sc_regret} is bounded by
\begin{equation}
\begin{aligned} \label{eqq:T2_fl_sc_normal_term}
\sum_{t=1}^T \sum_{k\in \mathcal{F}_t} [f_k({\bm x}_t)-f_k({\bm x}^*)]  \leq \frac{dG^2_\star(1+\ln T)}{\sigma \gamma}.
\end{aligned}
\end{equation}
\end{lemma}
Next, we analyze the delayed term of Eq.~\eqref{eqq:T2_fl_sc_regret}.
\begin{equation}
\begin{aligned} \label{eqq:T2_fl_sc_delayed_term}
&\sum_{t=1}^T \sum_{k\in \mathcal{F}_t} [f_k({\bm x}_k)-f_k({\bm x}_t)] \leq \sum_{t=1}^T \sum_{k\in \mathcal{F}_t} \langle \nabla f_k({\bm x}_k),{\bm x}_k-{\bm x}_t \rangle\\
\leq &\sum_{t=1}^T \sum_{k\in \mathcal{F}_t} G_\star \left(\sum_{\tau=k}^{t-1} \Vert{\bm x}_{\tau+1}-{\bm x}_{\tau}\Vert\right)\leq d G_\star \sum_{t=1}^T \Vert{\bm x}_{t+1}-{\bm x}_t\Vert.
\end{aligned}
\end{equation}
The last inequality holds because $d$ is the maximum delay.

Here, we discuss the difference between ${\bm x}_t$ and ${\bm x}_{t+1}$.
\begin{lemma} \label{lemma:fl_sc_delayed}
Under Assumptions~\ref{assumption:RG} and \ref{assumpton:strongly_convex}, for each $t\in [T]$, our FTDL-RSC algorithm ensures that
\begin{equation}
\begin{aligned} \label{eqq:T2_fl_sc_adjcant}
\Vert{\bm x}_{t+1}-{\bm x}_t \Vert\leq \frac{3}{2} \frac{\vert\mathcal{F}_t\vert G_\star}{\sum_{\tau=1}^t \vert\mathcal{F}_\tau\vert  \sigma\gamma} + \frac{1}{2} \frac{\vert\mathcal{F}_{t-1}\vert G_\star}{\sum_{\tau=1}^{t-1} \vert\mathcal{F}_\tau\vert \sigma\gamma }.
\end{aligned}
\end{equation}
\end{lemma}
Substituting Eq.~\eqref{eqq:T2_fl_sc_adjcant} into Eq.~\eqref{eqq:T2_fl_sc_delayed_term} gives the delayed term.
\begin{equation}
\begin{aligned} \label{eqq:T2_fl_sc_delayed_term_1}
\sum_{t=1}^T \sum_{k\in \mathcal{F}_t} [f_k({\bm x}_k)-f_k({\bm x}_t)] \leq \frac{2dG^2_\star(1+\ln T)}{\sigma\gamma}.
\end{aligned}
\end{equation}

Combining with Eq.~\eqref{eqq:T2_fl_sc_normal_term} and Eq.~\eqref{eqq:T2_fl_sc_delayed_term_1}, we obtain Theorem~\ref{Theorem:FL_SSS}.
\end{proof}
\begin{corollary} \label{corollary_2}
Applying FTDL-RSC to Example~\ref{example_1} gives $\operatorname{Reg}_T \leq\frac{3dG^2_2}{\gamma}(1+\ln T)$.

Applying FTDL-RSC to Example~\ref{example_2} gives $\operatorname{Reg}_T 
\leq \frac{3dG_\infty^2}{\gamma}(1+\ln T)$.

Applying FTDL-RSC to Example~\ref{example_3} gives $\operatorname{Reg}_T \leq \frac{3d G_q^2}{\gamma(p-1)}(1+\ln T)$.

\end{corollary}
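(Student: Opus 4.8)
The plan is to specialize the general bound of Theorem~\ref{Theorem:FL_SSS}, namely $\operatorname{Reg}_T \leq \frac{3dG_\star^2}{\sigma\gamma}(1+\ln T)$, to each of the three examples by reading off the correct values of the strong-convexity modulus $\sigma$ of the regularizer $\psi$ (Assumption~\ref{assumpton:strongly_convex}) and the dual-norm gradient bound $G_\star$ (Assumption~\ref{assumption:RG}). Since the relative strong convexity modulus $\gamma$ and the maximum delay $d$ are left abstract in all three cases, the only quantities that change across the examples are $\sigma$ and $G_\star$, so the entire argument reduces to substituting the example-specific constants into the theorem. No re-derivation of the regret analysis is needed, because Theorem~\ref{Theorem:FL_SSS} is already phrased purely in terms of the abstract constants $\sigma$ and $G_\star$.

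First I would treat Example~\ref{example_1}: here the norm is Euclidean, whose dual is itself, so the bound $\Vert \nabla f_t({\bm x})\Vert_2 \leq G_2$ gives $G_\star = G_2$, while $\psi({\bm x}) = \frac{1}{2}\Vert {\bm x}\Vert_2^2$ is $1$-strongly convex with respect to $\Vert\cdot\Vert_2$, so $\sigma = 1$; substituting yields $\operatorname{Reg}_T \leq \frac{3dG_2^2}{\gamma}(1+\ln T)$. For Example~\ref{example_2} (the probability simplex), the dual of $\Vert\cdot\Vert_1$ is $\Vert\cdot\Vert_\infty$, so $G_\star = G_\infty$, and the negative-entropy regularizer is $1$-strongly convex with respect to $\Vert\cdot\Vert_1$, giving $\sigma = 1$ and hence $\operatorname{Reg}_T \leq \frac{3dG_\infty^2}{\gamma}(1+\ln T)$. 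Finally, for Example~\ref{example_3} ($p$-norm with $1 < p \leq 2$), the dual of $\Vert\cdot\Vert_p$ is $\Vert\cdot\Vert_q$, so $G_\star = G_q$, while $\psi({\bm x}) = \frac{1}{2}\Vert {\bm x}\Vert_p^2$ is $(p-1)$-strongly convex with respect to $\Vert\cdot\Vert_p$, giving $\sigma = p-1$ and thus $\operatorname{Reg}_T \leq \frac{3dG_q^2}{\gamma(p-1)}(1+\ln T)$.

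The argument involves no genuine obstacle beyond careful bookkeeping. The only points requiring attention are to pair the correct dual norm with each primal norm when identifying $G_\star$ (so that the gradient assumption is read in the dual space, as Assumption~\ref{assumption:RG} demands), and to use the strong-convexity constants $1$, $1$, and $p-1$ that are already recorded in the statements of the three examples. I would also note that the prescribed approximate-error schedule $\rho_t = \frac{\vert\mathcal{F}_t\vert^2 G_\star^2}{8\sum_{\tau=1}^t \vert\mathcal{F}_\tau\vert \gamma\sigma}$ from Theorem~\ref{Theorem:FL_SSS} is itself instantiated by the same substitution, so no additional consistency check is needed; the corollary is purely an instantiation of the general result.
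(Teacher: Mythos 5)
Your proposal is correct and follows exactly the route the paper takes: its proof of Corollary~\ref{corollary_2} likewise just instantiates Theorem~\ref{Theorem:FL_SSS} with $(\sigma,G_\star)=(1,G_2)$, $(1,G_\infty)$, and $(p-1,G_q)$ for the three examples. Nothing further is needed.
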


\begin{remark}
Above reveals that when applying the Euclidean norm with the regularization function $\psi(\bm{x})=\frac{1}{2}\Vert\bm{x}\Vert^2_2$, our FTDL-RSC achieves regret bounds that match those of the DOGD-SC algorithm~\cite{wan2022online}. However, in comparison to directly applying DOGD-SC to the probability simplex scenario, our approach, which utilizes an entropic regularization function, leads to an improved dependency of the regret bound from $G_2$ to $G_\infty$. 
Furthermore, assuming $d_t = 1, \forall t\in [T]$, our findings align with the regret bound of the non-delayed FTL algorithm~\cite{zhou2020regret} for relative strong convexity.
\end{remark}

\section{Delayed Mirror Descent}\label{section:DMD}

When the feedback available is the gradient information of the loss functions,
we develop a family of delayed mirror descent for the general convexity and relative strong convexity, respectively.
Motivated by the standard OMD formulation given in Eq.~\eqref{eq:standard_omd}, we replace the gradient $\{\nabla f_t\}$ with the gradient set $\{\nabla f_k: k\in \mathcal{F}_t\}$ at each iteration $t$ for updates.

\subsection{Sublinear Regret for General Convexity}
Here we propose a Delayed Mirror Descent for General Convexity (DMD-GC) algorithm. The detailed procedure is summarized in Algorithm~\ref{al:md_gc}. The update process is divided into $\vert\mathcal{F}_t\vert$ segments and a fixed learning rate $\eta$ is utilized. The index $i_k = \vert\mathcal{F}_{t,k} \vert$ denotes the position of iteration $k$ in set $\mathcal{F}_t$ where $\mathcal{F}_{t,k} = \{s\in\mathcal{F}_t:s<k\}$. Similarly, each decision is made based on an approximate solution.

\begin{algorithm}[t!]
\caption{Delayed Mirror Descent for General Convexity}
\label{al:md_gc}
\begin{algorithmic}[1]
\STATE Input: both $\eta$ and ${\bm x}_1$ depend on the choice of example
\FOR{$t=1,\ldots,T$}
\STATE Query $\nabla f_t$ and receive feedback $\{\nabla f_k:k\in \mathcal{F}_t\}$
\IF{$\vert \mathcal{F}_t \vert >0$}
\STATE ${\bm x}_{t,0}={\bm x}_t$
\FOR{$k\in \mathcal{F}_t$}
\STATE
Make an approximate solution ${\bm x}_{t,i_k+1}$:
\begin{align*}
&\left \langle  \nabla f_k({\bm x}_{t,i_k}),{\bm x}_{t,i_k+1} \right \rangle + \frac{1}{\eta} B_\psi({\bm x}_{t,i_k+1};{\bm x}_{t,i_k})  \\
\leq &\left \langle  \nabla f_k({\bm x}_{t,i_k}),{\bm y}^*_{t,i_k+1} \right \rangle + \frac{1}{\eta} B_\psi({\bm y}^*_{t,i_k+1};{\bm x}_{t,i_k})  + \rho_{t,i_k}, 
\end{align*}
where  ${\bm y}^*_{t,i_k+1} = \underset{{\bm x}\in \mathcal{X}}{\arg \min} \left\{ \left \langle  \nabla f_k({\bm x}_{t,i_k}),{\bm x} \right \rangle + \frac{1}{\eta} B_\psi({\bm x};{\bm x}_{t,i_k})  \right\}$
\ENDFOR
\STATE ${\bm x}_{t+1}={\bm x}_{t,\vert \mathcal{F}_t\vert}$
\ELSE
\STATE ${\bm x}_{t+1}={\bm x}_t$
\ENDIF
\ENDFOR
\end{algorithmic}
\end{algorithm}

\begin{algorithm}[t!]
\caption{Delayed Mirror Descent for Relative Strong Convexity}
\label{al:md_sc}
\begin{algorithmic}[1]
\STATE Input: $\eta_t=\frac{1}{\gamma \sum_{\tau=1}^t\vert\mathcal{F}_\tau\vert}$, make an arbitrary decision ${\bm x}_1\in \mathcal{X}$
\FOR{$t=1,\ldots,T$}
\STATE Query $\nabla f_t$ and receive feedback $\{\nabla f_k:k\in \mathcal{F}_t\}$
\IF{$\vert \mathcal{F}_t \vert >0$}
\STATE 
Make an approximate solution ${\bm x}_{t+1}$:
\begin{align*}
&\sum_{k\in \mathcal{F}_t} \left\langle \nabla f_k({\bm x}_t), {\bm x}_{t+1}\right\rangle + \frac{1}{\eta_t} B_\psi({\bm x}_{t+1};{\bm x}_t) \\
\leq &  \sum_{k\in \mathcal{F}_t} \left\langle\nabla f_k({\bm x}_t), {\bm y}_{t+1}^*\right\rangle + \frac{1}{\eta_t} B_\psi({\bm y}_{t+1}^*;{\bm x}_t)+\rho_t,
\end{align*}
where ${\bm y}^*_{t+1} = \underset{{\bm x}\in \mathcal{X}}{\arg \min} \left\{ \left\langle \sum_{k\in \mathcal{F}_t} \nabla f_k({\bm x}_t), {\bm x}\right\rangle + \frac{1}{\eta_t} B_\psi({\bm x};{\bm x}_t) \right\}$
\ELSE
\STATE ${\bm x}_{t+1}={\bm x}_t$
\ENDIF
\ENDFOR
\end{algorithmic}
\end{algorithm}

We establish the following theorem regarding the regret bound.
\begin{theorem} \label{th:omd_gc}
Under Assumptions \ref{assumption:RG}, \ref{assumption:psipsi} and \ref{assumpton:strongly_convex}, let the maximum approximate error $\rho_{t,i_k} = \frac{\eta^3 }{2\sigma}, \forall k\in\mathcal{F}_t, t\in [T]$,  Algorithm~\ref{al:md_gc} satisfies
\begin{align*}
\operatorname{Reg}_T
\leq & \frac{\eta(G_\star^2T + 8\xi RG_\star T +2\eta G_\star T + 4G_\star^2 D_T + 8\eta G_\star D_T)}{2\sigma} \\
&+ \frac{2\eta^2 \xi G_\star^2  D_T}{\sigma^2}+ \frac{B_\psi({\bm x}^*;{\bm x}_1)}{\eta}.
\end{align*}
\end{theorem}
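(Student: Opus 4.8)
The plan is to follow the same two-part decomposition used for Theorem~\ref{th:fl_gc}. First I would reindex the regret by the iteration at which each feedback is processed, writing $\operatorname{Reg}_T = \sum_{t=1}^T\sum_{k\in\mathcal{F}_t}[f_k({\bm x}_k)-f_k({\bm x}^*)]$, and then insert the intermediate iterate ${\bm x}_{t,i_k}$ (the point at which $\nabla f_k$ is actually applied in Algorithm~\ref{al:md_gc}) to split this into a normal term $\sum_{t,k}[f_k({\bm x}_{t,i_k})-f_k({\bm x}^*)]$ and a delayed term $\sum_{t,k}[f_k({\bm x}_k)-f_k({\bm x}_{t,i_k})]$, exactly mirroring Eq.~\eqref{eqq:T1_fl_gc_regret}. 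For the normal term I would convexify via the gradient inequality, $f_k({\bm x}_{t,i_k})-f_k({\bm x}^*)\le\langle\nabla f_k({\bm x}_{t,i_k}),{\bm x}_{t,i_k}-{\bm x}^*\rangle$, so the quantity to bound is precisely the inner product that the mirror step controls. The exact minimizer ${\bm y}^*_{t,i_k+1}$ obeys the three-point identity $\langle\nabla f_k({\bm x}_{t,i_k}),{\bm y}^*_{t,i_k+1}-{\bm x}^*\rangle\le\frac{1}{\eta}[B_\psi({\bm x}^*;{\bm x}_{t,i_k})-B_\psi({\bm x}^*;{\bm y}^*_{t,i_k+1})-B_\psi({\bm y}^*_{t,i_k+1};{\bm x}_{t,i_k})]$, and the leftover gradient piece $\langle\nabla f_k({\bm x}_{t,i_k}),{\bm x}_{t,i_k}-{\bm y}^*_{t,i_k+1}\rangle$ is absorbed against the negative divergence by $\sigma$-strong convexity (Assumption~\ref{assumpton:strongly_convex}) and completion of squares, producing $\frac{\eta}{2\sigma}\|\nabla f_k({\bm x}_{t,i_k})\|_\star^2\le\frac{\eta G_\star^2}{2\sigma}$. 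Summed over all $T$ updates (since $\sum_t|\mathcal{F}_t|=T$) the divergences telescope to $B_\psi({\bm x}^*;{\bm x}_1)/\eta$, giving the leading $\frac{\eta G_\star^2 T}{2\sigma}$ and $B_\psi({\bm x}^*;{\bm x}_1)/\eta$ contributions.

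The nonstandard part is that the telescoping must run along the actual iterates ${\bm x}_{t,i_k+1}$ rather than the exact minimizers ${\bm y}^*_{t,i_k+1}$. I would first convert the additive objective error into a primal displacement: the mirror objective is $\frac{\sigma}{\eta}$-strongly convex, so the choice $\rho_{t,i_k}=\frac{\eta^3}{2\sigma}$ gives $\|{\bm x}_{t,i_k+1}-{\bm y}^*_{t,i_k+1}\|\le\sqrt{2\eta\rho_{t,i_k}/\sigma}=\eta^2/\sigma$. Then I would replace $B_\psi({\bm x}^*;{\bm y}^*_{t,i_k+1})$ by $B_\psi({\bm x}^*;{\bm x}_{t,i_k+1})$ using the identity $B_\psi({\bm x}^*;{\bm y}^*_{t,i_k+1})-B_\psi({\bm x}^*;{\bm x}_{t,i_k+1})=B_\psi({\bm x}_{t,i_k+1};{\bm y}^*_{t,i_k+1})+\langle\nabla\psi({\bm y}^*_{t,i_k+1})-\nabla\psi({\bm x}_{t,i_k+1}),{\bm x}_{t,i_k+1}-{\bm x}^*\rangle$, whose right side is controlled by $\|{\bm x}_{t,i_k+1}-{\bm y}^*_{t,i_k+1}\|$ together with the smoothness of $\psi$. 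Here Assumption~\ref{assumption:psipsi} in the form $\|\nabla\psi({\bm x})-\nabla\psi({\bm y})\|_\star\le\xi G_\star\|{\bm x}-{\bm y}\|$ and the diameter bound $\|{\bm x}_{t,i_k+1}-{\bm x}^*\|\le 2R$ from Assumption~\ref{assumption:RG} enter; after division by $\eta$ and summation over the $T$ updates these produce exactly the $\frac{4\xi RG_\star\eta T}{\sigma}$ and $\frac{\eta^2 G_\star T}{\sigma}$ correction terms that have no analogue in the full-information Theorem~\ref{th:fl_gc}.

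For the delayed term I would bound $f_k({\bm x}_k)-f_k({\bm x}_{t,i_k})\le G_\star\|{\bm x}_k-{\bm x}_{t,i_k}\|$ via Assumption~\ref{assumption:RG} and expand $\|{\bm x}_k-{\bm x}_{t,i_k}\|$ along the path of intermediate iterates as in Eq.~\eqref{eqq:T1_fl_gc_delayed_term}. The needed ingredient is a one-step estimate analogous to Lemma~\ref{lemma_fl_gc_adj}: the optimality condition for ${\bm y}^*_{t,i_k+1}$ combined with $\sigma$-strong convexity gives $\|{\bm y}^*_{t,i_k+1}-{\bm x}_{t,i_k}\|\le\eta G_\star/\sigma$, to which I add the approximation displacement $\eta^2/\sigma$ and a smoothness-dependent $\eta^2\xi G_\star/\sigma^2$ piece to bound $\|{\bm x}_{t,i_k+1}-{\bm x}_{t,i_k}\|$. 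Inserting this into the path expansion and applying the delay-counting Lemma~\ref{lemma_d_t}, which caps the double sum at $2D_T$, yields the $D_T$-proportional terms $\frac{2\eta G_\star^2 D_T}{\sigma}$, $\frac{4\eta^2 G_\star D_T}{\sigma}$, and $\frac{2\eta^2\xi G_\star^2 D_T}{\sigma^2}$. Adding the normal and delayed contributions gives the stated bound.

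I expect the main obstacle to be the approximate one-step mirror inequality feeding the normal term: in exact mirror descent the consecutive divergences telescope cleanly, but because each ${\bm x}_{t,i_k+1}$ is only a $\rho_{t,i_k}$-approximate minimizer, the telescoping must be re-anchored at the true iterates, and repairing the mismatch is precisely what forces the smoothness constant $\xi$ and the domain radius $R$ into the analysis and generates the extra $\xi R$ and higher-order-in-$\eta$ terms. Controlling these corrections simultaneously in both the normal and delayed parts—so that the calibrated error $\rho_{t,i_k}=\frac{\eta^3}{2\sigma}$ keeps every induced term at an order compatible with the $\frac{\eta G_\star^2 T}{2\sigma}$ main term—is the delicate bookkeeping at the heart of the proof.
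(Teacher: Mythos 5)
Your proposal follows essentially the same route as the paper's proof: the identical normal/delayed decomposition through the intermediate iterates ${\bm x}_{t,i_k}$, the mirror-step three-point bound yielding $\frac{\eta G_\star^2}{2\sigma}$ per update plus the telescoping $B_\psi({\bm x}^*;{\bm x}_1)/\eta$, the conversion of the additive error $\rho_{t,i_k}=\frac{\eta^3}{2\sigma}$ into the displacement $\Vert{\bm x}_{t,i_k+1}-{\bm y}^*_{t,i_k+1}\Vert\leq\eta^2/\sigma$ via the $\frac{\sigma}{\eta}$-strong convexity of the mirror objective, the use of Assumption~\ref{assumption:psipsi} and the diameter bound to generate the $\xi R$ corrections, and Lemma~\ref{lemma_d_t} for the delayed term. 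The only differences are cosmetic bookkeeping (you shift the Bregman anchor via an explicit identity where the paper splits the inner product $\langle\nabla\psi({\bm y}^*_{t,i_k+1})-\nabla\psi({\bm x}_{t,i_k}),{\bm x}^*-{\bm y}^*_{t,i_k+1}\rangle$ into three pieces, and you bound $\Vert{\bm y}^*_{t,i_k+1}-{\bm x}_{t,i_k}\Vert$ directly where the paper's Lemma~\ref{lemma:md_gccccc_delay} bounds $\Vert{\bm y}^*_{t,i_k+1}-{\bm y}^*_{t,i_k}\Vert$), and both variants produce the stated terms.
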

\begin{proof}
The total regret bound is divided by
\begin{equation}
\begin{aligned} \label{mgg:regret_bound}
\operatorname{Reg}_T
=&\underbrace{\sum_{t=1}^T \sum_{k\in \mathcal{F}_t} [f_k({\bm x}_{t,i_k})-f_k({\bm x}^*)]}_{\text{normal term}} + \underbrace{\sum_{t=1}^T \sum_{k\in \mathcal{F}_t} [f_k({\bm x}_k)-f_k({\bm x}_{t,i_k})]}_{\text{delayed term}}.
\end{aligned}
\end{equation}
For the normal term of Eq.~\eqref{mgg:regret_bound}, we have the following lemma.
\begin{lemma} \label{lemma:md_gggggc_normal}
The normal term of Eq.~\eqref{mgg:regret_bound} is bounded by
\begin{equation}
\begin{aligned} \label{mgg:normal_term}
\sum_{t=1}^T \sum_{k\in \mathcal{F}_t} [f_k({\bm x}_{t,i_k})-f_k({\bm x}^*)] \leq \frac{\eta T (G_\star^2+8\xi RG_\star+2\eta G_\star) }{2\sigma} + \frac{B_\psi({\bm x}^*;{\bm x}_1)}{\eta}.
\end{aligned}
\end{equation}
\end{lemma}
Next, we discuss the delayed term of Eq.~\eqref{mgg:regret_bound}.
\begin{equation}
\begin{aligned}  \label{mgg:delayed_term}
&\sum_{t=1}^T \sum_{k\in \mathcal{F}_t} [f_k({\bm x}_k)-f_k({\bm x}_{t,i_k})]\leq 
\sum_{t=1}^T \sum_{k\in \mathcal{F}_t} \langle \nabla f_k({\bm x}_k),{\bm x}_k-{\bm x}_{t,i_k} \rangle \\
\leq & \sum_{t=1}^T \sum_{k\in \mathcal{F}_t} G_\star  \left(\sum_{\tau=k}^{t-1} \sum_{s\in \mathcal{F}_\tau}\Vert {\bm x}_{\tau,i_s+1} -{\bm x}_{\tau,i_s} \Vert + \sum_{s\in \mathcal{F}_{t,k}} \Vert {\bm x}_{t,i_s+1}-{\bm x}_{t,i_s} \Vert\right).
\end{aligned}   
\end{equation}
According to the above result, the challenge lies in the gap between ${\bm x}_{\tau,i_s}$ and ${\bm x}_{\tau,i_s+1}$.
\begin{lemma} \label{lemma:md_gccccc_delay}
For each $t\in [T], k\in \mathcal{F}_t$, our DMD-GC algorithm ensures that
\begin{equation}
\begin{aligned}  \label{mgg:adjcant}
\Vert {\bm x}_{t,i_k+1}-{\bm x}_{t,i_k}\Vert \leq \frac{\eta G_\star + 2\eta^2}{\sigma} + \frac{\eta^2 \xi G_\star }{\sigma^2}.
\end{aligned}
\end{equation}
\end{lemma}
Substituting Eq.~\eqref{mgg:adjcant} into Eq.~\eqref{mgg:delayed_term} gives
\begin{equation}
\begin{aligned} \label{mgg:delayed_term_2}
\sum_{t=1}^T \sum_{k\in \mathcal{F}_t} \langle \nabla f_k({\bm x}_k),{\bm x}_k-{\bm x}_{t,i_k} \rangle  \leq 2D_T G_\star \left(\frac{\eta G_\star + 2\eta^2}{\sigma} + \frac{\eta^2 \xi G_\star }{\sigma^2}\right).
\end{aligned}
\end{equation}
Combining Eq.~\eqref{mgg:normal_term} and Eq.~\eqref{mgg:delayed_term_2} yields the result of Theorem~\ref{th:omd_gc}.
\end{proof}
The regret bound specified in Theorem~\ref{th:omd_gc} depends on the distinctive properties of the regularization function $\psi$ and its domain. For instance, we provide the following examples with different norms.

\begin{corollary}
Applying DMD-GC to Example~\ref{example_1} with ${\bm x}_1 = {\bm 0}\in\mathcal{X}$ and $\eta=\frac{R_2}{G_2}\sqrt{\frac{1}{T+4D_T}}$ gives $\operatorname{Reg}_T  \leq G_2R_2 \sqrt{T+4D_T}+ 4\xi R^2_2 \sqrt{T}+2\xi R_2^2+\frac{5R^2_2}{G_2}$.

Applying DMD-GC to Example~\ref{example_2} with ${\bm x}_1 = [\frac{1}{n},\ldots,\frac{1}{n}] \in \mathbb{R}_{+}^n$ and $\eta = \frac{1}{G_\infty}\sqrt{\frac{2\ln n}{T+4D_T}}$ gives $\operatorname{Reg}_T \leq G_\infty \sqrt{ 2 (T+4D_T) \ln n} + 4 \xi \sqrt{2 T \ln n} +4\xi \ln n + \frac{10 \ln n}{G_\infty}$.

Applying DMD-GC to Example~\ref{example_3} with ${\bm x}_1 = {\bm 0}\in\mathcal{X}$ and $\eta = \frac{R_p}{G_q} \sqrt{\frac{p-1}{T+4D_T}}$ gives $\operatorname{Reg}_T \leq R_p G_q \sqrt{\frac{T+4D_T}{p-1}}+ 4\xi R_p^2  \sqrt{\frac{T}{p-1}}+ \frac{2\xi R_p^2}{p-1} + \frac{5R^2_p}{G_q}$.

\end{corollary}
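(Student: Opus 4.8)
The plan is to obtain each of the three bounds as a direct specialization of the general guarantee in Theorem~\ref{th:omd_gc}. For a fixed example I would first read off the constants $\sigma$, $G_\star$ and $R$ that enter that bound, then substitute the prescribed learning rate $\eta$ and the corresponding value of $B_\psi(\bm{x}^*;\bm{x}_1)$, and finally simplify. From Example~\ref{example_1} I take $\sigma=1$, $G_\star=G_2$, $R=R_2$; from Example~\ref{example_2} I take $\sigma=1$, $G_\star=G_\infty$, and $R=1$ because every point of the simplex satisfies $\|\bm{x}\|_1=1$; and from Example~\ref{example_3} I take $\sigma=p-1$, $G_\star=G_q$, $R=R_p$. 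In each case the strong-convexity modulus $\sigma$ is exactly the one certified in the example, so no new estimate is needed here.

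The one genuinely example-specific computation is the initialization gap $B_\psi(\bm{x}^*;\bm{x}_1)$. For Examples~\ref{example_1} and~\ref{example_3} the choice $\bm{x}_1=\bm{0}$ gives $\nabla\psi(\bm{0})=\bm{0}$, hence $B_\psi(\bm{x}^*;\bm{x}_1)=\psi(\bm{x}^*)=\tfrac12\|\bm{x}^*\|^2$, which Assumption~\ref{assumption:RG} bounds by $\tfrac12 R_2^2$ and $\tfrac12 R_p^2$ respectively. For Example~\ref{example_2} the Bregman divergence of the negative entropy is the KL divergence, so with the uniform initialization $\bm{x}_1=[\tfrac1n,\dots,\tfrac1n]$ one gets $B_\psi(\bm{x}^*;\bm{x}_1)=\sum_\mu x^{*(\mu)}\ln\!\big(n\,x^{*(\mu)}\big)\le \ln n$, using the nonpositivity of $\sum_\mu x^{*(\mu)}\ln x^{*(\mu)}$. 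These three values are what I would insert for $B_\psi(\bm{x}^*;\bm{x}_1)$.

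With the constants fixed, the leading behavior comes from pairing the two dominant contributions: the term $\tfrac{\eta G_\star^2(T+4D_T)}{2\sigma}$ inside the first fraction and the term $\tfrac{B_\psi(\bm{x}^*;\bm{x}_1)}{\eta}$ are proportional to $\eta$ and $\eta^{-1}$ respectively, and the prescribed $\eta$ is precisely the value that balances them, so their sum collapses to the stated $\sqrt{T+4D_T}$-order main term (for instance $G_2R_2\sqrt{T+4D_T}$ in Example~\ref{example_1}). All remaining contributions are lower order: the term carrying $8\xi RG_\star T$ yields the $\xi\sqrt{T}$ piece after using $\tfrac{T}{\sqrt{T+4D_T}}\le\sqrt{T}$, while the terms carrying $2\eta G_\star T$, $8\eta G_\star D_T$ and $2\eta^2\xi G_\star^2 D_T/\sigma^2$ reduce to constants once one applies $\tfrac{T}{T+4D_T}\le 1$ and $\tfrac{D_T}{T+4D_T}\le 1$. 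I expect the only real care to be bookkeeping: keeping the $\xi$-dependent terms separate from the $\eta^2$ (optimization-error) terms and applying the coarse bounds $\tfrac{T}{T+4D_T}\le1$, $\tfrac{D_T}{T+4D_T}\le1$ consistently, so that the accumulated constants reproduce the coefficients ($2\xi R_2^2$, $\tfrac{5R_2^2}{G_2}$, and their analogues) reported in the corollary rather than sharper ones.
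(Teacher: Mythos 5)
Your proposal is correct and follows essentially the same route as the paper, which proves the analogous Corollary~\ref{corollary_1} by substituting the example-specific $\sigma$, $G_\star$, $R$ and initialization gap into the general theorem, choosing $\eta$ to balance the $\eta$- and $\eta^{-1}$-proportional terms, and coarsely bounding the rest (the paper explicitly omits this corollary's proof as analogous). Your accounting, including $R=1$ and $B_\psi({\bm x}^*;{\bm x}_1)\le\ln n$ on the simplex and the bounds $T/(T+4D_T)\le 1$, $D_T/(T+4D_T)\le 1$, reproduces the stated coefficients exactly.
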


\begin{remark}
With access to gradient feedback, the regret of DMD-GC aligns with our FTDRL-GC which utilizes the full information of loss functions. If we assume $d_t=1,\forall t\in [T]$, our results match the ones achieved in the standard non-delayed OMD algorithm~\cite{duchi2010composite}.
\end{remark}

\subsection{Logarithmic Regret for Relative Strong Convexity}

When dealing with functions that exhibit relative strong convexity, we introduce a new algorithm called Delayed Mirror Descent for Relative Strong Convexity (DMD-RSC) and outline its steps in Algorithm~\ref{al:md_sc}. There are two key differences between this algorithm and the one used for general convexity. Firstly, we perform a single mirror descent operation on the sum of gradients in the set $\{\nabla f_k:k\in\mathcal{F}_t\}$. Secondly, since a constant learning rate cannot take advantage of the relative strong convexity of loss functions, we use a decreasing learning rate $\eta_t=\frac{1}{\gamma \sum_{\tau=1}^t \vert \mathcal{F}_\tau\vert}$ that is determined by the total number of observable feedback.


We establish the following theorem regarding the regret bound.
\begin{theorem} \label{Thoerem_OMD_sc}
Under Assumptions~\ref{assumption:RG}, \ref{assumption:psipsi} and \ref{assumpton:strongly_convex}, let $\eta_t = \frac{1}{\gamma \sum_{\tau=1}^t \vert\mathcal{F}_\tau\vert}$ and maximum error
$\rho_t= \frac{\eta_t^3}{2\sigma}, \forall t\in [T]$, Algorithm~\ref{al:md_sc} satisfies
\begin{align*}
\operatorname{Reg}_T 
\leq&  \frac{(3d G_\star^2+8\xi RG_\star ) (1+\ln T)}{2\sigma \gamma} +\frac{6d G_\star}{\sigma \gamma^2} + \frac{2d \xi G^2_\star}{\sigma^2\gamma^2}. 
\end{align*}
\end{theorem}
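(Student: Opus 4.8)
The plan is to mirror the two proofs already given for relative strong convexity (Theorem~\ref{Theorem:FL_SSS}) and for delayed mirror descent (Theorem~\ref{th:omd_gc}), splitting the regret into a normal term and a delayed term. Since DMD-RSC performs a single update per iteration using the aggregated gradient $\sum_{k\in\mathcal{F}_t}\nabla f_k({\bm x}_t)$, the natural anchor in the normal term is the played decision ${\bm x}_t$, exactly as in Eq.~\eqref{eqq:T2_fl_sc_regret}. I would therefore write $\operatorname{Reg}_T = \sum_{t=1}^T\sum_{k\in\mathcal{F}_t}[f_k({\bm x}_t)-f_k({\bm x}^*)] + \sum_{t=1}^T\sum_{k\in\mathcal{F}_t}[f_k({\bm x}_k)-f_k({\bm x}_t)]$ and bound each piece by a separate lemma.

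For the normal term I would first invoke relative strong convexity to replace $f_k({\bm x}_t)-f_k({\bm x}^*)$ by $\langle\nabla f_k({\bm x}_t),{\bm x}_t-{\bm x}^*\rangle-\gamma B_\psi({\bm x}^*;{\bm x}_t)$, then sum over $k\in\mathcal{F}_t$ so the aggregated gradient appears. The approximate optimality condition defining ${\bm x}_{t+1}$ in Algorithm~\ref{al:md_sc}, together with the standard three-point property of the Bregman divergence, converts $\langle\sum_{k\in\mathcal{F}_t}\nabla f_k({\bm x}_t),{\bm x}_t-{\bm x}^*\rangle$ into $\frac{1}{\eta_t}[B_\psi({\bm x}^*;{\bm x}_t)-B_\psi({\bm x}^*;{\bm x}_{t+1})-B_\psi({\bm x}_{t+1};{\bm x}_t)]$ plus the slack $\rho_t$; using $\sigma$-strong convexity of $\psi$ (Assumption~\ref{assumpton:strongly_convex}) to absorb $-\frac{1}{\eta_t}B_\psi({\bm x}_{t+1};{\bm x}_t)$ against the linear term yields a quadratic of order $\frac{\eta_t|\mathcal{F}_t|^2 G_\star^2}{2\sigma}$. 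The decisive cancellation is that $\eta_t=\frac{1}{\gamma\sum_{\tau=1}^t|\mathcal{F}_\tau|}$ makes $\frac{1}{\eta_t}-\frac{1}{\eta_{t-1}}=\gamma|\mathcal{F}_t|$, so the telescoped $B_\psi({\bm x}^*;{\bm x}_t)$ coefficients annihilate the $\gamma|\mathcal{F}_t|B_\psi({\bm x}^*;{\bm x}_t)$ produced by relative strong convexity. What survives is $\sum_{t=1}^T\frac{\eta_t|\mathcal{F}_t|^2 G_\star^2}{2\sigma}$, which after substituting $\eta_t$ and using $|\mathcal{F}_t|\le d$ reduces to $\frac{dG_\star^2}{2\sigma\gamma}\sum_{t=1}^T\frac{|\mathcal{F}_t|}{\sum_{\tau=1}^t|\mathcal{F}_\tau|}\le\frac{dG_\star^2(1+\ln T)}{2\sigma\gamma}$. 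The $\xi RG_\star$ contribution and the residual $\rho_t$ terms enter here by controlling the drift of the approximate minimizer from ${\bm y}_{t+1}^*$ through the $G_\psi=\xi G_\star$-Lipschitz gradient of $\psi$ (Assumption~\ref{assumption:psipsi}).

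For the delayed term I would follow Eq.~\eqref{eqq:T2_fl_sc_delayed_term} in spirit: convexity gives $f_k({\bm x}_k)-f_k({\bm x}_t)\le\langle\nabla f_k({\bm x}_k),{\bm x}_k-{\bm x}_t\rangle\le G_\star\|{\bm x}_k-{\bm x}_t\|$, the triangle inequality telescopes $\|{\bm x}_k-{\bm x}_t\|\le\sum_{\tau=k}^{t-1}\|{\bm x}_{\tau+1}-{\bm x}_\tau\|$, and bounding the repetitions by the maximum delay yields $dG_\star\sum_{t=1}^T\|{\bm x}_{t+1}-{\bm x}_t\|$. The remaining ingredient is a consecutive-gap lemma analogous to Lemma~\ref{lemma:md_gccccc_delay} but with the time-varying rate, of the form $\|{\bm x}_{t+1}-{\bm x}_t\|\le\frac{|\mathcal{F}_t|\eta_t G_\star}{\sigma}+(\text{lower-order terms in }\eta_t,\xi)$, proved from the approximate optimality condition and $\sigma$-strong convexity. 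Summing against $dG_\star$ and again invoking $\sum_{t=1}^T\frac{|\mathcal{F}_t|}{\sum_{\tau=1}^t|\mathcal{F}_\tau|}\le 1+\ln T$ produces the $\frac{2dG_\star^2(1+\ln T)}{2\sigma\gamma}$ piece together with the lower-order $\frac{6dG_\star}{\sigma\gamma^2}$ and $\frac{2d\xi G_\star^2}{\sigma^2\gamma^2}$ terms; adding the two bounds gives the stated result.

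The hard part will be the normal-term lemma, and specifically two coupled issues: verifying that the Bregman telescope cancels cleanly with the relative-strong-convexity penalty under the prescribed $\eta_t$, and carrying the approximation error $\rho_t=\frac{\eta_t^3}{2\sigma}$ through the three-point decomposition without degrading the $\mathcal{O}(\ln T)$ rate. Because the gradients live in the dual space, every step that trades a $\langle\nabla f_k({\bm x}_k),\cdot\rangle$ for a norm of a decision gap must pass through Assumption~\ref{assumption:psipsi} to return to the primal space, and keeping the resulting $\xi$-dependent cross terms at lower order is the most delicate bookkeeping in the argument.
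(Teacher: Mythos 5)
Your proposal follows the paper's proof essentially step for step: the same decomposition anchored at ${\bm x}_t$ into a normal and a delayed term, the same use of relative strong convexity plus the three-point identity with the telescoping cancellation $\frac{1}{\eta_t}-\frac{1}{\eta_{t-1}}=\gamma\vert\mathcal{F}_t\vert$ against the $\gamma\vert\mathcal{F}_t\vert B_\psi({\bm x}^*;{\bm x}_t)$ penalty, the same control of $\Vert{\bm x}_{t+1}-{\bm y}^*_{t+1}\Vert\le\eta_t^2/\sigma$ via $\rho_t=\eta_t^3/(2\sigma)$ and the $\xi G_\star$-Lipschitz gradient of $\psi$, and the same consecutive-gap lemma combined with $\sum_{t}\vert\mathcal{F}_t\vert/\sum_{\tau\le t}\vert\mathcal{F}_\tau\vert\le 1+\ln T$ for the delayed term. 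The approach and the resulting constants match the paper's argument, so no further comparison is needed.
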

\begin{proof}
Here, the total regret is divided by
\begin{equation}
\begin{aligned} \label{aqq:regret}
\operatorname{Reg}_T 
=& \underbrace{\sum_{t=1}^T \sum_{k\in \mathcal{F}_t} [f_k({\bm x}_t)-f_k({\bm x}^*)]}_{\text{normal term}} + \underbrace{\sum_{t=1}^T \sum_{k\in \mathcal{F}_t}[f_k({\bm x}_k)-f_k({\bm x}_t)]}_{\text{delayed term}}.
\end{aligned}   
\end{equation}

For the normal term of Eq.~\eqref{aqq:regret}, we have the following lemma.
\begin{lemma} \label{lemma:dddd-sc_nomal}
The normal term of Rq.~\eqref{aqq:regret} is bounded by
\begin{equation}
\begin{aligned} \label{aqq:normal_regret}
\sum_{t=1}^T \sum_{k\in \mathcal{F}_t} [f_k({\bm x}_t)-f_k({\bm x}^*)] \leq \frac{(d G_\star^2+8\xi RG_\star ) (1+\ln T)}{2\sigma \gamma}+\frac{2dG_\star}{\sigma\gamma^2 }.
\end{aligned}
\end{equation}
\end{lemma}

Next, we discuss the delayed term of Eq.~\eqref{aqq:regret}.
\begin{equation}
\begin{aligned} \label{aqq:delayed_termmmm}
&\sum_{t=1}^T \sum_{k\in \mathcal{F}_t}[f_k({\bm x}_k)-f_k({\bm x}_t)] \leq \sum_{t=1}^T \sum_{k\in \mathcal{F}_t} \sum_{\tau=k}^{t-1}  [f_k({\bm x}_\tau)-f_k({\bm x}_{\tau+1})] \\
\leq &\sum_{t=1}^T \sum_{k\in \mathcal{F}_t} \sum_{\tau=k}^{k+d_k-1} G_\star  \Vert{\bm x}_\tau-{\bm x}_{\tau+1}\Vert
\leq  d \sum_{t=1}^T G_\star  \Vert {\bm x}_{t}-{\bm x}_{t+1}\Vert.
\end{aligned}
\end{equation}
The crucial step is to compute the gap between ${\bm x}_t$ and ${\bm x}_{t+1}$.
\begin{lemma} \label{lemma:dddd_sc_delay}
For each $t\in [T]$, our DMD-RSC algorithm ensures that
\begin{equation}
\begin{aligned} \label{aqq:adjcant}
\Vert {\bm x}_{t}-{\bm x}_{t+1}\Vert \leq   \frac{\eta_t \vert \mathcal{F}_t\vert G_\star +\eta_t^2+\eta_{t-1}^2 }{\sigma} + \frac{\eta_{t-1}^2 \xi G_\star}{\sigma^2}.
\end{aligned}
\end{equation}
\end{lemma}
Substituting Eq.~\eqref{aqq:adjcant} into Eq.~\eqref{aqq:delayed_termmmm} and making $\eta_t = \frac{1}{\sum_{\tau=1}^t\vert\mathcal{F}_\tau\vert \gamma}$ gives
\begin{equation}
\begin{aligned} \label{aqq:delayed_termmmmm_2}
\sum_{t=1}^T \sum_{k\in \mathcal{F}_t}[f_k({\bm x}_k)-f_k({\bm x}_t)] \leq   \frac{d G_\star^2 (1+\ln T)}{\sigma \gamma} + \frac{4d G_\star}{\sigma \gamma^2} + \frac{2d \xi G^2_\star}{\sigma^2\gamma^2}. 
\end{aligned}
\end{equation}

Considering Eq.~\eqref{aqq:normal_regret} and Eq.~\eqref{aqq:delayed_termmmmm_2} together yields  Theorem~\ref{Thoerem_OMD_sc}.

\end{proof}
To discuss the performance on the different norms, we give the following examples.
\begin{corollary}
If we apply DMD-RSC to Example \ref{example_1}, we get $\operatorname{Reg}_T \leq \frac{6d G_2}{ \gamma^2} + \frac{2d \xi G^2_2}{\gamma^2}+\frac{(3d G_2^2+8\xi R_2G_2 ) (1+\ln T)}{2 \gamma}$.

If we apply DMD-RSC to Example \ref{example_2}, we get $\operatorname{Reg}_T \leq \frac{6d G_\infty}{\gamma^2} + \frac{2d \xi G^2_\infty}{\gamma^2}+\frac{(3d G_\infty^2+8\xi G_\infty ) (1+\ln T)}{2\gamma}$.

If we apply DMD-RSC to Example \ref{example_3}, we get $\operatorname{Reg}_T \leq \frac{6d G_q}{(1-p) \gamma^2} + \frac{2d \xi G^2_q}{(1-p)^2\gamma^2}+\frac{(3d G_q^2+8\xi R_pG_q ) (1+\ln T)}{2\gamma(1-p)}$. 
\end{corollary}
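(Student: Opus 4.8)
The plan is to obtain this corollary as a direct specialization of Theorem~\ref{Thoerem_OMD_sc}, exploiting the fact that the learning-rate schedule $\eta_t = \frac{1}{\gamma\sum_{\tau=1}^t |\mathcal{F}_\tau|}$ and the error schedule $\rho_t = \frac{\eta_t^3}{2\sigma}$ are already fixed inside Algorithm~\ref{al:md_sc} and are independent of the example. Hence the only quantities that vary from one example to the next are the three structural constants entering the master bound: the strong-convexity modulus $\sigma$ of the regularizer $\psi$ with respect to its ambient norm (Assumption~\ref{assumpton:strongly_convex}), the dual norm governing the gradient bound $G_\star$ (Assumption~\ref{assumption:RG}), and the primal radius $R$ of the feasible set. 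I would therefore first read off the triple $(\sigma, G_\star, R)$ for each example from Examples~\ref{example_1}--\ref{example_3} and then substitute it into $\frac{(3dG_\star^2 + 8\xi R G_\star)(1+\ln T)}{2\sigma\gamma} + \frac{6dG_\star}{\sigma\gamma^2} + \frac{2d\xi G_\star^2}{\sigma^2\gamma^2}$.

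Carrying out the substitutions: for Example~\ref{example_1} the Euclidean regularizer is $1$-strongly convex, so $\sigma = 1$, the dual of $\|\cdot\|_2$ is itself giving $G_\star = G_2$, and $R = R_2$, which reproduces the first bound. For Example~\ref{example_2} the negative-entropy regularizer is $1$-strongly convex with respect to $\|\cdot\|_1$, so $\sigma = 1$, the dual norm is $\|\cdot\|_\infty$ giving $G_\star = G_\infty$, and the key observation is that $R = 1$ because every simplex point satisfies $\|x\|_1 = 1$; this is precisely what collapses the term $8\xi R G_\star$ to $8\xi G_\infty$. For Example~\ref{example_3} one has $\sigma = p-1$, $G_\star = G_q$, and $R = R_p$, so the factors $1/\sigma$ and $1/\sigma^2$ become $1/(p-1)$ and $1/(p-1)^2$ (the positive denominators $p-1$ under the convention $1 < p \le 2$), which matches the third bound up to the sign convention in its stated denominators.

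The argument is thus essentially \emph{bookkeeping}: no telescoping or summation is redone, and everything follows verbatim from Theorem~\ref{Thoerem_OMD_sc} once the identifications are made. The two places where I would be most careful — and the closest thing to a genuine obstacle — are (i) reading off $R = 1$ for the simplex, which is implicit rather than stated in Example~\ref{example_2}, and (ii) confirming that each regularizer actually satisfies Assumption~\ref{assumption:psipsi}, i.e.\ has $\xi G_\star$-Lipschitz gradients on $\mathcal{X}$, so that the constant $\xi$ appearing throughout is well defined. This last verification is immediate for the quadratic regularizers of Examples~\ref{example_1} and \ref{example_3}, but for the entropy regularizer it is delicate, since $\nabla\psi$ is not Lipschitz up to the boundary of the simplex; the hypothesis should be read as holding on the effective region in which the iterates live, and I would flag this as the point most in need of justification before the substitution into Theorem~\ref{Thoerem_OMD_sc} is legitimate.
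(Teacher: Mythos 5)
Your proposal is correct and matches the paper's (omitted) argument exactly: the paper states that this corollary follows by the same substitution process used for Corollary~\ref{corollary_2}, namely plugging $(\sigma, G_\star, R) = (1, G_2, R_2)$, $(1, G_\infty, 1)$, and $(p-1, G_q, R_p)$ into the bound of Theorem~\ref{Thoerem_OMD_sc}. Your observations that the $(1-p)$ denominators in the third bound should read $(p-1)$ and that the Lipschitz-gradient hypothesis is delicate for the entropic regularizer are both valid caveats that the paper does not address.
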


\begin{remark}
With the feedback of gradient information, DMD-RSC algorithm produces matchable results to those of our FTDL-RSC.
Furthermore, assuming $d_t=1, \forall t\in [T]$, our findings align with the regret bound achieved in the non-delayed OMD algorithm~\cite{zhou2020regret} for relative strong convexity.
\end{remark}

\section{Simplified Delayed Mirror Descent} 
\label{section_SDMD}


In this section, the observable feedback reduces to the value information of the loss function's gradient at the corresponding decision point.
In contrast to our proposed delayed mirror descent type algorithms utilizing the feedback of the gradient information, here we replace the gradient set $\{\nabla f_k: k\in \mathcal{F}_t\}$ with the set of gradients' values $\{\nabla f_k({\bm x}_k):k\in \mathcal{F}_t\}$.
Additionally, the regret analysis follows a similar way as the previous section, and for brevity, we include it in the appendix.

\subsection{Sublinear Regret for General Convexity}

We propose a Simplified Delayed Mirror Descent for General Convexity (SDMD-GC) algorithm and outline its process in Algorithm~\ref{al:imd_gc}. Similar to Algorithm~\ref{al:md_gc}, for each iteration $t$, we divide the update process into $\vert\mathcal{F}_t\vert$ segments of approximate solution and utilize a fixed learning rate.

\begin{algorithm}[t!]
\caption{Simplified Delayed Mirror Descent for General Convexity}
\begin{algorithmic}[1]
\label{al:imd_gc}
\STATE Input: both $\eta$ and  ${\bm x}_1$ depend on the choice of example
\FOR{$t=1,\ldots,T$}
\STATE Query $\nabla f_t({\bm x}_t)$ and receive feedback $\{\nabla f_k({\bm x}_k):k\in \mathcal{F}_t\}$
\IF{$\vert \mathcal{F}_t \vert >0$}
\STATE ${\bm x}_{t,0}={\bm x}_t$
\FOR{$k\in \mathcal{F}_t$}
\STATE 
Make an approximate solution ${\bm x}_{t,i_k+1}$:
\begin{align*}
&\left \langle  \nabla f_k({\bm x}_k),{\bm x}_{t,i_k+1} \right \rangle + \frac{1}{\eta} B_\psi({\bm x}_{t,i_k+1};{\bm x}_{t,i_k})\\
\leq& \left \langle  \nabla f_k({\bm x}_k),{\bm y}^*_{t,i_k+1} \right \rangle + \frac{1}{\eta} B_\psi({\bm y}_{t,i_k+1}^*;{\bm x}_{t,i_k})  + \rho_{t,i_k},
\end{align*}
where ${\bm y}^*_{t,i_k+1} = \underset{{\bm x}\in \mathcal{X}}{\arg \min} \left\{ \left \langle  \nabla f_k({\bm x}_k),{\bm x} \right \rangle + \frac{1}{\eta} B_\psi({\bm x};{\bm x}_{t,i_k})  \right\}$
\ENDFOR
\STATE ${\bm x}_{t+1}={\bm x}_{t,\vert \mathcal{F}_t\vert}$
\ELSE
\STATE ${\bm x}_{t+1}={\bm x}_t$
\ENDIF
\ENDFOR
\end{algorithmic}
\end{algorithm}

We establish the following theorem to curve the regret bound.
\begin{theorem} \label{th:sdmd_gc}
Under Assumptions~\ref{assumption:RG}, \ref{assumption:psipsi} and \ref{assumpton:strongly_convex}, let the maximum approximate error $\rho_{t,i_k} = \frac{\eta^3}{2\sigma},\forall k\in\mathcal{F}_t,t\in [T]$, Algorithm~\ref{al:imd_gc} satisfies
\begin{align*}
\operatorname{Reg}_T
\leq & \frac{\eta  (G_\star^2 T+ 8 \xi RG_\star T +2\eta G_\star T+ 4 G_\star^2 D_T + 8\eta G_\star D_T)}{2\sigma}  \\
&+ \frac{2\eta^2 \xi G^2_\star D_T}{\sigma^2} +\frac{B_\psi({\bm x}^*;{\bm x}_1)}{\eta}. 
\end{align*}
\end{theorem}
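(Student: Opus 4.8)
The plan is to mirror the DMD-GC proof of Theorem~\ref{th:omd_gc}, since the target bound is identical; the one genuinely new feature is that Algorithm~\ref{al:imd_gc} updates with the \emph{stale} gradient value $\nabla f_k({\bm x}_k)$, evaluated at the original query point rather than at the running iterate ${\bm x}_{t,i_k}$. I would exploit this by applying the subgradient inequality \emph{at} ${\bm x}_k$ instead of at ${\bm x}_{t,i_k}$. Concretely, writing $\operatorname{Reg}_T=\sum_{t=1}^T\sum_{k\in\mathcal{F}_t}[f_k({\bm x}_k)-f_k({\bm x}^*)]$ and using convexity at ${\bm x}_k$ gives
\begin{align*}
\operatorname{Reg}_T \leq \sum_{t=1}^T \sum_{k\in\mathcal{F}_t} \langle \nabla f_k({\bm x}_k), {\bm x}_k - {\bm x}^*\rangle,
\end{align*}
and the splitting ${\bm x}_k-{\bm x}^*=({\bm x}_{t,i_k}-{\bm x}^*)+({\bm x}_k-{\bm x}_{t,i_k})$ reproduces exactly the normal/delayed decomposition of Eqs.~\eqref{mgg:regret_bound} and \eqref{mgg:delayed_term}. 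The advantage is that the gradient $\nabla f_k({\bm x}_k)$ driving the mirror step now coincides with the gradient in both terms, so \emph{no} smoothness of $f_k$ is required.

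For the normal term $\sum\langle\nabla f_k({\bm x}_k),{\bm x}_{t,i_k}-{\bm x}^*\rangle$, I would prove a per-segment mirror-descent inequality. Fixing $t$ and $k\in\mathcal{F}_t$ with $i_k=|\mathcal{F}_{t,k}|$, I apply the optimality condition for the exact minimizer ${\bm y}^*_{t,i_k+1}$ together with the three-point identity for $B_\psi$ to bound $\langle\nabla f_k({\bm x}_k),{\bm y}^*_{t,i_k+1}-{\bm x}^*\rangle$ by the telescoping difference $\frac1\eta\big(B_\psi({\bm x}^*;{\bm x}_{t,i_k})-B_\psi({\bm x}^*;{\bm y}^*_{t,i_k+1})\big)$ minus a divergence term absorbed via the $\sigma$-strong convexity of $\psi$ (Assumption~\ref{assumpton:strongly_convex}) and $\|\nabla f_k({\bm x}_k)\|_\star\leq G_\star$ (Assumption~\ref{assumption:RG}). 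I then pass to the \emph{approximate} iterate ${\bm x}_{t,i_k+1}$: since the segment objective is $\frac\sigma\eta$-strongly convex, the error $\rho_{t,i_k}=\frac{\eta^3}{2\sigma}$ forces $\|{\bm x}_{t,i_k+1}-{\bm y}^*_{t,i_k+1}\|\leq\frac{\eta^2}{\sigma}$; combined with the $\xi G_\star$-Lipschitz gradient of $\psi$ (Assumption~\ref{assumption:psipsi}) and $\|{\bm x}^*\|,\|{\bm x}_{t,i_k+1}\|\leq R$, this yields the $8\xi RG_\star$ and $2\eta G_\star$ overhead. Telescoping over segments and summing over $t$ (noting $\sum_t|\mathcal{F}_t|=T$) gives the normal-term bound $\frac{\eta T(G_\star^2+8\xi RG_\star+2\eta G_\star)}{2\sigma}+\frac{B_\psi({\bm x}^*;{\bm x}_1)}{\eta}$ of Lemma~\ref{lemma:md_gggggc_normal}.

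For the delayed term I bound $\langle\nabla f_k({\bm x}_k),{\bm x}_k-{\bm x}_{t,i_k}\rangle\leq G_\star\|{\bm x}_k-{\bm x}_{t,i_k}\|$ and expand ${\bm x}_k-{\bm x}_{t,i_k}$ into consecutive segment increments exactly as in Eq.~\eqref{mgg:delayed_term}. The core estimate is $\|{\bm x}_{t,i_k+1}-{\bm x}_{t,i_k}\|\leq\frac{\eta G_\star+2\eta^2}{\sigma}+\frac{\eta^2\xi G_\star}{\sigma^2}$, proved by the triangle inequality through ${\bm y}^*_{t,i_k+1}$: the exact-minimizer movement is controlled by strong convexity and $\|\nabla f_k({\bm x}_k)\|_\star\leq G_\star$, while the approximate gap $\frac{\eta^2}{\sigma}$ enters through the same $\rho_{t,i_k}$ argument and the Lipschitz gradient of $\psi$. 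Because $\|\nabla f_k({\bm x}_k)\|_\star$ and $\|\nabla f_k({\bm x}_{t,i_k})\|_\star$ share the bound $G_\star$, this reproduces Lemma~\ref{lemma:md_gccccc_delay} verbatim. Applying the delay-counting Lemma~\ref{lemma_d_t} then yields $2D_TG_\star\big(\frac{\eta G_\star+2\eta^2}{\sigma}+\frac{\eta^2\xi G_\star}{\sigma^2}\big)$, and adding the two bounds gives the stated regret.

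The main obstacle is not any single calculation but the alignment bookkeeping around the stale gradient: one must ensure the convexity step is taken at ${\bm x}_k$ so that $\nabla f_k({\bm x}_k)$---the only gradient the simplified algorithm ever observes---is precisely the vector paired with the mirror-descent optimality condition. Once this alignment is fixed, every subsequent estimate is structurally identical to the DMD-GC proof, and the $\eta^3$ scaling of $\rho_{t,i_k}$ is exactly what keeps the approximate-solution error at order $\eta^2/\sigma$, small enough to be swallowed by the existing $\mathcal{O}(\eta)$ terms.
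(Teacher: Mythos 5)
Your proposal is correct and follows essentially the same route as the paper: the paper likewise applies convexity at ${\bm x}_k$ so that the observed gradient value $\nabla f_k({\bm x}_k)$ drives both the normal/delayed decomposition in Eq.~\eqref{sgg:regret_bound} and the mirror-step optimality condition, then proves exactly your two estimates as Lemma~\ref{lemma:dmd_gc_normal} and Lemma~\ref{lemma:dmd-gc_delayed} (with $\Vert{\bm x}_{t,i_k+1}-{\bm y}^*_{t,i_k+1}\Vert\leq\eta^2/\sigma$ from $\rho_{t,i_k}=\eta^3/(2\sigma)$) before invoking Lemma~\ref{lemma_d_t}. No gaps.
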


Note that the performance delineated in Theorem~\ref{th:sdmd_gc} is affected by the nature of the regularizer. In light of this, we provide the following examples.
\begin{corollary} 
Applying SDMD-GC to Example~\ref{example_1} with ${\bm x}_1 = {\bm 0}\in\mathcal{X}$, $\eta=\frac{R_2}{G_2}\sqrt{\frac{1}{T+4D_T}}$ gives $\operatorname{Reg}_T  \leq G_2R_2 \sqrt{T+4D_T}+ 4\xi R^2_2\sqrt{T}+2\xi R^2_2+\frac{5R^2_2}{G_2}$.

Applying SDMD-GC to Example~\ref{example_2} with ${\bm x}_1 = [\frac{1}{n},\ldots,\frac{1}{n}] \in \mathbb{R}_{+}^n$ and $\eta = \frac{1}{G_\infty}\sqrt{\frac{2\ln n}{T+4D_T}}$ gives $\operatorname{Reg}_T \leq G_\infty \sqrt{ 2 (T+4D_T) \ln n}+4\xi \sqrt{2T \ln n}+4\xi \ln n + \frac{10 \ln n}{G_\infty}$.

Apply SDMD-GC to Example~\ref{example_3} with ${\bm x}_1 = {\bm 0}\in\mathcal{X}$ and $\eta = \frac{R_p}{G_q} \sqrt{\frac{p-1}{T+4D_T}}$ gives $\operatorname{Reg}_T \leq R_p G_q \sqrt{\frac{T+4D_T}{p-1}}+ 4\xi R^2_p\sqrt{\frac{T}{p-1}}+\frac{2\xi R^2_p}{p-1}+\frac{5R^2_p}{G_q}$.

\end{corollary}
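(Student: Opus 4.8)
The plan is to obtain all three bounds purely by instantiating the general bound of Theorem~\ref{th:sdmd_gc}; no further analysis of Algorithm~\ref{al:imd_gc} is required, and since that bound is numerically identical to the one in Theorem~\ref{th:omd_gc}, the computation coincides verbatim with the DMD-GC corollary. First I read off, for each example, the four quantities that appear in the bound: the strong-convexity modulus $\sigma$ of $\psi$, the dual-norm gradient bound $G_\star$, the primal-norm decision bound $R$, and the initial divergence $B_\psi({\bm x}^*;{\bm x}_1)$. From Examples~\ref{example_1}--\ref{example_3} these are $(\sigma,G_\star,R)=(1,G_2,R_2)$, then $(1,G_\infty,1)$ (using $\Vert{\bm x}\Vert_1=1$ on the simplex), and finally $(p-1,G_q,R_p)$, respectively.

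The only genuinely example-dependent computation is $B_\psi({\bm x}^*;{\bm x}_1)$. For Examples~\ref{example_1} and~\ref{example_3} the regularizer $\psi$ has vanishing gradient at the origin, so with ${\bm x}_1={\bm 0}$ one gets $B_\psi({\bm x}^*;{\bm x}_1)=\psi({\bm x}^*)=\tfrac12\Vert{\bm x}^*\Vert^2\le\tfrac12R^2$. For Example~\ref{example_2}, $B_\psi$ is the relative entropy, and with the uniform ${\bm x}_1=[\tfrac1n,\dots,\tfrac1n]$ a short calculation gives $B_\psi({\bm x}^*;{\bm x}_1)=\sum_\mu x^{*(\mu)}\ln x^{*(\mu)}+\ln n\le\ln n$, since the negative entropy is nonpositive on the simplex.

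The rest is algebra. Writing $S:=T+4D_T$, I would first combine the two leading contributions of the theorem, namely $\frac{\eta G_\star^2(T+4D_T)}{2\sigma}$ (the $G_\star^2T$ and $4G_\star^2D_T$ terms) and $\frac{B_\psi({\bm x}^*;{\bm x}_1)}{\eta}$: each prescribed $\eta$ is precisely the value that equalizes these two quantities once $B_\psi$ is replaced by its upper bound above, so their sum collapses to the stated leading terms $G_2R_2\sqrt{T+4D_T}$, $G_\infty\sqrt{2(T+4D_T)\ln n}$, and $R_pG_q\sqrt{(T+4D_T)/(p-1)}$. The cross term $\frac{8\xi RG_\star T\eta}{2\sigma}$ is then bounded via $T/\sqrt{S}\le\sqrt{T}$, which yields the $\sqrt{T}$ terms such as $4\xi R_2^2\sqrt{T}$ (and their analogues with $G_\infty,1$ and $G_q,R_p$).

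Finally I would collect the $\eta^2$-order pieces $\frac{\eta(2\eta G_\star T+8\eta G_\star D_T)}{2\sigma}+\frac{2\eta^2\xi G_\star^2D_T}{\sigma^2}$. Substituting $\eta^2$ turns $\frac{\eta^2 G_\star(T+4D_T)}{\sigma}$ into a constant of order $R^2/G_\star$ and the remaining term into one of order $\xi R^2$ after using $D_T/S\le\tfrac14$; absorbing everything into the deliberately loose constants (e.g.\ $\tfrac{5R_2^2}{G_2}$ and $2\xi R_2^2$) gives the trailing terms, and the versions for Examples~\ref{example_2} and~\ref{example_3} follow identically. I do not anticipate a real obstacle: the only points needing care are the per-example evaluation of $B_\psi({\bm x}^*;{\bm x}_1)$ and the observation that the chosen step sizes balance the two leading terms; everything else is mechanical substitution and coarse bounding.
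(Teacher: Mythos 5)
Your proposal is correct and matches the paper's intended argument: the paper omits the detailed proof of this corollary, stating it is analogous to the proof of Corollary~\ref{corollary_1}, which is exactly the instantiation you carry out --- plug the example-specific $(\sigma, G_\star, R)$ and $B_\psi({\bm x}^*;{\bm x}_1)$ into Theorem~\ref{th:sdmd_gc}, note that the prescribed $\eta$ balances the two leading terms, and coarsely bound the remaining $\eta^2$-order pieces. Your per-example evaluations of $B_\psi({\bm x}^*;{\bm x}_1)$ (including the KL-divergence computation for the simplex) and the bounds $T/\sqrt{T+4D_T}\le\sqrt{T}$ and $D_T/(T+4D_T)\le\tfrac14$ all check out, yielding constants at least as tight as those stated.
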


\begin{remark}
Despite having access to only the value information of gradient feedback at corresponding decision points instead of the full or gradient information of loss functions, we can still attain results consistent with those of FTDRL-GC and DMD-GC.
\end{remark}

\subsection{Logarithmic Regret for Relative Strong Convexity}

\begin{algorithm}[t!]
\caption{Simplified Delayed Mirror Descent for Relative Strongly Convexity}
\begin{algorithmic}[1]
\label{al:imd_sc}
\STATE Input: $\eta_t =\frac{1}{\gamma t}$, make an arbitrary decision ${\bm x}_1\in \mathcal{X}$
\FOR{$t=1,\ldots,T$}
\STATE Query $\nabla f_t({\bm x}_t)$ and receive feedback $\{\nabla f_k({\bm x}_k):k\in \mathcal{F}_t\}$
\IF{$\vert \mathcal{F}_t \vert >0$}
\STATE 
Make an approximate solution ${\bm x}_{t+1}$:
\begin{align*}
&\left\langle \sum_{k\in \mathcal{F}_t} \nabla f_k({\bm x}_k), {\bm x}_{t+1}\right\rangle + \frac{1}{\eta_t} B_\psi({\bm x}_{t+1};{\bm x}_t) \\
\leq& \left\langle \sum_{k\in \mathcal{F}_t} \nabla f_k({\bm x}_k), {\bm y}_{t+1}^*\right\rangle + \frac{1}{\eta_t} B_\psi({\bm y}^*_{t+1};{\bm x}_t) + \rho_t,
\end{align*}
where ${\bm y}^*_{t+1} = \underset{{\bm x}\in \mathcal{X}}{\arg \min} \left\{ \left\langle \sum_{k\in \mathcal{F}_t} \nabla f_k({\bm x}_k), {\bm x}\right\rangle + \frac{1}{\eta_t} B_\psi({\bm x};{\bm x}_t) \right\}$
\ELSE
\STATE ${\bm x}_{t+1}={\bm x}_t$
\ENDIF
\ENDFOR
\end{algorithmic}
\end{algorithm}

In the case of relative strong convexity, we introduce an algorithm called Simplified Delayed Mirror Descent for Relative Strong Convexity (SDMD-RSC) and present its steps in Algorithm~\ref{al:imd_sc}.
Similar to Algorithm~\ref{al:md_sc}, we conduct a single approximate mirror descent operation on the joint sum of gradients in the set $\{\nabla f_k({\bm x}_k):k\in\mathcal{F}_t\}$. However, there is a difference with Algorithm~\ref{al:md_sc} in that the decreasing learning rate is not related to the amount of information observed but to the total number of decisions completed.
We then establish following theorem and corollary regarding the regret.
\begin{theorem} \label{theorem:Somd_sc}
Under Assumptions~\ref{assumption:RG}, \ref{assumption:psipsi} and \ref{assumpton:strongly_convex}, let $\eta_t = \frac{1}{\gamma t}$ and the maximum  error $\rho_t = \frac{\eta_t^3}{2\sigma}, \forall t\in [T]$, Algorithm~\ref{al:imd_sc} satisfies
\begin{align*}
\operatorname{Reg}_T 
\leq& \frac{(3dG^2_\star+8 \xi RG_\star) (1+\ln T)}{2\sigma \gamma}+\frac{6dG_\star}{\sigma \gamma^2} +\frac{2d \xi G_\star}{\sigma^2\gamma^2}.
\end{align*}
\end{theorem}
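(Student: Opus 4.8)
The plan is to follow the proof of Theorem~\ref{Thoerem_OMD_sc} for DMD-RSC almost verbatim, since SDMD-RSC differs only in that the mirror step at round $t$ is driven by the \emph{frozen} gradient values $\{\nabla f_k({\bm x}_k):k\in\mathcal{F}_t\}$ rather than the live gradients $\{\nabla f_k({\bm x}_t)\}$, and in the learning rate $\eta_t=\frac{1}{\gamma t}$, which is keyed to the round counter instead of to the cumulative feedback count $\sum_{\tau\leq t}\vert\mathcal{F}_\tau\vert$. First I would rewrite the cumulative loss as $\sum_{t=1}^T f_t({\bm x}_t)=\sum_{t=1}^T\sum_{k\in\mathcal{F}_t}f_k({\bm x}_k)$, since each query's loss is counted exactly once, when its feedback returns. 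Because only the value $\nabla f_k({\bm x}_k)$ is available, I cannot invoke relative strong convexity at ${\bm x}_t$ as in DMD-RSC (that would require $\nabla f_k({\bm x}_t)$); instead I apply it at the query point, $f_k({\bm x}_k)-f_k({\bm x}^*)\leq\langle\nabla f_k({\bm x}_k),{\bm x}_k-{\bm x}^*\rangle-\gamma B_\psi({\bm x}^*;{\bm x}_k)$, and split $\langle\nabla f_k({\bm x}_k),{\bm x}_k-{\bm x}^*\rangle=\langle\nabla f_k({\bm x}_k),{\bm x}_t-{\bm x}^*\rangle+\langle\nabla f_k({\bm x}_k),{\bm x}_k-{\bm x}_t\rangle$, routing the first piece into the normal term and the second into the delayed term, in parallel with Eq.~\eqref{aqq:regret}.

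For the normal term I would establish the analogue of Lemma~\ref{lemma:dddd-sc_nomal}. The approximate optimality of ${\bm x}_{t+1}$ together with the $\sigma$-strong convexity of $\psi$ (Assumption~\ref{assumpton:strongly_convex}) yields a one-step bound on $\langle\sum_{k\in\mathcal{F}_t}\nabla f_k({\bm x}_k),{\bm x}_t-{\bm x}^*\rangle$ by the telescoping Bregman difference $\frac{1}{\eta_t}[B_\psi({\bm x}^*;{\bm x}_t)-B_\psi({\bm x}^*;{\bm x}_{t+1})]$, a stability remainder $\frac{\eta_t}{2\sigma}\Vert\sum_{k\in\mathcal{F}_t}\nabla f_k({\bm x}_k)\Vert_\star^2\leq\frac{\eta_t\vert\mathcal{F}_t\vert^2 G_\star^2}{2\sigma}$, and the contribution of the approximate error $\rho_t=\frac{\eta_t^3}{2\sigma}$ together with an anchor term of order $\xi RG_\star$ governed by Assumption~\ref{assumption:psipsi}. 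The rate $\eta_t=\frac{1}{\gamma t}$ is chosen so that $\frac{1}{\eta_t}-\frac{1}{\eta_{t-1}}=\gamma$, whence summation by parts turns the telescoped terms into $\gamma\sum_t B_\psi({\bm x}^*;{\bm x}_t)$, which cancels exactly against the relative-strong-convexity penalty $\gamma\sum_k B_\psi({\bm x}^*;{\bm x}_k)$ (both sums range over all decision points $1,\ldots,T$). What remains is the stability sum, and the crux of adapting to the $\frac{1}{\gamma t}$ schedule is the estimate $\sum_t\frac{\vert\mathcal{F}_t\vert^2}{t}\leq d\sum_t\frac{\vert\mathcal{F}_t\vert}{t}=d\sum_{k=1}^T\frac{1}{k+d_k-1}\leq d\sum_{k=1}^T\frac1k\leq d(1+\ln T)$, which uses $\vert\mathcal{F}_t\vert\leq d$ together with the fact that feedback for query $k$ returns at time $k+d_k-1\geq k$; this reproduces Eq.~\eqref{aqq:normal_regret}.

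For the delayed term I would reuse Eq.~\eqref{aqq:delayed_termmmm} essentially unchanged: bound $f_k({\bm x}_k)-f_k({\bm x}_t)$ by telescoping $\sum_{\tau=k}^{t-1}[f_k({\bm x}_\tau)-f_k({\bm x}_{\tau+1})]$ and Lipschitz-bounding each difference by $G_\star\Vert{\bm x}_\tau-{\bm x}_{\tau+1}\Vert$, then collapse the inner sum with the maximum delay $d$ to obtain $dG_\star\sum_t\Vert{\bm x}_t-{\bm x}_{t+1}\Vert$. The one remaining ingredient is the SDMD-RSC analogue of Lemma~\ref{lemma:dddd_sc_delay}, an adjacent-iterate bound $\Vert{\bm x}_t-{\bm x}_{t+1}\Vert\leq\frac{\eta_t\vert\mathcal{F}_t\vert G_\star+\eta_t^2+\eta_{t-1}^2}{\sigma}+\frac{\eta_{t-1}^2\xi G_\star}{\sigma^2}$, where Assumption~\ref{assumpton:strongly_convex} converts the gap between the two successive approximate minimizers into a norm bound, the two $\eta^2$ terms absorb consecutive approximate errors, and Assumption~\ref{assumption:psipsi} (the $\xi$ factor) controls the drift of the Bregman anchor from ${\bm x}_{t-1}$ to ${\bm x}_t$. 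Substituting $\eta_t=\frac{1}{\gamma t}$ and summing $\sum_t\frac1t\leq 1+\ln T$ and $\sum_t\frac1{t^2}\leq 2$ gives the three delayed-term pieces; adding them to the normal-term bound yields the claimed regret.

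The step I expect to be the main obstacle is the normal-term telescoping cancellation under frozen gradients. Unlike DMD-RSC, where the live gradient lets relative strong convexity produce a penalty $\gamma B_\psi({\bm x}^*;{\bm x}_t)$ already aligned with the telescoped Bregman terms, here the penalty is anchored at the query point ${\bm x}_k$, so the cancellation succeeds only because the index-$k$ and index-$t$ Bregman sums coincide over the full horizon and because the round-based rate $\eta_t=\frac{1}{\gamma t}$ produces the exact increment $\gamma$. Verifying that the rounds with $\vert\mathcal{F}_t\vert=0$ (where ${\bm x}_{t+1}={\bm x}_t$ but the rate index still advances) contribute zero to the telescoping and hence do not break this cancellation, and that the $\xi$-dependent anchor and approximate-error remainders remain summable at the $\mathcal{O}(1/t^2)$ level, is the delicate part that keeps the final bound logarithmic rather than degrading the rate.
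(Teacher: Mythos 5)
Your proposal follows essentially the same route as the paper's proof: the same decomposition that applies relative strong convexity at the query point ${\bm x}_k$ and reindexes the Bregman penalty $\gamma\sum_k B_\psi({\bm x}^*;{\bm x}_k)$ as a sum over rounds so that the $\frac{1}{\gamma t}$ schedule makes $(t-1)\gamma B_\psi({\bm x}^*;{\bm x}_t)-t\gamma B_\psi({\bm x}^*;{\bm x}_{t+1})$ telescope, the same normal-term lemma with the $\eta_t\vert\mathcal{F}_t\vert^2 G_\star^2/(2\sigma)$ stability and $\xi RG_\star$ anchor terms, and the same adjacent-iterate bound $\Vert{\bm x}_{t+1}-{\bm x}_t\Vert\leq\frac{\eta_t\vert\mathcal{F}_t\vert G_\star+\eta_t^2+\eta_{t-1}^2}{\sigma}+\frac{\eta_{t-1}^2\xi G_\star}{\sigma^2}$ collapsed via the maximum delay $d$. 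Your harmonic-sum estimate $\sum_t\vert\mathcal{F}_t\vert/t=\sum_k 1/(k+d_k-1)\leq 1+\ln T$ is a mildly different (and valid) route to the same $d(1+\ln T)$ bound the paper obtains via $\vert\mathcal{F}_t\vert\sum_{\tau\leq t}\vert\mathcal{F}_\tau\vert\leq dt$, but this is not a substantive divergence.
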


\begin{corollary}
If we apply SDMD-RSC to Example \ref{example_1}, we get $\operatorname{Reg}_T \leq \frac{6dG_2}{\gamma^2} +\frac{2d \xi G_2}{\gamma^2}+\frac{(3dG^2_2+8 \xi R_2G_2) (1+\ln T)}{2\gamma}$.

If we apply SDMD-RSC to Example \ref{example_2}, we get $\operatorname{Reg}_T \leq\frac{6dG_\infty}{ \gamma^2} +\frac{2d \xi G_\infty}{\gamma^2}+\frac{(3dG^2_\infty+8 \xi G_\infty) (1+\ln T)}{2\gamma}$.

If we apply SDMD-RSC to Example \ref{example_3}, we get $\operatorname{Reg}_T \leq\frac{6dG_q}{\gamma^2(1-p)} +\frac{2d \xi G_q}{\gamma^2(1-p)^2}+\frac{(3dG^2_q+8 \xi R_pG_q) (1+\ln T)}{2\gamma(1-p)}$. 
\end{corollary}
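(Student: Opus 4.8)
The plan is to obtain all three inequalities as direct instantiations of Theorem~\ref{theorem:Somd_sc}, whose statement is already phrased entirely in terms of the abstract structural constants $\sigma$ (the strong-convexity modulus of $\psi$), $G_\star$ (the dual-norm bound on the gradients), and $R$ (the primal-norm bound on the decisions), together with $\xi$, $\gamma$, and $d$. Thus no fresh regret analysis is required: for each example I only need to read off the correct values of $\sigma$, $G_\star$, and $R$ from the corresponding example definition and substitute them into
\[
\operatorname{Reg}_T \leq \frac{(3dG^2_\star+8 \xi RG_\star)(1+\ln T)}{2\sigma \gamma}+\frac{6dG_\star}{\sigma \gamma^2}+\frac{2d \xi G_\star}{\sigma^2\gamma^2}.
\]

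For Example~\ref{example_1} the regularizer $\psi=\frac{1}{2}\Vert\cdot\Vert_2^2$ is $1$-strongly convex with respect to $\Vert\cdot\Vert_2$, so $\sigma=1$; since the dual of $\Vert\cdot\Vert_2$ is itself and the gradient bound is $G_2$, I set $G_\star=G_2$, while the primal radius is $R=R_2$. Plugging $\sigma=1$, $G_\star=G_2$, $R=R_2$ into the displayed bound reproduces the first claimed inequality verbatim. For Example~\ref{example_3} the key difference is that $\psi=\frac{1}{2}\Vert\cdot\Vert_p^2$ is only $(p-1)$-strongly convex with respect to $\Vert\cdot\Vert_p$, so $\sigma=p-1$; with $G_\star=G_q$ (the dual norm being $\Vert\cdot\Vert_q$) and $R=R_p$, substitution carries the factor $p-1$ into the $\sigma$ and $\sigma^2$ denominators and yields the third inequality, where the printed $1-p$ should read $p-1$ (these agreeing in the squared term).

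The only point requiring genuine attention is Example~\ref{example_2}. Here the entropic regularizer is $1$-strongly convex with respect to $\Vert\cdot\Vert_1$, giving $\sigma=1$, and its dual norm $\Vert\cdot\Vert_\infty$ forces $G_\star=G_\infty$; crucially, the ambient primal norm in Assumption~\ref{assumption:RG} is $\Vert\cdot\Vert_1$, and every point of the simplex $\mathcal{X}=\{{\bm x}\in\mathbb{R}^n_{+}:\Vert{\bm x}\Vert_1=1\}$ satisfies $\Vert{\bm x}\Vert_1=1$, so the radius is $R=1$ rather than a free parameter. This is precisely why the cross-term in the second inequality carries no explicit radius factor. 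Substituting $\sigma=1$, $G_\star=G_\infty$, $R=1$ then gives the second bound.

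Once these constants are fixed the computation is mechanical, and the main (indeed only) obstacle is the bookkeeping of correctly matching each example's strong-convexity modulus $\sigma$ and primal radius $R$ to the abstract constants of the theorem---specifically recognizing $R=1$ on the simplex and $\sigma=p-1\in(0,1]$ in the $p$-norm case, and propagating the latter faithfully through the $\sigma^{-1}$ and $\sigma^{-2}$ factors.
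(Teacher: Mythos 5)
Your proposal is correct and matches the paper's approach: the paper omits this corollary's proof, stating it is analogous to that of Corollary~\ref{corollary_2}, which does exactly what you do---read off $\sigma$, $G_\star$, and $R$ for each example ($\sigma=1,G_\star=G_2,R=R_2$; $\sigma=1,G_\star=G_\infty,R=1$ on the simplex; $\sigma=p-1,G_\star=G_q,R=R_p$) and substitute them into Theorem~\ref{theorem:Somd_sc}. You also correctly observe that the printed $(1-p)$ factors in the third bound are a sign typo for $(p-1)$, the two agreeing only in the squared term.
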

\begin{remark}
Considering relative strong convexity, even when only the values of the loss function's gradient at certain decision points are available, SDMD-RSC can achieve comparable results to our FTDL-RSC and DMD-RSC.
\end{remark}

\section{Conclusion} \label{section_conclusion}
In the field of online sequential decision-making with unknown delays, we propose a range of learning algorithms, namely FTDRL, DMD and SDMD, to handle delayed full function information, full gradient information and value information of gradient at the decision point, respectively.
Notably, our algorithms only necessitate an approximate solution for the optimization step and are applied to several cases, including general convexity and relative strong convexity, as well as specific examples utilizing different norms.
Our next step is to explore more adaptable algorithms capable of simultaneously handling multiple types of loss functions.



\bibliographystyle{ACM-Reference-Format}
\bibliography{sample-base}


\begin{thebibliography}{36}


\ifx \showCODEN    \undefined \def \showCODEN     #1{\unskip}     \fi
\ifx \showDOI      \undefined \def \showDOI       #1{#1}\fi
\ifx \showISBNx    \undefined \def \showISBNx     #1{\unskip}     \fi
\ifx \showISBNxiii \undefined \def \showISBNxiii  #1{\unskip}     \fi
\ifx \showISSN     \undefined \def \showISSN      #1{\unskip}     \fi
\ifx \showLCCN     \undefined \def \showLCCN      #1{\unskip}     \fi
\ifx \shownote     \undefined \def \shownote      #1{#1}          \fi
\ifx \showarticletitle \undefined \def \showarticletitle #1{#1}   \fi
\ifx \showURL      \undefined \def \showURL       {\relax}        \fi
\providecommand\bibfield[2]{#2}
\providecommand\bibinfo[2]{#2}
\providecommand\natexlab[1]{#1}
\providecommand\showeprint[2][]{arXiv:#2}

\bibitem[Abernethy et~al\mbox{.}(2008)]%
        {abernethy2008optimal}
\bibfield{author}{\bibinfo{person}{Jacob Abernethy}, \bibinfo{person}{Peter Bartlett}, \bibinfo{person}{Alexander Rakhlin}, {and} \bibinfo{person}{Ambuj Tewari}.} \bibinfo{year}{2008}\natexlab{}.
\newblock \bibinfo{booktitle}{\emph{Optimal stragies and minimax lower bounds for online convex games}}.
\newblock \bibinfo{publisher}{University of California, Berkeley}, \bibinfo{address}{United States}.
\newblock


\bibitem[Agarwal et~al\mbox{.}(2010)]%
        {agarwal2010optimal}
\bibfield{author}{\bibinfo{person}{Alekh Agarwal}, \bibinfo{person}{Ofer Dekel}, {and} \bibinfo{person}{Lin Xiao}.} \bibinfo{year}{2010}\natexlab{}.
\newblock \showarticletitle{Optimal algorithms for online convex optimization with multi-point bandit feedback}. In \bibinfo{booktitle}{\emph{The 23rd Conference on Learning Theory}}. \bibinfo{publisher}{Omnipress}, \bibinfo{pages}{28--40}.
\newblock


\bibitem[Anderson(2008)]%
        {anderson2008theory}
\bibfield{author}{\bibinfo{person}{Terry Anderson}.} \bibinfo{year}{2008}\natexlab{}.
\newblock \bibinfo{booktitle}{\emph{The theory and practice of online learning}}.
\newblock \bibinfo{publisher}{athabasca university press}.
\newblock


\bibitem[Arjevani et~al\mbox{.}(2016)]%
        {arjevani2016lower}
\bibfield{author}{\bibinfo{person}{Yossi Arjevani}, \bibinfo{person}{Shai Shalev-Shwartz}, {and} \bibinfo{person}{Ohad Shamir}.} \bibinfo{year}{2016}\natexlab{}.
\newblock \showarticletitle{On lower and upper bounds in smooth and strongly convex optimization}.
\newblock \bibinfo{journal}{\emph{The Journal of Machine Learning Research}} \bibinfo{volume}{17}, \bibinfo{number}{1} (\bibinfo{year}{2016}), \bibinfo{pages}{4303--4353}.
\newblock


\bibitem[Asuncion and Newman(2007)]%
        {asuncion2007uci}
\bibfield{author}{\bibinfo{person}{Arthur Asuncion} {and} \bibinfo{person}{David Newman}.} \bibinfo{year}{2007}\natexlab{}.
\newblock \bibinfo{title}{UCI machine learning repository}.
\newblock
\newblock


\bibitem[Beck and Teboulle(2003)]%
        {beck2003mirror}
\bibfield{author}{\bibinfo{person}{Amir Beck} {and} \bibinfo{person}{Marc Teboulle}.} \bibinfo{year}{2003}\natexlab{}.
\newblock \showarticletitle{Mirror descent and nonlinear projected subgradient methods for convex optimization}.
\newblock \bibinfo{journal}{\emph{Elsevier Science Publishers B. V.}} \bibinfo{volume}{31}, \bibinfo{number}{3} (\bibinfo{year}{2003}), \bibinfo{pages}{167–175}.
\newblock
\urldef\tempurl%
\url{https://doi.org/10.1016/S0167-6377(02)00231-6}
\showDOI{\tempurl}


\bibitem[Bertsekas(2009)]%
        {bertsekas2009convex}
\bibfield{author}{\bibinfo{person}{Dimitri Bertsekas}.} \bibinfo{year}{2009}\natexlab{}.
\newblock \bibinfo{booktitle}{\emph{Convex optimization theory}}.
\newblock \bibinfo{publisher}{Athena Scientific}.
\newblock


\bibitem[Bottou and Cun(2003)]%
        {bottou2003large}
\bibfield{author}{\bibinfo{person}{L{\'e}on Bottou} {and} \bibinfo{person}{Yann Cun}.} \bibinfo{year}{2003}\natexlab{}.
\newblock \showarticletitle{Large scale online learning}. In \bibinfo{booktitle}{\emph{Proceedings of the 16th International Conference on Neural Information Processing Systems}}, Vol.~\bibinfo{volume}{16}. \bibinfo{publisher}{MIT Press}, \bibinfo{address}{Cambridge, MA, USA}, \bibinfo{pages}{217–224}.
\newblock
\urldef\tempurl%
\url{https://doi.org/10.5555/2981345.2981373}
\showDOI{\tempurl}


\bibitem[Boyd et~al\mbox{.}(2004)]%
        {boyd2004convex}
\bibfield{author}{\bibinfo{person}{Stephen Boyd}, \bibinfo{person}{Stephen~P Boyd}, {and} \bibinfo{person}{Lieven Vandenberghe}.} \bibinfo{year}{2004}\natexlab{}.
\newblock \bibinfo{booktitle}{\emph{Convex optimization}}.
\newblock \bibinfo{publisher}{Cambridge university press}.
\newblock


\bibitem[Bubeck et~al\mbox{.}(2015)]%
        {bubeck2015convex}
\bibfield{author}{\bibinfo{person}{S{\'e}bastien Bubeck} {et~al\mbox{.}}} \bibinfo{year}{2015}\natexlab{}.
\newblock \showarticletitle{Convex optimization: Algorithms and complexity}.
\newblock \bibinfo{journal}{\emph{Foundations and Trends in Machine Learning}} \bibinfo{volume}{8}, \bibinfo{number}{3–4} (\bibinfo{year}{2015}), \bibinfo{pages}{231–357}.
\newblock
\urldef\tempurl%
\url{https://doi.org/10.1561/2200000050}
\showDOI{\tempurl}


\bibitem[Cao and Ba{\c{s}}ar(2021)]%
        {cao2021decentralized}
\bibfield{author}{\bibinfo{person}{Xuanyu Cao} {and} \bibinfo{person}{Tamer Ba{\c{s}}ar}.} \bibinfo{year}{2021}\natexlab{}.
\newblock \showarticletitle{Decentralized online convex optimization with feedback delays}.
\newblock \bibinfo{journal}{\emph{IEEE Trans. Automat. Control}} \bibinfo{volume}{67}, \bibinfo{number}{6} (\bibinfo{year}{2021}), \bibinfo{pages}{2889--2904}.
\newblock


\bibitem[Cesa-Bianchi and Lugosi(2006)]%
        {cesa2006prediction}
\bibfield{author}{\bibinfo{person}{Nicolo Cesa-Bianchi} {and} \bibinfo{person}{G{\'a}bor Lugosi}.} \bibinfo{year}{2006}\natexlab{}.
\newblock \bibinfo{booktitle}{\emph{Prediction, learning, and games}}.
\newblock \bibinfo{publisher}{Cambridge university press}, \bibinfo{address}{USA}.
\newblock
\showISBNx{0521841089}
\urldef\tempurl%
\url{https://doi.org/10.5555/1137817}
\showDOI{\tempurl}


\bibitem[Chen et~al\mbox{.}(2017)]%
        {chen2017online}
\bibfield{author}{\bibinfo{person}{Tianyi Chen}, \bibinfo{person}{Qing Ling}, {and} \bibinfo{person}{Georgios~B Giannakis}.} \bibinfo{year}{2017}\natexlab{}.
\newblock \showarticletitle{An online convex optimization approach to proactive network resource allocation}.
\newblock \bibinfo{journal}{\emph{IEEE Transactions on Signal Processing}} \bibinfo{volume}{65}, \bibinfo{number}{24} (\bibinfo{year}{2017}), \bibinfo{pages}{6350--6364}.
\newblock
\urldef\tempurl%
\url{https://doi.org/10.1109/TSP.2017.2750109}
\showDOI{\tempurl}


\bibitem[Das(2014)]%
        {das2014online}
\bibfield{author}{\bibinfo{person}{Puja Das}.} \bibinfo{year}{2014}\natexlab{}.
\newblock \emph{\bibinfo{title}{Online convex optimization and its application to online portfolio selection}}.
\newblock \bibinfo{thesistype}{Ph.\,D. Dissertation}. \bibinfo{school}{UNIVERSITY OF MINNESOTA}.
\newblock


\bibitem[Duchi et~al\mbox{.}(2011)]%
        {duchi2011adaptive}
\bibfield{author}{\bibinfo{person}{John Duchi}, \bibinfo{person}{Elad Hazan}, {and} \bibinfo{person}{Yoram Singer}.} \bibinfo{year}{2011}\natexlab{}.
\newblock \showarticletitle{Adaptive subgradient methods for online learning and stochastic optimization}.
\newblock \bibinfo{journal}{\emph{Journal of machine learning research}} \bibinfo{volume}{12}, \bibinfo{number}{7} (\bibinfo{year}{2011}), \bibinfo{pages}{2121–2159}.
\newblock
\urldef\tempurl%
\url{https://doi.org/10.5555/1953048.2021068}
\showDOI{\tempurl}


\bibitem[Duchi et~al\mbox{.}(2010)]%
        {duchi2010composite}
\bibfield{author}{\bibinfo{person}{John~C Duchi}, \bibinfo{person}{Shai Shalev-Shwartz}, \bibinfo{person}{Yoram Singer}, {and} \bibinfo{person}{Ambuj Tewari}.} \bibinfo{year}{2010}\natexlab{}.
\newblock \showarticletitle{Composite objective mirror descent}. In \bibinfo{booktitle}{\emph{The 23rd Conference on Learning Theory}}, Vol.~\bibinfo{volume}{10}. \bibinfo{publisher}{Omnipress}, \bibinfo{pages}{14--26}.
\newblock


\bibitem[Flaxman et~al\mbox{.}(2004)]%
        {flaxman2004online}
\bibfield{author}{\bibinfo{person}{Abraham~D Flaxman}, \bibinfo{person}{Adam~Tauman Kalai}, {and} \bibinfo{person}{H~Brendan McMahan}.} \bibinfo{year}{2004}\natexlab{}.
\newblock \showarticletitle{Online convex optimization in the bandit setting: Gradient descent without a gradient}. In \bibinfo{booktitle}{\emph{Proceedings of the Sixteenth Annual ACM-SIAM Symposium on Discrete Algorithms}}. \bibinfo{publisher}{Society for Industrial and Applied Mathematics}, \bibinfo{address}{USA}, \bibinfo{pages}{385–394}.
\newblock
\urldef\tempurl%
\url{https://doi.org/10.5555/1070432.1070486}
\showDOI{\tempurl}


\bibitem[Frig{\'o} and Kocsis(2021)]%
        {frigo2021online}
\bibfield{author}{\bibinfo{person}{Erzs{\'e}bet Frig{\'o}} {and} \bibinfo{person}{Levente Kocsis}.} \bibinfo{year}{2021}\natexlab{}.
\newblock \showarticletitle{Online convex combination of ranking models}.
\newblock \bibinfo{journal}{\emph{User Modeling and User-Adapted Interaction}} \bibinfo{volume}{32}, \bibinfo{number}{4} (\bibinfo{year}{2021}), \bibinfo{pages}{649–683}.
\newblock
\urldef\tempurl%
\url{https://doi.org/10.1007/s11257-021-09306-7}
\showDOI{\tempurl}


\bibitem[Gordon(2006)]%
        {gordon2006no}
\bibfield{author}{\bibinfo{person}{Geoffrey~J Gordon}.} \bibinfo{year}{2006}\natexlab{}.
\newblock \showarticletitle{No-regret algorithms for online convex programs}. In \bibinfo{booktitle}{\emph{Proceedings of the 19th International Conference on Neural Information Processing Systems}}. \bibinfo{publisher}{MIT Press}, \bibinfo{address}{Cambridge, MA, USA}, \bibinfo{pages}{489–496}.
\newblock
\urldef\tempurl%
\url{https://doi.org/10.5555/2976456.2976518}
\showDOI{\tempurl}


\bibitem[Hazan(2016)]%
        {hazan2016introduction}
\bibfield{author}{\bibinfo{person}{Elad Hazan}.} \bibinfo{year}{2016}\natexlab{}.
\newblock \bibinfo{booktitle}{\emph{Introduction to online convex optimization}}.
\newblock \bibinfo{publisher}{The MIT Press}.
\newblock


\bibitem[Hazan et~al\mbox{.}(2007)]%
        {hazan2007logarithmic}
\bibfield{author}{\bibinfo{person}{Elad Hazan}, \bibinfo{person}{Amit Agarwal}, {and} \bibinfo{person}{Satyen Kale}.} \bibinfo{year}{2007}\natexlab{}.
\newblock \showarticletitle{Logarithmic regret algorithms for online convex optimization}.
\newblock \bibinfo{journal}{\emph{Machine Learning}} \bibinfo{volume}{69}, \bibinfo{number}{2} (\bibinfo{year}{2007}), \bibinfo{pages}{169--192}.
\newblock


\bibitem[H{\'e}liou et~al\mbox{.}(2020)]%
        {heliou2020gradient}
\bibfield{author}{\bibinfo{person}{Am{\'e}lie H{\'e}liou}, \bibinfo{person}{Panayotis Mertikopoulos}, {and} \bibinfo{person}{Zhengyuan Zhou}.} \bibinfo{year}{2020}\natexlab{}.
\newblock \showarticletitle{Gradient-free online learning in continuous games with delayed rewards}. In \bibinfo{booktitle}{\emph{Proceedings of the 37th International Conference on Machine Learning}}, Vol.~\bibinfo{volume}{119}. \bibinfo{publisher}{PMLR}, \bibinfo{pages}{4172--4181}.
\newblock


\bibitem[Joulani et~al\mbox{.}(2013)]%
        {joulani2013online}
\bibfield{author}{\bibinfo{person}{Pooria Joulani}, \bibinfo{person}{Andras Gyorgy}, {and} \bibinfo{person}{Csaba Szepesv{\'a}ri}.} \bibinfo{year}{2013}\natexlab{}.
\newblock \showarticletitle{Online learning under delayed feedback}. In \bibinfo{booktitle}{\emph{Proceedings of the 30th International Conference on International Conference on Machine Learning - Volume 28}} (Atlanta, GA, USA). \bibinfo{publisher}{JMLR.org}, \bibinfo{pages}{1453--1461}.
\newblock
\urldef\tempurl%
\url{https://doi.org/10.5555/3042817.3043099}
\showDOI{\tempurl}


\bibitem[Kakade et~al\mbox{.}(2012)]%
        {kakade2012regularization}
\bibfield{author}{\bibinfo{person}{Sham~M Kakade}, \bibinfo{person}{Shai Shalev-Shwartz}, {and} \bibinfo{person}{Ambuj Tewari}.} \bibinfo{year}{2012}\natexlab{}.
\newblock \showarticletitle{Regularization techniques for learning with matrices}.
\newblock \bibinfo{journal}{\emph{The Journal of Machine Learning Research}} \bibinfo{volume}{13}, \bibinfo{number}{1} (\bibinfo{year}{2012}), \bibinfo{pages}{1865--1890}.
\newblock


\bibitem[Li et~al\mbox{.}(2019)]%
        {li2019bandit}
\bibfield{author}{\bibinfo{person}{Bingcong Li}, \bibinfo{person}{Tianyi Chen}, {and} \bibinfo{person}{Georgios~B Giannakis}.} \bibinfo{year}{2019}\natexlab{}.
\newblock \showarticletitle{Bandit online learning with unknown delays}. In \bibinfo{booktitle}{\emph{The 22nd International Conference on Artificial Intelligence and Statistics}}, Vol.~\bibinfo{volume}{89}. \bibinfo{publisher}{PMLR}, \bibinfo{pages}{993--1002}.
\newblock


\bibitem[Lin et~al\mbox{.}(2020)]%
        {lin2020novel}
\bibfield{author}{\bibinfo{person}{Jinjiao Lin}, \bibinfo{person}{Yibin Li}, {and} \bibinfo{person}{Jian Lian}.} \bibinfo{year}{2020}\natexlab{}.
\newblock \showarticletitle{A novel recommendation system via L0-regularized convex optimization}.
\newblock \bibinfo{journal}{\emph{Neural Computing and Applications}}  \bibinfo{volume}{32} (\bibinfo{year}{2020}), \bibinfo{pages}{1649--1663}.
\newblock
\urldef\tempurl%
\url{https://doi.org/10.1007/s00521-019-04213-w}
\showDOI{\tempurl}


\bibitem[McMahan and Streeter(2014)]%
        {mcmahan2014delay}
\bibfield{author}{\bibinfo{person}{Brendan McMahan} {and} \bibinfo{person}{Matthew Streeter}.} \bibinfo{year}{2014}\natexlab{}.
\newblock \showarticletitle{Delay-tolerant algorithms for asynchronous distributed online learning}. In \bibinfo{booktitle}{\emph{Proc. 28th Conference on Neural Information Processing Systems (NeurIPS)}}, Vol.~\bibinfo{volume}{27}. \bibinfo{publisher}{MIT Press}, \bibinfo{address}{Cambridge, MA, USA}, \bibinfo{pages}{2915–2923}.
\newblock
\urldef\tempurl%
\url{https://doi.org/10.5555/2969033.2969152}
\showDOI{\tempurl}


\bibitem[McMahan(2017)]%
        {mcmahan2017survey}
\bibfield{author}{\bibinfo{person}{H~Brendan McMahan}.} \bibinfo{year}{2017}\natexlab{}.
\newblock \showarticletitle{A survey of algorithms and analysis for adaptive online learning}.
\newblock \bibinfo{journal}{\emph{The Journal of Machine Learning Research}} \bibinfo{volume}{18}, \bibinfo{number}{1} (\bibinfo{year}{2017}), \bibinfo{pages}{3117–3166}.
\newblock
\urldef\tempurl%
\url{https://doi.org/10.5555/3122009.3176834}
\showDOI{\tempurl}


\bibitem[Quanrud and Khashabi(2015)]%
        {quanrud2015online}
\bibfield{author}{\bibinfo{person}{Kent Quanrud} {and} \bibinfo{person}{Daniel Khashabi}.} \bibinfo{year}{2015}\natexlab{}.
\newblock \showarticletitle{Online learning with adversarial delays}. In \bibinfo{booktitle}{\emph{Proc. 29th Conference on Neural Information Processing Systems (NeurIPS)}}, Vol.~\bibinfo{volume}{28}. \bibinfo{publisher}{MIT Press}, \bibinfo{address}{Cambridge, MA, USA}, \bibinfo{pages}{1270–1278}.
\newblock
\urldef\tempurl%
\url{https://doi.org/10.5555/2969239.2969381}
\showDOI{\tempurl}


\bibitem[Shalev{-}Shwartz(2007)]%
        {Shalev-Shwartz2007}
\bibfield{author}{\bibinfo{person}{Shai Shalev{-}Shwartz}.} \bibinfo{year}{2007}\natexlab{}.
\newblock \emph{\bibinfo{title}{Online learning: theory, algorithms and applications}}.
\newblock \bibinfo{thesistype}{Ph.\,D. Dissertation}. \bibinfo{school}{Hebrew University of Jerusalem, Israel}.
\newblock


\bibitem[Shalev-Shwartz(2012)]%
        {shalev2012online}
\bibfield{author}{\bibinfo{person}{Shai Shalev-Shwartz}.} \bibinfo{year}{2012}\natexlab{}.
\newblock \showarticletitle{Online learning and online convex optimization}.
\newblock \bibinfo{journal}{\emph{Foundations and Trends{\textregistered} in Machine Learning}} \bibinfo{volume}{4}, \bibinfo{number}{2} (\bibinfo{year}{2012}), \bibinfo{pages}{107--194}.
\newblock


\bibitem[Wan et~al\mbox{.}(2022)]%
        {wan2022online}
\bibfield{author}{\bibinfo{person}{Yuanyu Wan}, \bibinfo{person}{Wei-Wei Tu}, {and} \bibinfo{person}{Lijun Zhang}.} \bibinfo{year}{2022}\natexlab{}.
\newblock \showarticletitle{Online strongly convex optimization with unknown delays}.
\newblock \bibinfo{journal}{\emph{Machine Learning}} \bibinfo{volume}{111}, \bibinfo{number}{3} (\bibinfo{year}{2022}), \bibinfo{pages}{871--893}.
\newblock


\bibitem[Weinberger and Ordentlich(2002)]%
        {weinberger2002delayed}
\bibfield{author}{\bibinfo{person}{Marcelo~J Weinberger} {and} \bibinfo{person}{Erik Ordentlich}.} \bibinfo{year}{2002}\natexlab{}.
\newblock \showarticletitle{On delayed prediction of individual sequences}.
\newblock \bibinfo{journal}{\emph{IEEE Transactions on Information Theory}} \bibinfo{volume}{48}, \bibinfo{number}{7} (\bibinfo{year}{2002}), \bibinfo{pages}{1959--1976}.
\newblock


\bibitem[Zhou et~al\mbox{.}(2020)]%
        {zhou2020regret}
\bibfield{author}{\bibinfo{person}{Yihan Zhou}, \bibinfo{person}{Victor Sanches~Portella}, \bibinfo{person}{Mark Schmidt}, {and} \bibinfo{person}{Nicholas Harvey}.} \bibinfo{year}{2020}\natexlab{}.
\newblock \showarticletitle{Regret bounds without Lipschitz continuity: Online learning with relative-Lipschitz losses}.
\newblock \bibinfo{journal}{\emph{Advances in Neural Information Processing Systems}}  \bibinfo{volume}{33} (\bibinfo{year}{2020}), \bibinfo{pages}{15823--15833}.
\newblock


\bibitem[Zinkevich(2003)]%
        {zinkevich2003online}
\bibfield{author}{\bibinfo{person}{Martin Zinkevich}.} \bibinfo{year}{2003}\natexlab{}.
\newblock \showarticletitle{Online convex programming and generalized infinitesimal gradient ascent}. In \bibinfo{booktitle}{\emph{Proceedings of the Twentieth International Conference on International Conference on Machine Learning}}. \bibinfo{publisher}{AAAI Press}, \bibinfo{address}{Washington, DC, USA}, \bibinfo{pages}{928–935}.
\newblock
\urldef\tempurl%
\url{https://doi.org/10.5555/3041838.3041955}
\showDOI{\tempurl}


\bibitem[Zinkevich et~al\mbox{.}(2009)]%
        {langford2009slow}
\bibfield{author}{\bibinfo{person}{Martin Zinkevich}, \bibinfo{person}{John Langford}, {and} \bibinfo{person}{Alex Smola}.} \bibinfo{year}{2009}\natexlab{}.
\newblock \showarticletitle{Slow learners are fast}. In \bibinfo{booktitle}{\emph{Proceedings of the 22nd International Conference on Neural Information Processing Systems}}, Vol.~\bibinfo{volume}{22}. \bibinfo{publisher}{Curran Associates Inc.}, \bibinfo{address}{Red Hook, NY, USA}, \bibinfo{pages}{2331–2339}.
\newblock
\urldef\tempurl%
\url{https://doi.org/10.5555/2984093.2984354}
\showDOI{\tempurl}


\end{thebibliography}

\clearpage
\appendix
\onecolumn

\section{Numerical Simulations}
\subsection{Complexity}
The decision update of DOGD~\cite{quanrud2015online} and DOGD-SC~\cite{wan2022online} is based on the projection to the $n$-dimensional Euclidean norm. The complexity incurred by the gradient summation and Euclidean projection is $\mathcal{O}(n T)$.
The computational complexity of MD-based and FTRL-based algorithms varies with the problem and regularization term. The role of the approximate solution is to enable our updates to fall within an acceptable margin of error, thereby reducing overall complexity. The update rules of FTRL-based algorithm seem more computationally intensive than that MD-based algorithm, requiring optimization at each step. However, applying FTRL-based algorithm to linearized losses achieves similar bounds with comparable complexity to MD-based algorithms.
In Example 1 of MD-based algorithms (i.e., when $\psi(\mathbf{x})=\frac{1}{2}\|\mathbf{x}\|_2^2$ ), our algorithm is equivalent to the classical Euclidean algorithm.
In Example 2 of MD-based algorithms (i.e., when $\psi(\mathbf{x})=\sum_{i=1}^n x^{i} \ln x^{i}+\ln n$ ), the computational complexity is $\mathcal{O}(nT)$ and the decision update is as the Exponentiated Gradient approach as
\begin{align*}
{x}_{t+1}^i = \frac{x_t^i\exp(-\eta_t \sum_{k\in\mathcal{F}_t}g_k^i)}{\sum_{j=1}^n x_t^j\exp(-\eta_t \sum_{k\in\mathcal{F}_t}g_k^j)}.
\end{align*}
In Example 3 of MD-based algorithms (i.e., when $\psi(\mathbf{x})=\frac{1}{2}\|\mathbf{x}\|_p^2$ ), the computational complexity is $\mathcal{O}(n T)$ due to the following decision update (here $\frac{1}{p}+\frac{1}{q}=1$):
$$
\begin{array}{l}
y_{t+1}^i=\operatorname{sign}\left(x_t^i\right)\left|x_t^i\right|^{p-1}\left\|x_t\right\|_p^{2 / p-1}-\eta_t \sum_{k \in \mathcal{F}_t} g_k^i, \\
x_{t+1}^i=\operatorname{sign}\left(y_{t+1}^i\right)\left|y_{t+1}^i\right|^{q-1}\left\|y_{t+1}\right\|_q^{2 / q-1}.
\end{array}
$$


\subsection{Experimental Settings}
We first consider a classification problem defined as follows:
\begin{align*}
\underset{\bm{x}\in\mathcal{X}}{\min} \sum_{t=1}^T  \left[\log (1+\exp(-y_{t} \langle \bm{x},\bm{b}_{t}\rangle))  \right],
\end{align*}
where each entry of the input vector $\bm{b}_{t}$ is uniformly generated from the interval $(-1,1)$, and the response is determined by
\begin{align*}
y_{t}=\left\{\begin{aligned}
1, \quad & \frac{1}{1+\exp \left(-\langle \bm{x}^*, \bm{b}_{t}\rangle+\omega\right)} \geq 0.5; \\
-1, \quad & \frac{1}{1+\exp \left(-\langle  \bm{x}^*,\bm{b}_{t}\rangle+\omega\right)}<0.5.
\end{aligned}\right.
\end{align*}
We set $[\bm{x}^*]_i = 1$ for $1\leq i \leq \lfloor\frac{n}{2} \rfloor$ and 0 otherwise. The random perturbation $\omega$ is drawn from a normal distribution $N(0,1)$. 
The sequential decision ${\bm x}_{t}$ is obtained by adding noise to the optimization solution ${\bm x}_{t}^*$, specifically ${\bm x}_{t} = {\bm x}_{t}^* + \rho_t {\bm 1}$, where $\rho_t= \frac{C}{{t}^{3/2}}$ with $C \geq 0$.

Next, we consider a linear regression problem:
\begin{align*}
\underset{\bm{x}\in\mathcal{X}}{\min} \sum_{t=1}^T \left[\frac{1}{2}(y_{t} - \langle \bm{b}_{t},\bm{x}\rangle )^2 \right],
\end{align*}
where each component of the input vector $\bm{b}_{t}$ is uniformly sampled from the interval $(-1,1)$. And the corresponding response is defined as
\begin{align*}
y_{t} = \langle \bm{b}_{t}, \bm{x}^* \rangle + \omega,
\end{align*}
where $[\bm{x}^*]_i = 1$ for $1\leq i \leq \lfloor\frac{n}{2} \rfloor$ and 0 otherwise. 
The random perturbation $\omega$ is drawn from a  normal distribution $N(0,1)$.
Similarly, we set the decision ${\bm x}_{t}$ by adding noise to the optimization solution ${\bm x}_{t}^*$, that is, ${\bm x}_{t} = {\bm x}_{t}^* + \rho_t {\bm 1}$. Specifically we make $\rho_t= \frac{C}{{t}^{3}}$ with $C \geq 0$.

We utilize random delays between $1$ and $d$. 
We conducted regression and classification tasks on synthetic and real-world datasets, applying results to Example~\ref{example_2} and Example~\ref{example_3}. Synthetic datasets have a dimensionality of $n=100$, while real-world datasets include ``german'' and ``triazines''~\cite{asuncion2007uci}.

\subsection{Performance Comparison with Baselines}
We first investigate the performance comparison between our proposed approach and relevant methodologies within Example~\ref{example_2} and Example~\ref{example_3}.
Setting $C = 1$ and $d=10$, we present the time-averaged regret of our algorithm against the number of iterations.
As a baseline, we include the regret plots of DOGD algorithm~\cite{quanrud2015online} and DOGD-SC algorithm~\cite{wan2022online}.
The time average regret (i.e., $\text{regret}(t)/t$) over the iterations are depicted in Fig~\ref{fig:compare_classification} and Fig~\ref{fig:compare_regression}, illustrate that the regret comparison in classification task and regression task, respectively.
These results show that our proposed algorithms outperform the baseline methods in both Example~\ref{example_2} and Example~\ref{example_3}.

\begin{figure}[htbp]
  \centering
  \subfigure[Example~\ref{example_2}]{
    \includegraphics[width=0.45\linewidth]{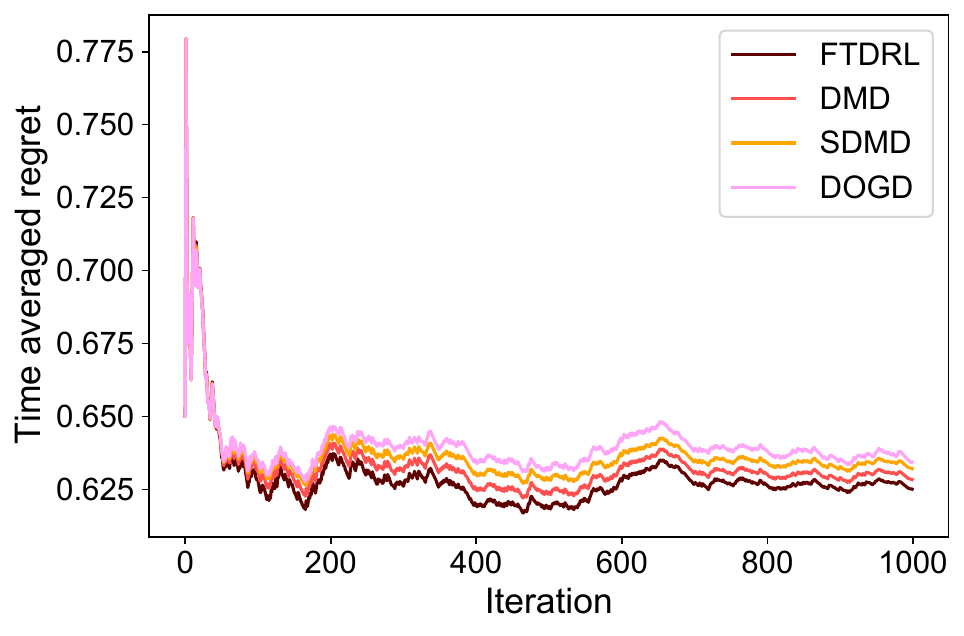}
  }
  \hfill
  \subfigure[Example~\ref{example_3}]{
    \includegraphics[width=0.45\linewidth]{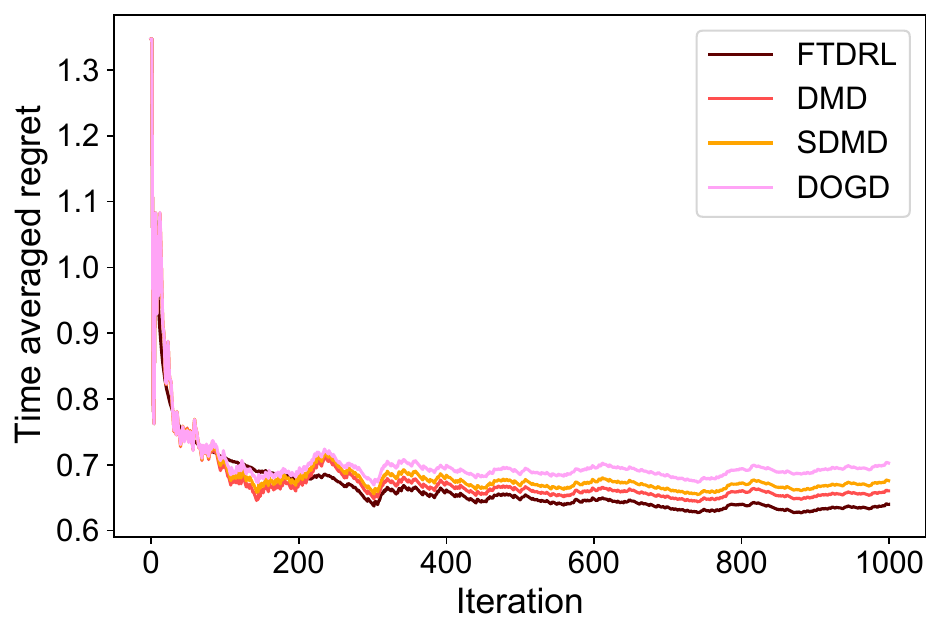}
  }
  \caption{Comparison with Baselines in Classification Task}
  \label{fig:compare_classification}
\end{figure}

\begin{figure}[htbp]
  \centering
  \subfigure[Example~\ref{example_2}]{
    \includegraphics[width=0.45\linewidth]{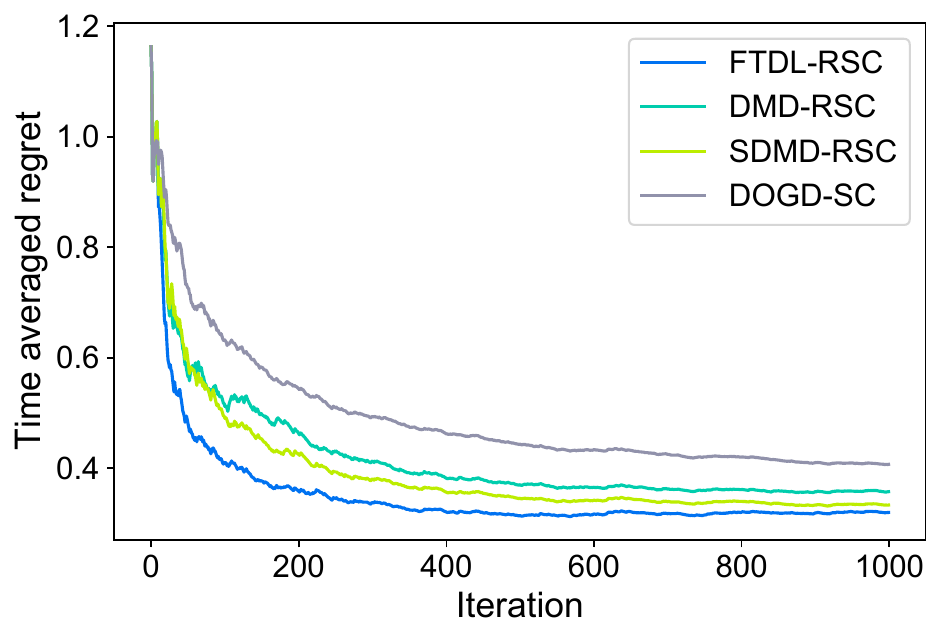}
  }
  \hfill
  \subfigure[Example~\ref{example_3}]{
    \includegraphics[width=0.45\linewidth]{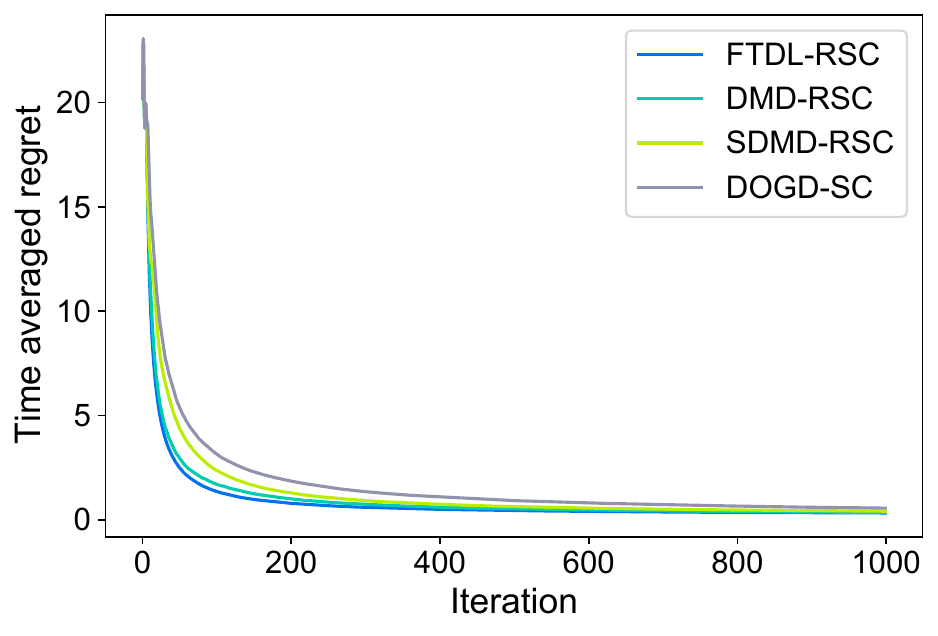}
  }
  \caption{Comparison with Baselines in Regression Task}
  \label{fig:compare_regression}
\end{figure}

\subsection{Impact on Different Delayed Periods}

Next, we examine the impact of the delayed periods $d$ in the classification task and regression task. We consider four distinct choices for the delayed period, namely, $d=10$, $d=20$, $d=50$, and $d=100$. Fig~\ref{fig_delay_class} illustrates the final time-averaged regret (i.e., $\text{regret}(T)/T$) across different settings of the delayed period $d$ for Example~\ref{example_3} in the classification task. Additionally, Fig~\ref{fig_delay_regression} showcases the final time-averaged regret with different delayed settings for Example~\ref{example_2} in the regression task. Both Fig~\ref{fig_delay_class} and Fig~\ref{fig_delay_regression} demonstrate that our proposed algorithms attain superior optimality with smaller delayed period $d$ in both classification task and regression task.

\begin{figure}[htbp]
  \centering
  \subfigure[Classification Task]{
    \includegraphics[width=0.45\linewidth]{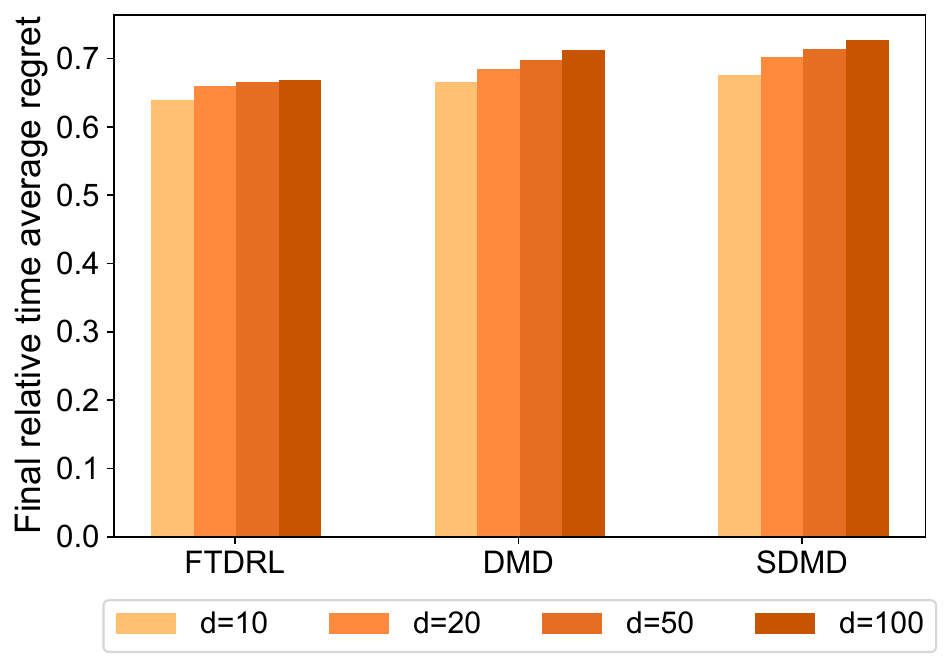}
    \label{fig_delay_class}
  }
  \hfill
  \subfigure[Regression Task]{
    \includegraphics[width=0.45\linewidth]{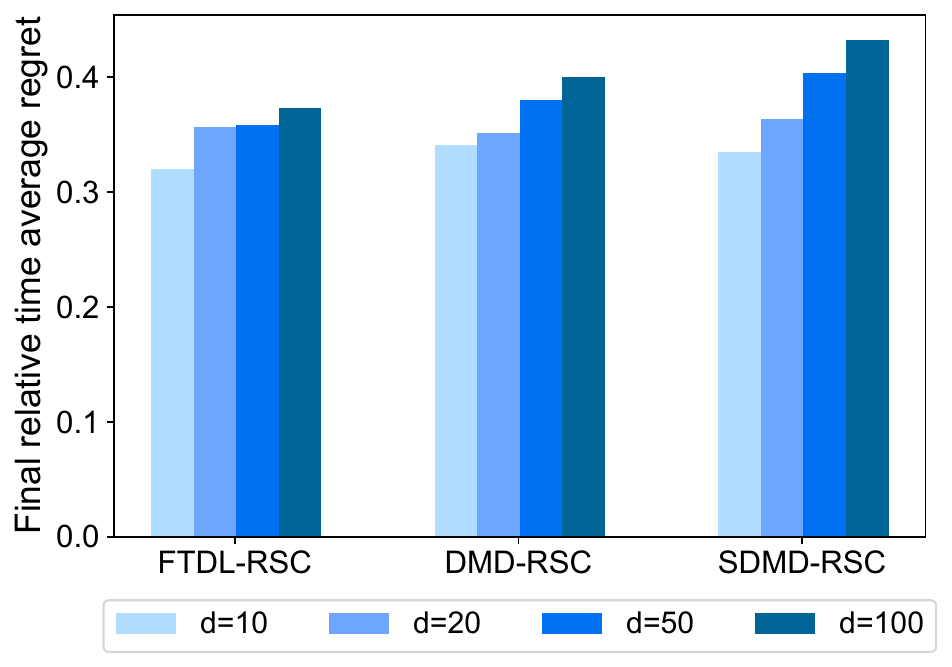}
    \label{fig_delay_regression}
  }
  \caption{Impact on Different Delayed Periods}
  \label{fig:compare_delay}
\end{figure}

\subsection{Impact on Different Approximate Errors}
Finally, we examine the impact of the approximate error $\rho_t$ in the classification task and regression task. We vary $\rho_t$ across our simulations with four different values of $C$: $C=0$, $C=0.1$, $C=0.5$, and $C=1$.
Fig~\ref{fig_error_class} illustrates the final time-averaged regret (i.e., $\text{regret}(T)/T$) across different $C$ for Example~\ref{example_3} in the classification task. Additionally, Fig~\ref{fig_error_regression} showcases the final time-averaged regret with different $C$ for Example~\ref{example_2} in the regression task. Both Fig~\ref{fig_error_class} and Fig~\ref{fig_error_regression} demonstrate that our proposed algorithms consistently showcase superior optimality with smaller values of $C$ for both classification task and regression task.

\begin{figure}[htbp]
  \centering
  \subfigure[Classification Task]{
    \includegraphics[width=0.45\linewidth]{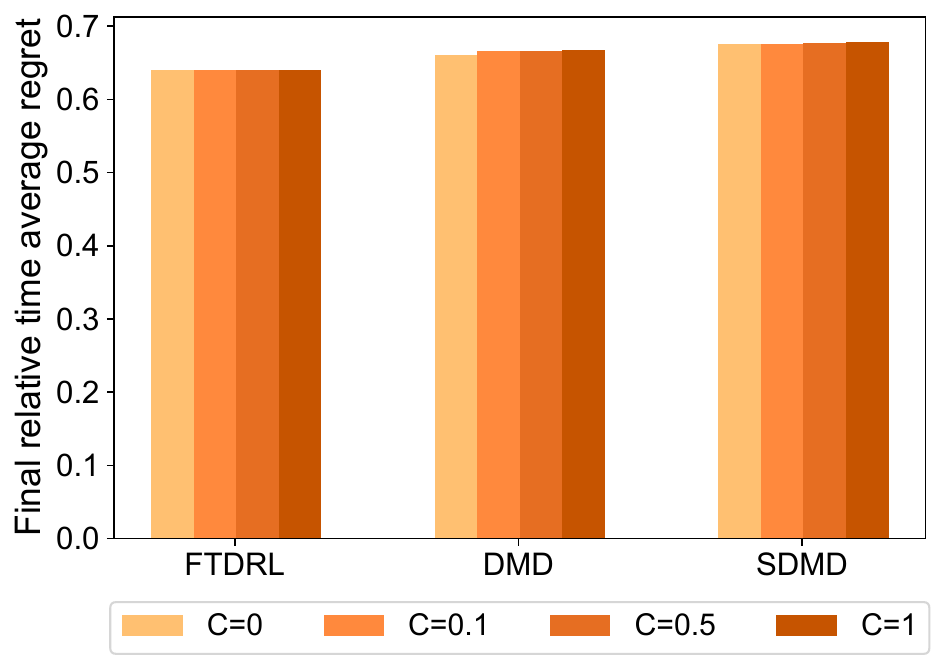}
    \label{fig_error_class}
  }
  \hfill
  \subfigure[Regression Task]{
    \includegraphics[width=0.45\linewidth]{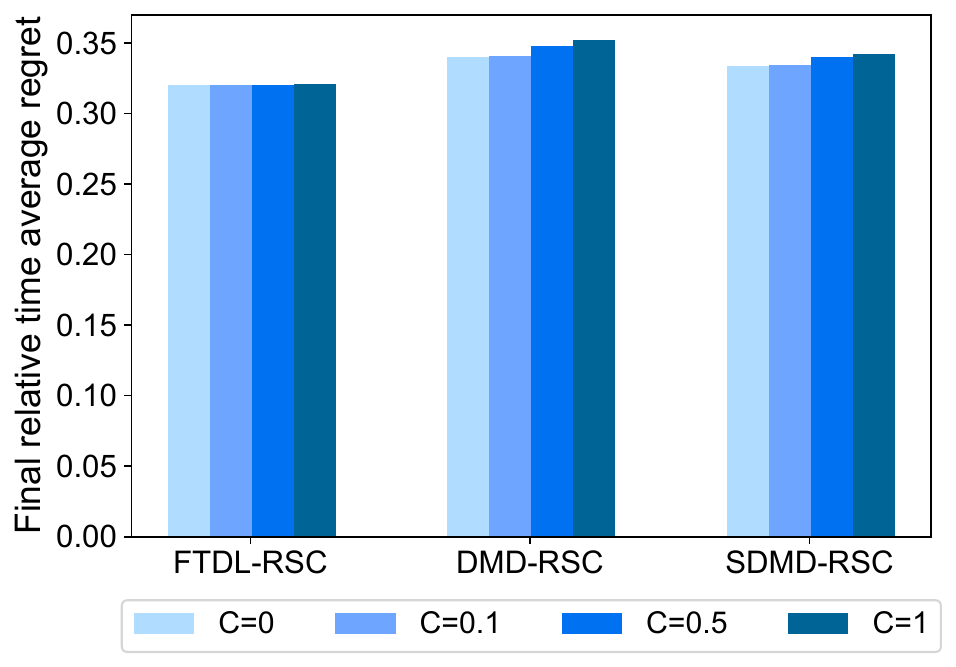}
    \label{fig_error_regression}
  }
  \caption{Impact on Different Approximate Errors}
  \label{fig:compare_error}
\end{figure}

\section{Theoretical Preparation}
\subsection{Proof of Lemma~\ref{lemma_d_t}}

Consider a specific term $\sum_{\tau=k}^{t-1}\vert\mathcal{F}_\tau\vert +\vert \mathcal{F}_{t,k}\vert$ within the sum. This term calculates the count of feedback instances, denoted by $\Xi_k$, that are withheld while other feedback is applied during iterations $k$ to $t\geq k$.
Let's fix two iterations, $k$ and $t$, and consider an intermediate iteration $\tau \in \{k,\ldots,t\}$. If $\tau < t$, we fix $s \in \mathcal{F}_\tau$, and if $\tau = t$, we fix $s \in \mathcal{F}_{t,k}$. The feedback from iteration $s$ is applied during an iteration $\tau$ between $k$ and $t$. We divide our analysis into two scenarios: when $s \leq k$ and when $s > k$.
In the second scenario, $s > k$, there are at most $d_k$ instances since $s$ must lie between $k+1$ and $t$. We can assign the first case to $d_s$. In the first scenario, the feedback from iteration $s$ appears only after $k$. We observe that for a fixed $s$, the number of indices $k$ such that $s < k \leq d_s + s \leq d_k + k$ is at most $d_s$. In other words, all instances of the second case for a fixed $s$ can be attributed to $d_s$.
Between the two cases, we have 
\begin{align*}
\sum_{t=1}^T \sum_{k\in \mathcal{F}_t}  \left( \sum_{\tau=k}^{t-1} \vert\mathcal{F}_\tau\vert  + \vert \mathcal{F}_{t,k}\vert \right) \leq \sum_{t=1}^T2d_t \leq 2D_T.
\end{align*}

\section{Proof of Theorem~\ref{th:fl_gc}}

\subsection{Proof of Lemma~\ref{lemma_fl_gc}}
Call back the definition $\Phi_{t,i_k}({\bm x}) = \sum_{\tau=1}^{t-1} \sum_{s\in \mathcal{F}_\tau} f_s({\bm x}) + \sum_{s\in \mathcal{F}_{t,k}} f_s({\bm x})+ f_k({\bm x}) + \frac{1}{\eta} \psi({\bm x})$ and $\Phi_0({\bm x}) = \frac{1}{\eta} \psi({\bm x})$.

For the normal term of Eq.~\eqref{eqq:T1_fl_gc_regret}, we have
\begin{equation}
\begin{aligned} \label{eqq:T1_fl_gc_normal_1}
\sum_{t=1}^T \sum_{k\in \mathcal{F}_t} [f_k({\bm x}_{t,i_k}) - f_k({\bm x}^*)] = & \sum_{t=1}^T \sum_{k\in \mathcal{F}_t} [\Phi_{t,i_k}({\bm x}_{t,i_k}) - \Phi_{t,i_{k}-1}({\bm x}_{t,i_k})] - \sum_{t=1}^T \sum_{k\in \mathcal{F}_t} f_k({\bm x}^*).
\end{aligned}    
\end{equation}

Due to the fact that ${\bm y}^*_{T+1}$ (i.e., ${\bm y}^*_{T,\vert\mathcal{F}_T\vert}$) is the minimizer of $\Phi_{T,\vert\mathcal{F}_T\vert-1}({\bm x})$ over ${\bm x}\in \mathcal{X}$, thus
\begin{equation}
\begin{aligned} \label{eqq:T1_fl_gc_normal_2}
\Phi_{T,\vert\mathcal{F}_T\vert-1}({\bm y}^*_{T+1}) \leq \sum_{t=1}^T \sum_{k\in \mathcal{F}_t} f_k({\bm x}^*) + \frac{1}{\eta} \psi({\bm x}^*).
\end{aligned}   
\end{equation}
Combining with Eq.~\eqref{eqq:T1_fl_gc_normal_1} and Eq.~\eqref{eqq:T1_fl_gc_normal_2} gives
\begin{equation}
\begin{aligned} \label{eqq:T1_fl_gc_normal_3}
\sum_{t=1}^T \sum_{k\in \mathcal{F}_t} [f_k({\bm x}_{t,i_k}) - f_k({\bm x}^*)]\leq & \sum_{t=1}^T \sum_{k\in \mathcal{F}_t} [\Phi_{t,i_k}({\bm x}_{t,i_k}) - \Phi_{t,i_{k}-1}({\bm x}_{t,i_k})] - \Phi_{T,\vert\mathcal{F}_T\vert-1}({\bm y}^*_{T+1}) + \frac{1}{\eta}\psi({\bm x}^*) \\
=& \sum_{t=1}^T \sum_{k\in\mathcal{F}_t} [\Phi_{t,i_k}({\bm y}^*_{t,i_k}) - \Phi_{t,i_{k}-1}({\bm y}^*_{t,i_k})]- \Phi_{T,\vert\mathcal{F}_T\vert-1}({\bm y}^*_{T+1}) + \frac{1}{\eta}\psi({\bm x}^*) \\
&+  \sum_{t=1}^T \sum_{k\in\mathcal{F}_t} [\Phi_{t,i_k}({\bm x}_{t,i_k})- \Phi_{t,i_{k}-1}({\bm x}_{t,i_k}) - \Phi_{t,i_{k}}({\bm y}^*_{t,i_k})+\Phi_{t,i_k-1}({\bm y}^*_{t,i_k}) ]\\
=& \sum_{t=1}^T \sum_{k\in\mathcal{F}_t} [\Phi_{t,i_k}({\bm y}^*_{t,i_k}) - \Phi_{t,i_{k}-1}({\bm y}^*_{t,i_k})]- \Phi_{T,\vert\mathcal{F}_T\vert-1}({\bm y}^*_{T+1}) + \frac{1}{\eta}\psi({\bm x}^*) \\
&+  \sum_{t=1}^T \sum_{k\in\mathcal{F}_t} [f_k({\bm x}_{t,i_k})-f_k({\bm y}_{t,i_k}^*) ]\\
\leq& \sum_{t=1}^T \sum_{k\in \mathcal{F}_t}  [\Phi_{t,i_k}({\bm y}^*_{t,i_k}) - \Phi_{t,i_{k}}({\bm y}^*_{t,i_k+1})]   + \frac{1}{\eta} [\psi({\bm x}^*)-\psi({\bm x}_1)]\\
&+\sum_{t=1}^T \sum_{k\in\mathcal{F}_t} G_\star \Vert \bm{x}_{t,i_k}-\bm{y}_{t,i_k}^* \Vert.
\end{aligned} 
\end{equation}
For the first term on the R.H.S of the above formula, we have
\begin{equation}
\begin{aligned} \label{eqq:T1_fl_gc_normal_4}
\Phi_{t,i_k}({\bm y}^*_{t,i_k}) - \Phi_{t,i_k}({\bm y}^*_{t,i_k+1}) =& \Phi_{t,i_{k}-1}({\bm y}^*_{t,i_k}) - \Phi_{t,i_k-1}({\bm y}^*_{t,i_k+1}) + f_k({\bm y}^*_{t,i_k})-f_k({\bm y}^*_{t,i_{k}+1}) \\
\leq & \langle \nabla \Phi_{t,i_k-1}({\bm y}^*_{t,i_k}),{\bm y}^*_{t,i_k}-{\bm y}^*_{t,i_{k}+1}\rangle - \frac{\sigma}{2\eta} \Vert {\bm y}^*_{t,i_k}-{\bm y}^*_{t,i_k+1} \Vert^2 \\
&+ \langle \nabla f_k({\bm y}^*_{t,i_k}),{\bm y}^*_{t,i_k}-{\bm y}^*_{t,i_k+1} \rangle\\
\leq & \frac{\eta}{2\sigma} \Vert \nabla f_k({\bm y}^*_{t,i_k}) \Vert^2_\star + \frac{\sigma}{2\eta} \Vert {\bm y}^*_{t,i_k}-{\bm y}^*_{t,i_k+1} \Vert^2 - \frac{\sigma}{2\eta} \Vert {\bm y}^*_{t,i_k}-{\bm y}^*_{t,i_k+1} \Vert^2\\
\leq & \frac{\eta}{2\sigma} G^2_\star.
\end{aligned}
\end{equation}
The first inequality is due to the fact that  $\psi$ is $\sigma$-strongly convex with respect to a norm $\Vert\cdot\Vert$.
The second inequality is due to the optimality condition of the update rule (i.e., 
${\bm x}^*=\arg\min_{{\bm x}\in \mathcal{X}} f({\bm x})$ iff $\langle \nabla f({\bm x}^*),{\bm y}-{\bm x}^* \rangle \geq 0,\forall {\bm y}\in \mathcal{X}$) and $\langle {\bm u},{\bm v}\rangle\leq \frac{{\bm u}^2}{2\alpha}+\frac{\alpha {\bm v}^2}{2}$ for any ${\bm u},{\bm v}\in \mathcal{X}, \alpha>0$.

Considering the last term of Eq.~\eqref{eqq:T1_fl_gc_normal_3} which means the error incurred by the approximate solution, we have
\begin{equation}
 \begin{aligned} \label{T1:app_1}
\Phi_{t,i_k}(\bm{x}_{t,i_k+1}) \leq \Phi_{t,i_k}({\bm y}^*_{t,i_k+1}) + \rho_{t,i_k}.
\end{aligned}   
\end{equation}

Meanwhile, we utilize the fact that $\Phi_{t,i_k}$ is $\frac{\sigma}{\eta}$-strongly convex with respect to norm $\Vert\cdot\Vert$ to obtain
\begin{equation}
\begin{aligned} \label{T1:app_2}
\Phi_{t,i_k}(\bm{x}_{t,i_k+1}) -\Phi_{t,i_k}({\bm y}^*_{t,i_k+1})  \geq &\langle \nabla \Phi_{t,i_k}({\bm y}^*_{t,i_k+1}), \bm{x}_{t,i_k+1}-{\bm y}^*_{t,i_k+1} \rangle + \frac{\sigma}{2\eta} \Vert \bm{x}_{t,i_k+1}-\bm{y}_{t,i_k+1}^* \Vert^2\\
\geq & \frac{\sigma}{2\eta} \Vert \bm{x}_{t,i_k+1}-\bm{y}_{t,i_k+1}^* \Vert^2.
\end{aligned}
\end{equation}
The last inequality is due to the optimality condition.

Consider $\rho_{t,i_k} = \frac{\eta G^2_\star}{8\sigma}$, combining with Eq.~\eqref{T1:app_1} and Eq.~\eqref{T1:app_2} yields
\begin{equation}
\begin{aligned} \label{T1:app_3}
\Vert\bm{x}_{t,i_k+1}-\bm{y}_{t,i_k+1}^* \Vert \leq \sqrt{\frac{2\eta \rho_{t,i_k}}{\sigma}} = \frac{\eta G_\star}{2\sigma}.
\end{aligned}   
\end{equation}

Substituting Eq.~\eqref{eqq:T1_fl_gc_normal_4} and Eq.~\eqref{T1:app_3} into Eq.~\eqref{eqq:T1_fl_gc_normal_3} yields the result of Lemma~\ref{lemma_fl_gc}.

\subsection{Proof of Lemma~\ref{lemma_fl_gc_adj}}
Due to the fact that $f_t$ is convex and $\psi$ is $\sigma$-strongly convex with respect to norm $\Vert\cdot\Vert$, we have
\begin{equation}
\begin{aligned} \label{eqq:T1_fl_gc_delayed_1}
\Phi_{t,i_k}({\bm y}^*_{t,i_k}) - \Phi_{t,i_k}({\bm y}^*_{t,i_k+1}) \geq &\langle \nabla \Phi_{t,i_k}({\bm y}^*_{t,i_k+1}), {\bm y}^*_{t,i_k}-{\bm y}^*_{t,i_k+1}\rangle + \frac{\sigma}{2\eta} \Vert{\bm y}^*_{t,i_k}-{\bm y}^*_{t,i_k+1} \Vert^2\\
\geq &\frac{\sigma}{2\eta} \Vert{\bm y}^*_{t,i_k}-{\bm y}^*_{t,i_k+1} \Vert^2.
\end{aligned}    
\end{equation}
The last inequality is due to the fact that ${\bm y}^*_{t,i_k+1}$ is the minimizer of $\Phi_{t,i_k}({\bm x})$ over ${\bm x}\in \mathcal{X}$.

Meanwhile, the L.H.S of the above formula is upper bounded by
\begin{equation}
\begin{aligned}
\label{eqq:T1_fl_gc_delayed_2}
\Phi_{t,i_k}({\bm y}^*_{t,i_k}) - \Phi_{t,i_k}({\bm y}^*_{t,i_k+1}) =& \Phi_{t,i_k-1}({\bm y}^*_{t,i_k}) - \Phi_{t,i_k-1}({\bm y}^*_{t,i_k+1}) + f_k({\bm y}^*_{t,i_k}) - f_k({\bm y}^*_{t,i_k+1})\\
\leq & \langle \nabla \Phi_{t,i_k-1}({\bm y}^*_{t,i_k}), {\bm y}^*_{t,i_k}-{\bm y}^*_{t,i_k+1}\rangle-\frac{\sigma}{2\eta} \Vert{\bm y}^*_{t,i_k}-{\bm y}^*_{t,i_k+1} \Vert^2\\
&+G_\star \Vert {\bm y}^*_{t,i_k}-{\bm y}^*_{t,i_k+1} \Vert\\
\leq & G_\star \Vert {\bm y}^*_{t,i_k}-{\bm y}^*_{t,i_k+1} \Vert-\frac{\sigma}{2\eta} \Vert{\bm y}^*_{t,i_k}-{\bm y}^*_{t,i_k+1} \Vert^2.
\end{aligned} 
\end{equation}
The last inequality is due to the fact that ${\bm y}^*_{t,i_k}$ is the minimizer of $\Phi_{t,i_k-1}({\bm x})$ over ${\bm x}\in \mathcal{X}$.

Combining Eq.~\eqref{eqq:T1_fl_gc_delayed_1} and Eq.~\eqref{eqq:T1_fl_gc_delayed_2} gives 
\begin{align*}
\Vert {\bm y}^*_{t,i_k+1}-{\bm y}^*_{t,i_k}\Vert \leq \frac{\eta G_\star}{\sigma}.
\end{align*}

Then, utilizing the result of Eq.~\eqref{T1:app_3}, we have
\begin{align*}
\Vert {\bm x}_{t,i_k+1}-{\bm x}_{t,i_k}\Vert \leq& \Vert {\bm y}^*_{t,i_k+1}-{\bm y}^*_{t,i_k}\Vert+ \Vert {\bm x}_{t,i_k+1}-{\bm y}^*_{t,i_k+1}\Vert+ \Vert {\bm x}_{t,i_k}-{\bm y}^*_{t,i_k}\Vert\\
\leq& \frac{2\eta G_\star}{\sigma}.
\end{align*}

\subsection{Proof of Corollary~\ref{corollary_1}}
\paragraph{Example 1 of FTDRL-GC}
In the Euclidean space, we first preset ${\bm x}_1 = {\bm 0}^n$ and make $\psi({\bm x}) = \frac{1}{2}\Vert{\bm x}\Vert_2^2$. 
Note that the dual norm of $\Vert \cdot \Vert_2$ is itself and $B_\psi({\bm x};{\bm y}) =\frac{1}{2}\Vert {\bm x}-{\bm y}\Vert^2_2$.
Additionally, we assume that $\Vert \nabla f_t({\bm x}) \Vert_2 \leq G_2$ and $\Vert{\bm x} \Vert\leq R_2$ for any ${\bm x}\in \mathcal{X}, t\in [T]$.

Based on the result of Theorem 1, we can get the following regret
\begin{align*}
\operatorname{Reg}_T \leq& \frac{\eta}{\sigma}TG^2_2 + \frac{4\eta}{\sigma}D_T G_2^2 + \frac{R_2^2}{2\eta}\\
= &{\eta}TG^2_2 + {4\eta}D_T G_2^2 + \frac{R_2^2}{2\eta}.
\end{align*}
The last equality is because $\psi({\bm x}) = \frac{1}{2}\Vert{\bm x}\Vert_2^2$ is $1$-strongly convex with respect to norm $\Vert\cdot\Vert_2$.

To minimize above, we make $\eta=\frac{R_2}{G_2}\sqrt{\frac{1}{2T+8D_T}}$, thus
\begin{align*}
\operatorname{Reg}_T \leq G_2R_2 \sqrt{2T+8D_T}.
\end{align*}

\paragraph{Example 2 of FTDRL-GC}
In the probabilistic simplex, the convex set $\mathcal{X}=\{ {\bm x}\in \mathbb{R}^n_{+}: \Vert{\bm x} \Vert_1 =1\}$.
We set $\psi({\bm x}) = \sum_{\mu=1}^n x^{(\mu)} \ln x^{(\mu)} + \ln n$ and the initial decision ${\bm x}_1 = [\frac{1}{n},\ldots,\frac{1}{n}] \in \mathbb{R}_{+}^n$.
The dual norm of $\Vert\cdot\Vert_1$ is $\Vert \cdot \Vert_\infty$.
Note that $\psi({\bm x})$ is 1-strongly convex with respect to norm $\Vert\cdot \Vert_1$. Additionally, we assume that $\Vert \nabla f_t({\bm x}) \Vert_\infty \leq G_\infty$ for any ${\bm x}\in \mathcal{X}, t\in [T]$.

Considering $\psi({\bm x}^*) \leq \ln n$ and $\psi({\bm x}_1)=0$, we have
\begin{align*}
\operatorname{Reg}_T 
\leq &\frac{\eta}{\sigma}TG^2_\infty + \frac{4\eta}{\sigma}D_T G_\infty^2 + \frac{\ln n}{\eta}\\
= &{\eta}TG^2_\infty + {4\eta}D_T G_\infty^2 + \frac{\ln n}{\eta}.
\end{align*}
The last equality is due to the fact that $\psi({\bm x})$ is 1-strongly convex with respect to norm $\Vert\cdot \Vert_1$, i.e., $\sigma=1$.

To minimize above, we set $\eta = \frac{1}{G_\infty} \sqrt{\frac{\ln n}{T+4D_T}}$, then
\begin{align*}
\operatorname{Reg}_T \leq 2G_\infty\sqrt{(T+4D_T) \ln n}.
\end{align*}

\paragraph{Example 3 of FTDRL-GC}
Consider the regularization function $\psi({\bm x}) = \frac{1}{2}\Vert {\bm x}\Vert_p^2$, where $\Vert {\bm x} \Vert_p = \left(\sum_{\mu=1}^n \vert x^{(\mu)} \vert^p\right)^{\frac{1}{p}}$ and $1<p\leq 2$ over $\mathcal{X}\in\mathbb{R}^n$.
We preset ${\bm x}_1 = {\bm 0}^n$.
The dual norm of $\Vert\cdot\Vert_p$ is $\Vert\cdot\Vert_q$, where $\frac{1}{p}+ \frac{1}{q}=1$. We assume that $\Vert \nabla f_t({\bm x})\Vert_q \leq G_q$ and $\Vert{\bm x}\Vert_p\leq R_p$ for any ${\bm x}\in \mathcal{X}, t\in [T]$.

Here the regret bound is
\begin{align*}
\operatorname{Reg}_T \leq& \frac{\eta}{\sigma} T G_q^2 + \frac{4\eta}{\sigma}D_T G_q^2 + \frac{R^2_p}{2\eta}\\
= & \frac{\eta}{p-1} T G_q^2 + \frac{4\eta}{p-1}D_T G^2_q + \frac{R^2_p}{2\eta}.
\end{align*}
The last equality is due to the fact that $\psi({\bm x})$ is $(p-1)$-strongly convex with respect to norm $\Vert\cdot \Vert_p$, i.e., $\sigma=p-1$.

To minimize the regret, we make $\eta = \frac{R_p}{G_q} \sqrt{\frac{p-1}{2T+8D_T}}$, thus
\begin{align*}
\operatorname{Reg}_T \leq R_p G_q \sqrt{\frac{2T+8D_T}{p-1}}.
\end{align*}


\section{Proof of Theorem~\ref{Theorem:FL_SSS}}

\subsection{Proof of Lemma~\ref{lemma:fl_sc_normal}}
In the FTDL-RSC algorithm, the decision is updated by
\begin{align*}
{\bm y}^*_{t+1} = \underset{{\bm x}\in \mathcal{X}}{\arg \min}\left\{\sum_{\tau=1}^t \sum_{k\in \mathcal{F}_\tau} f_k({\bm x})\right\}
\end{align*}
and the approximate solution ${\bm x}_{t+1}$ satisfies
\begin{align*}
\sum_{\tau=1}^t \sum_{k\in \mathcal{F}_\tau} f_k({\bm x}_{t+1}) \leq \sum_{\tau=1}^t \sum_{k\in \mathcal{F}_\tau} f_k({\bm y}_{t+1}^*)+\rho_t.
\end{align*}
For convenience, we make $F_t({\bm x})= \sum_{\tau=1}^t \sum_{k\in \mathcal{F}_\tau} f_k({\bm x})$ and $F_0({\bm x})=0$, due to the strong convexity assumption of the loss function, $F_t({\bm x})$ is $\sum_{\tau=1}^t\vert \mathcal{F}_\tau \vert \gamma$-strongly convex relative to function $\psi$.

For the normal term of Eq.~\eqref{eqq:T2_fl_sc_regret}, we have
\begin{align*}
\sum_{t=1}^T \sum_{k\in \mathcal{F}_t} f_k({\bm x}_t) - \sum_{t=1}^T \sum_{k\in \mathcal{F}_t} f_k({\bm x}^*) =&\sum_{t=1}^T [F_t({\bm x}_t) - F_{t-1}({\bm x}_{t})] - F_T({\bm x}^*).
\end{align*}

Due to the fact that ${\bm y}^*_{T+1}$ is the minimizer of $F_T({\bm x})$, then $F_T({\bm y}^*_{T+1}) \leq F_T({\bm x}^*)$, substituting it into the above formula, we have
\begin{equation}
\begin{aligned} \label{eqq:T2_fl_sc_normal_1}
\sum_{t=1}^T \sum_{k\in \mathcal{F}_t} f_k({\bm x}_t) - \sum_{t=1}^T \sum_{k\in \mathcal{F}_t} f_k({\bm x}^*)  
\leq &\sum_{t=1}^T [F_t({\bm x}_t) - F_{t-1}({\bm x}_{t})] - F_T({\bm y}^*_{T+1}) \\
= &\sum_{t=1}^T [F_t({\bm y}_t^*)-F_{t-1}({\bm y}_t^*)]- F_T({\bm y}^*_{T+1}) \\
&+ \sum_{t=1}^T [F_t({\bm x}_t)-F_{t-1}({\bm x}_t)-F_t({\bm y}_t^*)+F_{t-1}({\bm y}_t^*)]\\
= & \sum_{t=1}^T [F_t({\bm y}^*_t) - F_{t}({\bm y}^*_{t+1})] - F_0({\bm y}^*_{1}) + \sum_{t=1}^T \sum_{k\in\mathcal{F}_t} [f_k({\bm x}_t)-f_k({\bm y}_{t}^*)] \\
\leq&\sum_{t=1}^T [F_t({\bm y}^*_t) - F_{t}({\bm y}^*_{t+1})] + \sum_{t=1}^T \vert \mathcal{F}_t\vert G_\star \Vert{\bm x}_t-{\bm y}_t^* \Vert.
\end{aligned}    
\end{equation}

Considering the optimality condition of the update rule and the relative strong convexity of loss functions, we have
\begin{equation}
\begin{aligned} \label{eqq:T2_fl_sc_normal_2}
F_t({\bm y}^*_t)-F_t({\bm y}^*_{t+1}) =& F_{t-1}({\bm y}^*_t) - F_{t-1}({\bm y}^*_{t+1}) + \sum_{k\in \mathcal{F}_t} [f_k({\bm y}^*_t) - f_k({\bm y}^*_{t+1})]\\
\leq&\langle \nabla F_{t-1}({\bm y}^*_t), {\bm y}^*_{t}-{\bm y}^*_{t+1} \rangle - \sum_{\tau=1}^{t-1} \gamma \vert\mathcal{F}_\tau \vert B_\psi({\bm y}^*_{t+1};{\bm y}^*_t) \\
&+ \vert \mathcal{F}_t \vert G_\star \cdot \Vert{\bm y}^*_t-{\bm y}^*_{t+1}\Vert - \gamma \vert \mathcal{F}_t \vert  B_\psi({\bm y}^*_{t+1};{\bm y}^*_t) \\
\leq & \vert \mathcal{F}_t \vert G_\star \cdot \Vert{\bm y}^*_t-{\bm y}^*_{t+1}\Vert - \sum_{\tau=1}^t \gamma \vert \mathcal{F}_\tau \vert  B_\psi({\bm y}^*_{t+1};{\bm y}^*_t)\\
\leq &\vert \mathcal{F}_t \vert G_\star \cdot \Vert{\bm y}^*_t-{\bm y}^*_{t+1}\Vert  - \sum_{\tau=1}^t \frac{\sigma\gamma \vert\mathcal{F}_\tau\vert}{2} \Vert{\bm y}^*_t-{\bm y}^*_{t+1}\Vert^2\\
\leq & \frac{\vert\mathcal{F}_t\vert^2 G_\star^2}{\sum_{\tau=1}^t 2\sigma \gamma\vert \mathcal{F}_\tau \vert }.
\end{aligned}
\end{equation}
The third inequality is due to the fact that $B_\psi({\bm y}^*_{t+1};{\bm y}^*_t)\geq \frac{\sigma}{2}\Vert{\bm y}^*_t-{\bm y}^*_{t+1}\Vert$ if  $\psi$ is $\sigma$-strongly convex with respect to norm $\Vert\cdot\Vert$.
The last inequality is because $\langle {\bm u},{\bm v}\rangle\leq \frac{\Vert{\bm u}\Vert^2}{2\alpha}+\frac{\alpha \Vert{\bm v}\Vert_\star^2}{2}$ for any ${\bm u},{\bm v}\in \mathcal{X}, \alpha>0$.

Considering the approximate solution gives
\begin{equation}
\begin{aligned} \label{eq:T3_app_1}
F_t({\bm x}_{t+1}) \leq F_t({\bm y}_{t+1}^*) + \rho_t
\end{aligned}   
\end{equation}

Due to $F_t$ is $\sum_{\tau=1}^t \vert \mathcal{F}_\tau\vert \gamma$-strongly convex relative to $\psi$, we obtain
\begin{equation}
\begin{aligned} \label{eq:T3_app_2}
F_t({\bm x}_{t+1}) - F_t({\bm y}_{t+1}^*) \geq& \langle \nabla F_t({\bm y}^*_{t+1}),{\bm x}_{t+1}-{\bm y}_{t+1}^*\rangle + \sum_{\tau=1}^t \vert \mathcal{F}_\tau\vert\gamma  B_\psi({\bm x}_{t+1};{\bm y}_{t+1}^*)\\
\geq& \sum_{\tau=1}^t \frac{ \vert \mathcal{F}_\tau\vert \gamma \sigma }{2} \Vert{\bm x}_{t+1}-{\bm y}_{t+1}^* \Vert^2
\end{aligned}
\end{equation}
The above inequality is due to the optimality condition and $\psi$ is $\sigma$-strongly convex with respect to norm $\Vert\cdot\Vert$.

Combining with Eq.~\eqref{eq:T3_app_1} and Eq.~\eqref{eq:T3_app_2} gives
\begin{equation}
\begin{aligned} \label{eq:T3_app_3}
\Vert {\bm x}_{t+1}-{\bm y}_{t+1}^*\Vert \leq \sqrt{\frac{2\rho_t}{\sum_{\tau=1}^t \vert \mathcal{F}_\tau\vert \gamma \sigma}}.
\end{aligned}   
\end{equation}

Considering $\vert\mathcal{F}_t\vert\leq d, \forall t\in [T]$ and substituting Eq.~\eqref{eqq:T2_fl_sc_normal_2} and Eq.~\eqref{eq:T3_app_3} into Eq.~\eqref{eqq:T2_fl_sc_normal_1} yields
\begin{align*}
\sum_{t=1}^T \sum_{k\in \mathcal{F}_t} f_k({\bm x}_t) - \sum_{t=1}^T \sum_{k\in \mathcal{F}_t} f_k({\bm x}^*)  \leq    \frac{d \vert\mathcal{F}_t\vert G_\star^2}{\sum_{\tau=1}^t 2\sigma \gamma\vert \mathcal{F}_\tau \vert }  + d G_\star \sqrt{\frac{2\rho_t}{\sum_{\tau=1}^t \vert \mathcal{F}_\tau\vert \gamma \sigma}}.
\end{align*}

Note that $\sum_{t=1}^T \frac{\vert\mathcal{F}_t\vert}{\sum_{\tau=1}^t \vert\mathcal{F}_\tau\vert} \leq 1+\ln T$ and make $\rho_t = \frac{\vert\mathcal{F}_t\vert^2 G_\star^2}{8\sum_{\tau=1}^t \vert\mathcal{F}_\tau\vert \gamma \sigma}$.
Then we get the result of Lemma~\ref{lemma:fl_sc_normal} by combining Eq.~\eqref{eqq:T2_fl_sc_normal_1} and Eq.~\eqref{eqq:T2_fl_sc_normal_2}.
\begin{equation}
\begin{aligned} \label{eq:T4_sc_normal_term_final}
\sum_{t=1}^T \sum_{k\in \mathcal{F}_t} [f_k({\bm x}_t)-f_k({\bm x}^*)] \leq &\sum_{t=1}^T \frac{d\vert\mathcal{F}_t\vert G_\star^2}{\sum_{\tau=1}^t 2\sigma \gamma \vert \mathcal{F}_\tau \vert } + \sum_{t=1}^T \frac{d\vert\mathcal{F}_t\vert G_\star^2}{\sum_{\tau=1}^t 2\sigma \gamma \vert \mathcal{F}_\tau \vert }\\
\leq & \frac{ d G^2_\star}{\sigma\gamma} (1+\ln T).
\end{aligned} 
\end{equation}

\subsection{Proof of Lemma~\ref{lemma:fl_sc_delayed}}

Combining with the relative strong convexity and the optimality condition of the update rule, we obtain
\begin{equation}
\begin{aligned} \label{eqq:T2_fl_sc_delayed_1}
F_t({\bm y}^*_t)-F_t({\bm y}^*_{t+1})\geq & \langle \nabla F_t({\bm y}^*_{t+1}), {\bm y}^*_t-{\bm y}^*_{t+1} \rangle+ \sum_{\tau=1}^t \vert \mathcal{F}_\tau \vert \gamma B_\psi({\bm y}^*_{t};{\bm y}_{t+1}^*)\\
\geq &\sum_{\tau=1}^t \vert \mathcal{F}_\tau \vert \gamma B_\psi({\bm y}^*_{t};{\bm y}_{t+1}^*)\\
\geq & \sum_{\tau=1}^t \frac{\vert\mathcal{F}_\tau \vert \gamma \sigma}{2} \Vert {\bm y}^*_t-{\bm y}^*_{t+1} \Vert^2.
\end{aligned}
\end{equation}

Meanwhile, we have
\begin{equation}
\begin{aligned} \label{eqq:T2_fl_sc_delayed_2}
F_t({\bm y}^*_t) - F_t({\bm y}^*_{t+1}) =& F_{t-1}({\bm y}^*_t)-F_{t-1}({\bm y}^*_{t+1}) + \sum_{k\in \mathcal{F}_t} [f_k({\bm y}^*_t) -  f_k({\bm y}^*_{t+1})]\\
\leq & \langle \nabla F_{t-1}({\bm y}^*_t),{\bm y}^*_t-{\bm y}^*_{t+1} \rangle+ \sum_{k\in \mathcal{F}_t} \langle\nabla f_k({\bm y}^*_t),{\bm y}^*_t-{\bm y}^*_{t+1}\rangle\\
&- \sum_{\tau=1}^t \gamma \vert\mathcal{F}_\tau\vert B_\psi({\bm y}^*_{t+1};{\bm y}^*_t)\\
\leq & \vert \mathcal{F}_t \vert G_\star \Vert {\bm y}^*_t-{\bm y}^*_{t+1} \Vert - \sum_{\tau=1}^t \frac{ \gamma \sigma \vert\mathcal{F}_\tau\vert }{2}\Vert{\bm y}^*_t-{\bm y}^*_{t+1}\Vert^2.
\end{aligned}    
\end{equation}
The first inequality is due to the relative strong convexity of loss functions. The last inequality is derived from the strong convexity of $\psi$ and Lipschitz continuity of loss functions.

Combining with Eq.~\eqref{eqq:T2_fl_sc_delayed_1} and Eq.~\eqref{eqq:T2_fl_sc_delayed_2} gives 
\begin{align*}
\Vert{\bm y}^*_{t+1}-{\bm y}^*_t \Vert\leq \frac{\vert\mathcal{F}_t\vert G_\star}{\sum_{\tau=1}^t \vert\mathcal{F}_\tau\vert \gamma \sigma}.
\end{align*}

Utilizing the result of Eq.~\eqref{eq:T3_app_3}, we obtain
\begin{align*}
\Vert {\bm x}_{t+1}-{\bm x}_t\Vert \leq &\Vert {\bm y}^*_{t+1}-{\bm y}^*_t\Vert+ \Vert {\bm x}_{t+1}-{\bm y}_{t+1}^*\Vert+ \Vert {\bm x}_{t}-{\bm y}^*_t\Vert\\
=&\frac{3}{2} \frac{\vert\mathcal{F}_t\vert G_\star}{\sum_{\tau=1}^t \vert\mathcal{F}_\tau\vert \gamma \sigma} + \frac{1}{2} \frac{\vert\mathcal{F}_{t-1}\vert G_\star}{\sum_{\tau=1}^{t-1} \vert\mathcal{F}_\tau\vert \gamma \sigma}.
\end{align*}

\subsection{Proof of Corollary~\ref{corollary_2}}
\paragraph{Example 1 of FTDL-RSC}
In the Euclidean space, we first preset ${\bm x}_1 = {\bm 0}^n$ and make $\psi({\bm x}) = \frac{1}{2}\Vert{\bm x}\Vert_2^2$. 
Note that the dual norm of $\Vert \cdot \Vert_2$ is itself and $B_\psi({\bm x};{\bm y}) =\frac{1}{2}\Vert {\bm x}-{\bm y}\Vert^2_2$.
Additionally, we assume that $\Vert \nabla f_t({\bm x}) \Vert_2 \leq G_2$ for any ${\bm x}\in \mathcal{X}, t\in [T]$.
Specially, $\psi({\bm x}) = \frac{1}{2}\Vert{\bm x}\Vert_2^2$ is $1$-strongly convex with respect to norm $\Vert\cdot\Vert_2$, i.e., $\sigma=1$.

Thus, we have
\begin{align*}
\operatorname{Reg}_T \leq \frac{3dG^2_2}{\gamma}(1+\ln T).
\end{align*}

\paragraph{Example 2 of FTDL-RSC}
In the probabilistic simplex, the convex set $\mathcal{X}=\{ {\bm x}\in \mathbb{R}^n_{+}: \Vert{\bm x} \Vert_1 =1\}$.
We set $\psi({\bm x}) = \sum_{\mu=1}^n x^{(\mu)} \ln x^{(\mu)} + \ln n$ and the initial decision ${\bm x}_1 = [\frac{1}{n},\ldots,\frac{1}{n}] \in \mathbb{R}_{+}^n$.
The dual norm of $\Vert\cdot\Vert_1$ is $\Vert \cdot \Vert_\infty$.
Note that $\psi({\bm x})$ is 1-strongly convex with respect to norm $\Vert\cdot \Vert_1$, i.e, $\sigma=1$. Additionally, we assume that $\Vert \nabla f_t({\bm x}) \Vert_\infty \leq G_\infty$ for any ${\bm x}\in \mathcal{X}, t\in [T]$.

The regret is bounded by
\begin{align*}
\operatorname{Reg}_T 
\leq \frac{3dG_\infty^2}{\gamma}(1+\ln T).
\end{align*}
The last inequality is due to the fact that $\psi({\bm x})$ is 1-strongly convex with respect to norm $\Vert\cdot \Vert_1$, i.e., $\sigma=1$.

\paragraph{Example 3 of FTDL-RSC}
Consider the regularization function $\psi({\bm x}) = \frac{1}{2}\Vert {\bm x}\Vert_p^2$, where $\Vert {\bm x} \Vert_p = \left(\sum_{\mu=1}^n \vert x^{(\mu)} \vert^p\right)^{\frac{1}{p}}$ and $1<p\leq 2$ over $\mathcal{X}\in\mathbb{R}^n$.
We preset ${\bm x}_1 = {\bm 0}^n$.
The dual norm of $\Vert\cdot\Vert_p$ is $\Vert\cdot\Vert_q$, where $\frac{1}{p}+ \frac{1}{q}=1$. We assume that $\Vert \nabla f_t({\bm x})\Vert_q \leq G_q$ and $\Vert{\bm x}\Vert_p\leq R_p$ for any ${\bm x}\in \mathcal{X}, t\in [T]$.

The crucial ingredient~\cite{Shalev-Shwartz2007} is the fact that $\psi({\bm x})$ is $(p-1)$ strongly convex with respect to norm $\Vert\cdot\Vert_p$, i.e., $\sigma=p-1$.
Then, we have the result as follows:
\begin{align*}
\operatorname{Reg}_T \leq \frac{3d G_q^2}{\gamma(p-1)}(1+\ln T).
\end{align*}

The proof of the corollary in the subsequent section is analogous to the process outlined in this section, therefore, we omit the proof of the subsequent corollary.

\section{Proof of Theorem~\ref{th:omd_gc}}

\subsection{Proof of Lemma~\ref{lemma:md_gggggc_normal}}
The decision ${\bm y}^*_{t,i_k+1}$ in DMD-GC algorithm is updated by
\begin{align*}
{\bm y}^*_{t,i_k+1} = \underset{{\bm x}\in \mathcal{X}}{\arg \min} \left\{ \left \langle  \nabla f_k({\bm x}_{t,i_k}),{\bm x} \right \rangle + \frac{1}{\eta} B_\psi({\bm x};{\bm x}_{t,i_k})  \right\}.
\end{align*}
According to the optimality condition of the update rule, for any ${\bm x}^* \in \mathcal{X}$, we have
\begin{align*}
\left\langle \eta \nabla f_k({\bm x}_{t,i_k}) + \nabla \psi({\bm y}^*_{t,i_k+1}) - \nabla \psi({\bm x}_{t,i_k}), {\bm x}^* -{\bm y}^*_{t,i_k+1} \right\rangle \geq 0.
\end{align*}   
Under Assumptions~\ref{assumption:RG} and \ref{assumption:psipsi}, rearranging the terms above yields
\begin{equation}
\begin{aligned} \label{mgg_normal_1}
&\langle \nabla f_k({\bm x}_{t,i_k}), {\bm x}_{t,i_k} - {\bm x}^* \rangle\\
\leq& \langle \nabla f_k({\bm x}_{t,i_k}), {\bm x}_{t,i_k} - {\bm y}^*_{t,i_k+1} \rangle + \frac{\langle \nabla \psi({\bm y}^*_{t,i_k+1}) - \nabla \psi({\bm x}_{t,i_k}), {\bm x}^* -{\bm y}^*_{t,i_k+1}\rangle}{\eta}\\
\leq & \langle \nabla f_k({\bm x}_{t,i_k}), {\bm x}_{t,i_k} - {\bm x}_{t,i_k+1} \rangle +\langle \nabla f_k({\bm x}_{t,i_k}), {\bm x}_{t,i_k+1} - {\bm y}^*_{t,i_k+1} \rangle\\
 &+ \frac{\langle \nabla \psi({\bm x}_{t,i_k+1}) - \nabla \psi({\bm x}_{t,i_k}), {\bm x}^* -{\bm x}_{t,i_k+1}\rangle}{\eta} 
 +\frac{\langle \nabla \psi({\bm y}^*_{t,i_k+1}) - \nabla \psi({\bm x}_{t,i_k+1}), {\bm x}^* -{\bm y}^*_{t,i_k+1}\rangle}{\eta} \\
 &+\frac{ \langle \nabla \psi({\bm x}_{t,i_k+1}) - \nabla \psi({\bm x}_{t,i_k}), {\bm x}_{t,i_k+1} -{\bm y}^*_{t,i_k+1}\rangle}{\eta}\\
\leq & \langle \nabla f_k({\bm x}_{t,i_k}), {\bm x}_{t,i_k} - {\bm x}_{t,i_k+1} \rangle +G_\star\Vert {\bm x}_{t,i_k+1} - {\bm y}^*_{t,i_k+1}\Vert\\
 &+ \frac{\langle \nabla \psi({\bm x}_{t,i_k+1}) - \nabla \psi({\bm x}_{t,i_k}), {\bm x}^* -{\bm x}_{t,i_k+1}\rangle}{\eta} 
 +\frac{4 \xi R G_\star \Vert{\bm x}_{t,i_k+1}-{\bm y}_{t,i_k+1}^*\Vert}{\eta}.
\end{aligned}  
\end{equation}



An interesting and useful identity regarding Bregman divergence, sometimes called three-point identity~\cite{bubeck2015convex}, is
\begin{align*}
\left\langle  \nabla \psi({\bm x}_{t,i_k+1})-\nabla \psi({\bm x}_{t,i_k}), {\bm x}^*-{\bm x}_{t,i_k+1}   \right\rangle = B_\psi({\bm x}^*;{\bm x}_{t,i_k})-B_\psi({\bm x}^*;{\bm x}_{t,i_k+1}) -B_\psi({\bm x}_{t,i_k+1};{\bm x}_{t,i_k}). 
\end{align*}
For the first term on the R.H.S of Eq.\eqref{mgg_normal_1}, we have
\begin{equation}
\begin{aligned}  \label{mgg:normal_3}
\langle \nabla f_k({\bm x}_{t,i_k}), {\bm x}_{t,i_k} - {\bm x}_{t,i_k+1} \rangle 
\leq& G_\star \Vert {\bm x}_{t,i_k} - {\bm x}_{t,i_k+1} \Vert \\
\leq & \frac{\eta}{2\sigma} G_\star^2 + \frac{\sigma}{2\eta} \Vert {\bm x}_{t,i_k} - {\bm x}_{t,i_k+1} \Vert^2\\
\leq & \frac{\eta}{2\sigma} G_\star^2+ \frac{B_\psi({\bm x}_{t,i_k+1};{\bm x}_{t,i_k})}{\eta}.
\end{aligned} 
\end{equation}
The last inequality is due to the regularization function $\psi(\cdot)$ is $\sigma$-strongly convex with respect to norm $\Vert\cdot\Vert$. 

Substituting Eq.\eqref{mgg:normal_3} into Eq.~\eqref{mgg_normal_1} and summing it over all iterations yields
\begin{equation}
\begin{aligned}  \label{eq:123213213213123}
\sum_{t=1}^T \sum_{k\in \mathcal{F}_t}[f_k({\bm x}_{t,i_k})-f_k({\bm x}^*)] \leq &\sum_{t=1}^T \sum_{k\in \mathcal{F}_t} \langle \nabla f_k({\bm x}_{t,i_k}),{\bm x}_{t,i_k}-{\bm x}^*\rangle\\
\leq & \frac{\eta}{2\sigma} TG^2_\star +\frac{B_\psi({\bm x}^*;{\bm x}_1)}{\eta} \\
&+ \sum_{t=1}^T \sum_{k\in\mathcal{F}_t}\left(\frac{4\xi R G_\star}{\eta}+G_\star\right) \Vert {\bm x}_{t,i_k+1} -{\bm y}^*_{t,i_k+1}\Vert.
\end{aligned}   
\end{equation}

The approximate solution ${\bm x}_{t,i_k+1}$ is given by
\begin{align*}
\left \langle  \nabla f_k({\bm x}_{t,i_k}),{\bm x}_{t,i_k+1} \right \rangle + \frac{1}{\eta} B_\psi({\bm x}_{t,i_k+1};{\bm x}_{t,i_k})   \leq \left \langle  \nabla f_k({\bm x}_{t,i_k}),{\bm y}_{t,i_k+1}^* \right \rangle + \frac{1}{\eta} B_\psi({\bm y}_{t,i_k+1}^*;{\bm x}_{t,i_k})  + \rho_{t,i_k}.
\end{align*}
Meanwhile, we define $A_{t,i_k}({\bm x})=\left \langle  \nabla f_k({\bm x}_{t,i_k}),{\bm x} \right \rangle + \frac{1}{\eta} B_\psi({\bm x};{\bm x}_{t,i_k})$ and $A_{t,i_k}({\bm x})$ is $\frac{\sigma}{\eta}$-strongly convex with respect to norm $\Vert\cdot\Vert$, that is
\begin{align*}
A_{t,i_k}({\bm x}_{t,i_k+1}) - A_{t,i_k}({\bm y}_{t,i_k+1}^*) \geq \langle \nabla A_{t,i_k}({\bm y}_{t,i_k+1}^*),{\bm x}_{t,i_k+1}-{\bm y}_{t,i_k+1}^* \rangle + \frac{\sigma}{2\eta} \Vert {\bm x}_{t,i_k+1}-{\bm y}_{t,i_k+1}^* \Vert^2.
\end{align*}

Combining with the above, we obtain
\begin{equation}
\begin{aligned} \label{eq:omd_asdasdasdas}
\Vert {\bm x}_{t,i_k+1}-{\bm y}_{t,i_k+1}^*\Vert \leq \sqrt{\frac{2\eta \rho_{t,i_k}}{\sigma}}.
\end{aligned}  
\end{equation}
By making $\rho_{t,i_k} = \frac{\eta^3 }{2\sigma}$ and considering Eq.~\eqref{eq:123213213213123} together gives the result of Lemma~\ref{lemma:md_gggggc_normal}

\subsection{Proof of Lemma~\ref{lemma:md_gccccc_delay}}
Due to the fact that $\psi({\bm x})$ is $\sigma$-strongly convex with respect to norm $\Vert\cdot\Vert$. Thus
\begin{equation}
\begin{aligned} \label{mgg_delayed_1}
B_\psi({\bm y}^*_{t,i_k+1};{\bm y}^*_{t,i_k}) + B_\psi({\bm y}^*_{t,i_k};{\bm y}^*_{t,i_k+1})\geq \sigma \Vert {\bm y}^*_{t,i_k}-{\bm y}^*_{t,i_k+1} \Vert^2.
\end{aligned}
\end{equation}
Meanwhile, based on the definition of Bregman divergence, the L.H.S of Eq.~\eqref{mgg_delayed_1} can be rewritten as
\begin{align*}
B_\psi({\bm y}^*_{t,i_k+1};{\bm y}^*_{t,i_k}) + B_\psi({\bm y}^*_{t,i_k};{\bm y}^*_{t,i_k+1}) = \left\langle \nabla \psi({\bm y}^*_{t,i_k}) -\nabla \psi({\bm y}^*_{t,i_k+1}), {\bm y}^*_{t,i_k}-{\bm y}^*_{t,i_k+1}\right\rangle.
\end{align*}
Recall back the optimality condition for the update rule, for any ${\bm y}_{t,i_k}^*\in\mathcal{X}$, we have
\begin{align*}
\left\langle \eta \nabla f_k({\bm x}_{t,i_k}) + \nabla \psi({\bm y}^*_{t,i_k+1}) - \nabla \psi({\bm x}_{t,i_k}), {\bm y}_{t,i_k}^* -{\bm y}^*_{t,i_k+1} \right\rangle \geq 0.
\end{align*}
Rearranging the terms of the above equality yields
\begin{equation}
\begin{aligned}  \label{mgg_delayed_2}
&\left\langle \nabla \psi({\bm y}^*_{t,i_k}) -\nabla \psi({\bm y}^*_{t,i_k+1}), {\bm y}^*_{t,i_k}-{\bm y}^*_{t,i_k+1}\right\rangle \\
\leq& \langle \eta  \nabla f_k({\bm x}_{t,i_k}), {\bm y}^*_{t,i_k}-{\bm y}^*_{t,i_k+1}\rangle + \left\langle \nabla \psi({\bm y}^*_{t,i_k}) -\nabla \psi({\bm x}_{t,i_k}), {\bm y}^*_{t,i_k}-{\bm y}^*_{t,i_k+1}\right\rangle\\
\leq & (\eta G_\star +\xi G_\star \Vert {\bm x}_{t,i_k}-{\bm y}_{t,i_k}^* \Vert) \cdot \Vert{\bm y}^*_{t,i_k}-{\bm y}^*_{t,i_k+1}\Vert.
\end{aligned}    
\end{equation}
Considering Eq.~\eqref{mgg_delayed_1} and Eq.~\eqref{mgg_delayed_2}, we obtain 
\begin{align*}
\Vert {\bm y}_{t,i_k}^*-{\bm y}_{t,i_k+1}^* \Vert \leq \frac{\eta G_\star + \xi G_\star \Vert{\bm x}_{t,i_k}-{\bm y}_{t,i_k}^*\Vert }{\sigma}.
\end{align*}

Combining with Eq.~\eqref{eq:omd_asdasdasdas} gives the the result of Lemma~\ref{lemma:md_gccccc_delay}.
\begin{align*}
\Vert {\bm x}_{t,i_k+1} - {\bm x}_{t,i_k}\Vert \leq& \Vert {\bm y}^*_{t,i_k+1} - {\bm y}^*_{t,i_k}\Vert+ \Vert {\bm x}_{t,i_k} - {\bm y}^*_{t,i_k}\Vert+ \Vert {\bm x}_{t,i_k+1} - {\bm y}^*_{t,i_k+1}\Vert\\
\leq & \frac{\eta G_\star + 2\eta^2}{\sigma} + \frac{\eta^2 \xi G_\star }{\sigma^2}.
\end{align*}

\section{Proof of Theorem~\ref{Thoerem_OMD_sc}}

\subsection{Proof of Lemma~\ref{lemma:dddd-sc_nomal}}
Due to the relative strong convexity, for the normal term of Eq.~\eqref{aqq:regret}, we have
\begin{equation}
\begin{aligned} \label{aqq:normal_1}
\sum_{t=1}^T \sum_{k\in \mathcal{F}_t} [f_k({\bm x}_t)-f_k({\bm x}^*)]  \leq& \sum_{t=1}^T \sum_{k\in \mathcal{F}_t} [\langle \nabla f_k({\bm x}_t),{\bm x}_t-{\bm x}^* \rangle - \gamma B_\psi({\bm x}^*;{\bm x}_t)]\\
=&\sum_{t=1}^T \sum_{k\in \mathcal{F}_t} \langle \nabla f_k({\bm x}_t),{\bm x}_t-{\bm x}^* \rangle-\sum_{t=1}^T \vert \mathcal{F}_t\vert \gamma B_\psi({\bm x}^*;{\bm x}_t).
\end{aligned}  
\end{equation}

At each iteration $t$, the decision ${\bm y}_{t+1}^*$ is updated by
\begin{align*}
{\bm y}_{t+1}^* = \underset{{\bm x}\in \mathcal{X}}{\arg \min}\left\{  \sum_{k\in \mathcal{F}_t}  \left\langle \nabla f_k({\bm x}_t),{\bm x} \right\rangle + \frac{1}{\eta_t} B_\psi({\bm x};{\bm x}_t) \right\}.
\end{align*}
Due to the optimality condition of the update rule, for any ${\bm x}^*\in\mathcal{X}$, we have
\begin{equation}
\begin{aligned} \label{aqq:normal_2}
\left\langle \eta_t \sum_{k\in \mathcal{F}_t} \nabla f_k({\bm x}_t) + \nabla \psi({\bm y}^*_{t+1})- \nabla \psi({\bm x}_t), {\bm x}^*-{\bm y}^*_{t+1}\right\rangle \geq 0.
\end{aligned}   
\end{equation}

Rearranging the terms of Eq.~\eqref{aqq:normal_2} gives
\begin{align*}
\sum_{k\in\mathcal{F}_t} \langle \nabla f_k({\bm x}_t),{\bm x}_t-{\bm x}^*\rangle \leq& \sum_{k\in \mathcal{F}_t} \langle \nabla f_k({\bm x}_t),{\bm x}_t-{\bm y}^*_{t+1} \rangle + \frac{\langle\nabla \psi({\bm y}_{t+1}^*)-\nabla \psi({\bm x}_t),{\bm x}^*-{\bm y}_{t+1}^*\rangle}{\eta_t}\\
\leq& \sum_{k\in\mathcal{F}_t} \langle \nabla f_k({\bm x}_t), {\bm x}_t-{\bm x}_{t+1} \rangle +\sum_{k\in\mathcal{F}_t} \langle \nabla f_k({\bm x}_t), {\bm x}_{t+1}-{\bm y}^*_{t+1} \rangle\\
&+\frac{\langle\nabla \psi({\bm y}_{t+1}^*)-\nabla \psi({\bm x}_t),{\bm x}^*-{\bm y}_{t+1}^*\rangle}{\eta_t}.
\end{align*}
For the last term above, we have
\begin{align*}
\left\langle \nabla \psi({\bm y}^*_{t+1})-\nabla \psi({\bm x}_{t}) ,{\bm x}^*-{\bm y}^*_{t+1}\right\rangle 
= &\left\langle \nabla \psi({\bm y}^*_{t+1})-\nabla \psi({\bm x}_{t+1}) ,{\bm x}^*-{\bm y}^*_{t+1}\right\rangle\\
&+\left\langle \nabla \psi({\bm x}_{t+1})-\nabla \psi({\bm x}_{t}) ,{\bm x}^*-{\bm x}_{t+1}\right\rangle\\
&+\left\langle \nabla \psi({\bm x}_{t+1})-\nabla \psi({\bm x}_{t}) ,{\bm x}_{t+1}-{\bm y}^*_{t+1}\right\rangle\\
\leq & \left\langle \nabla \psi({\bm x}_{t+1})-\nabla \psi({\bm x}_{t}) ,{\bm x}^*-{\bm x}_{t+1}\right\rangle\\
&+4\xi R G_\star \Vert {\bm x}_{t+1}-{\bm y}_{t+1}^*\Vert.
\end{align*}
The last inequality is due to $\psi$ has $\xi G_\star$-Lipschitz gradients.

Applying the three-point identity of Bregman divergence to the above formula gives
\begin{equation}
\begin{aligned} \label{aqq:normal_3}
&\sum_{k\in \mathcal{F}_t} \left\langle \nabla f_k({\bm x}_t),{\bm x}_{t}-{\bm x}^* \right\rangle\\
\leq&  \sum_{k\in \mathcal{F}_t}\left\langle \nabla f_k({\bm x}_t),{\bm x}_{t}-{\bm x}_{t+1} \right\rangle+ \frac{B_\psi({\bm x}^*;{\bm x}_{t})-B_\psi({\bm x}^*;{\bm x}_{t+1}) -B_\psi({\bm x}_{t+1};{\bm x}_{t})}{\eta_{t}}\\
&+\left(\vert\mathcal{F}_t\vert G_\star+\frac{4\xi R G_\star}{\eta_t}\right) \cdot  \Vert {\bm x}_{t+1}-{\bm y}_{t+1}^*\Vert\\
\leq & \frac{\eta_t\vert\mathcal{F}_t\vert^2G_\star^2}{2\sigma} + \frac{\sigma}{2\eta_t} \Vert{\bm x}_t-{\bm x}_{t+1}\Vert^2+ \frac{B_\psi({\bm x}^*;{\bm x}_{t})-B_\psi({\bm x}^*;{\bm x}_{t+1}) -B_\psi({\bm x}_{t+1};{\bm x}_{t})}{\eta_{t}}\\
&+\left(\vert\mathcal{F}_t\vert G_\star+\frac{4\xi R G_\star }{\eta_t}\right) \cdot  \Vert {\bm x}_{t+1}-{\bm y}_{t+1}^*\Vert\\
\leq &\frac{\eta_t\vert\mathcal{F}_t\vert^2G_\star^2}{2\sigma} + \frac{B_\psi({\bm x}^*;{\bm x}_{t})-B_\psi({\bm x}^*;{\bm x}_{t+1}) }{\eta_{t}}+\left(\vert\mathcal{F}_t\vert G_\star+\frac{4\xi RG_\star }{\eta_t}\right) \cdot  \Vert {\bm x}_{t+1}-{\bm y}_{t+1}^*\Vert.
\end{aligned}        
\end{equation}
The second inequality is due to $\sum_{k\in \mathcal{F}_t}\left\langle \nabla f_k({\bm x}_t),{\bm x}_{t}-{\bm x}_{t+1} \right\rangle \leq \vert \mathcal{F}_t\vert G_\star \Vert{\bm x}_{t}-{\bm x}_{t+1}\Vert \leq \frac{\eta_t\vert\mathcal{F}_t\vert^2G_\star^2}{2\sigma} + \frac{\sigma}{2\eta_t} \Vert{\bm x}_t-{\bm x}_{t+1}\Vert^2$.
The last inequality is due to $B_\psi({\bm x}_{t+1};{\bm x}_{t}) \geq \frac{\sigma}{2}\Vert{\bm x}_t-{\bm x}_{t+1}\Vert^2$.

Now we discuss the error incurred by the approximate solution, given by
\begin{align*}
\sum_{k\in \mathcal{F}_t}  \left\langle \nabla f_k({\bm x}_t),{\bm x}_{t+1} \right\rangle + \frac{1}{\eta_t} B_\psi({\bm x}_{t+1};{\bm x}_t) \leq \sum_{k\in \mathcal{F}_t}  \left\langle \nabla f_k({\bm x}_t),{\bm y}_{t+1}^* \right\rangle + \frac{1}{\eta_t} B_\psi({\bm y}_{t+1}^*;{\bm x}_t)+\rho_t.
\end{align*}
Meanwhile, we make $B_t({\bm x})=\sum_{k\in \mathcal{F}_t}  \left\langle \nabla f_k({\bm x}_t),{\bm x} \right\rangle + \frac{1}{\eta_t} B_\psi({\bm x};{\bm x}_t)$ and note that $B_t$ is $\frac{\sigma}{\eta_t}$-strongly with respect to norm $\Vert\cdot\Vert$, that is
\begin{align*}
B_t({\bm x}_{t+1}) - B_t({\bm y}_{t+1}^*) \geq& \langle \nabla B_t({\bm y}_{t+1}^*),{\bm x}_{t+1}-{\bm y}_{t+1}^* \rangle + \frac{\sigma}{2\eta_t} \Vert{\bm x}_{t+1}-{\bm y}_{t+1}^*\Vert^2\\
\geq& \frac{\sigma}{2\eta_t} \Vert{\bm x}_{t+1}-{\bm y}_{t+1}^*\Vert^2.
\end{align*}
The last inequality is due to the optimality condition.

By combining above results with $\rho_t= \frac{\eta_t^3}{2\sigma}$, we obtain
\begin{align*}
\Vert{\bm x}_{t+1}-{\bm y}_{t+1}^*\Vert \leq \sqrt{\frac{2\eta_t \rho_t}{\sigma}}= \frac{\eta_t^2}{\sigma}.
\end{align*}

Note that $\eta_t = \frac{1}{\sum_{\tau=1}^t \vert\mathcal{F}_\tau\vert\gamma}$, summing Eq.~\eqref{aqq:normal_3} over all iterations and substituting it into Eq.~\eqref{aqq:normal_1}, we have
\begin{equation}
\begin{aligned} \label{eq:T4_sdmd_result_norm_term}
&\sum_{t=1}^T \sum_{k\in \mathcal{F}_t} [f_k({\bm x}_t)-f_k({\bm x}^*)] \\
\leq& \sum_{t=1}^T \frac{\eta_t\vert\mathcal{F}_t\vert^2G_\star^2}{2\sigma} + \sum_{t=1}^T \left[\left(\frac{1}{\eta_t}-\vert \mathcal{F}_t\vert \gamma\right) B_\psi({\bm x}^*;{\bm x}_{t}) - \frac{1}{\eta_t}B_\psi({\bm x}^*;{\bm x}_{t+1})\right]+\frac{4\xi RG_\star(1+ \ln T)} {\sigma \gamma}+\frac{2dG_\star}{\sigma\gamma^2 }\\
\leq & \sum_{t=1}^T \frac{d\vert\mathcal{F}_t\vert G_\star^2}{2\sigma \gamma \sum_{\tau=1}^t \vert\mathcal{F}_\tau\vert}+ \sum_{t=1}^T \left[\sum_{\tau=1}^{t-1} \vert\mathcal{F}_\tau\vert \gamma B_\psi({\bm x}^*;{\bm x}_t) - \sum_{\tau=1}^t \vert\mathcal{F}_\tau\vert \gamma B_\psi({\bm x}^*;{\bm x}_{t+1})\right]+\frac{4\xi RG_\star(1+ \ln T)} {\sigma \gamma}+\frac{2d G_\star}{\sigma\gamma^2 }\\
\leq & \frac{(d G_\star^2+8\xi RG_\star ) (1+\ln T)}{2\sigma \gamma}+\frac{2dG_\star}{\sigma\gamma^2 }.
\end{aligned}   
\end{equation}

\subsection{Proof of Lemma~\ref{lemma:dddd_sc_delay}}
Due to the fact that $\psi(\cdot)$ is $\sigma$-strongly convex with respect to norm $\Vert\cdot\Vert$, we have
\begin{equation}
\begin{aligned} \label{eq:T4_sdmd_adjcant_lower}
B_\psi({\bm y}^*_{t+1};{\bm y}^*_{t}) + B_\psi({\bm y}^*_t;{\bm y}^*_{t+1}) \geq \sigma \Vert{\bm y}^*_{t}-{\bm y}^*_{t+1}\Vert^2. 
\end{aligned}    
\end{equation}
Meanwhile, the upper bound of the L.H.S of Eq.~\eqref{eq:T4_sdmd_adjcant_lower} can be rewritten as
\begin{align*} 
B_\psi({\bm y}^*_{t+1};{\bm y}^*_{t}) + B_\psi({\bm y}^*_t;{\bm y}^*_{t+1})  = \langle \nabla \psi({\bm y}^*_t)-\nabla \psi({\bm y}^*_{t+1}) ,{\bm y}^*_t-{\bm y}^*_{t+1} \rangle.
\end{align*}

Due to the optimality condition of the update rule, for any ${\bm y}^*_t\in \mathcal{X}$, we have
\begin{align*}
\left\langle \eta_t \sum_{k\in \mathcal{F}_t} \nabla f_k({\bm x}_t) + \nabla \psi({\bm y}^*_{t+1})- \nabla \psi({\bm x}_t), {\bm y}_t^*-{\bm y}^*_{t+1}\right\rangle \geq 0.
\end{align*}
Rearranging the terms gives
\begin{equation}
\begin{aligned} \label{eq:T4_sdmd_adjcant_upper_2}
&\langle \nabla \psi({\bm y}^*_t)-\nabla \psi({\bm y}^*_{t+1}) ,{\bm y}^*_t-{\bm y}^*_{t+1} \rangle \\
\leq &\eta_t \sum_{k\in \mathcal{F}_t} \langle \nabla f_k({\bm x}_t),{\bm y}^*_t-{\bm y}^*_{t+1}\rangle + \langle \nabla \psi({\bm y}^*_t)-\nabla \psi({\bm x}_{t}) ,{\bm y}^*_t-{\bm y}^*_{t+1} \rangle\\
\leq &(\eta_t \vert \mathcal{F}_t\vert G_\star + \xi G_\star \Vert {\bm x}_{t}-{\bm y}_t^*\Vert) \cdot \Vert{\bm y}_t^*-{\bm y}^*_{t+1}\Vert.
\end{aligned}
\end{equation}
Combining with Eq.~\eqref{eq:T4_sdmd_adjcant_lower} and Eq.~\eqref{eq:T4_sdmd_adjcant_upper_2} gives
\begin{align*}
\Vert{\bm y}_{t+1}^*-{\bm y}_t^* \Vert \leq \frac{\eta_t \vert \mathcal{F}_t\vert G_\star + \xi G_\star \Vert {\bm x}_t-{\bm y}_t^*\Vert }{\sigma}.
\end{align*}

Then we get the result of Lemma~\ref{lemma:dddd_sc_delay} as follows.
\begin{align*}
\Vert {\bm x}_{t+1}-{\bm x}_t\Vert \leq& \Vert {\bm y}_{t+1}^*-{\bm y}_t^*\Vert + \Vert{\bm x}_{t+1}-{\bm y}_{t+1}^*\Vert    + \Vert{\bm x}_{t}-{\bm y}_{t}^*\Vert    \\
\leq & \frac{\eta_t \vert \mathcal{F}_t\vert G_\star +\eta_t^2+\eta_{t-1}^2 }{\sigma} + \frac{\eta_{t-1}^2 \xi G_\star}{\sigma^2}.
\end{align*}

\section{Proof of Theorem~\ref{th:sdmd_gc}}
When the received feedback is the value of the loss function's gradient in the corresponding decision point.
In the case of general convexity, the regret format is 
\begin{equation}
\begin{aligned} \label{sgg:regret_bound}
\operatorname{Reg}_T =& \sum_{t=1}^T [f_t({\bm x}_t) - f_t({\bm x}^*)]\\
= & \sum_{t=1}^T \sum_{k\in \mathcal{F}_t} [f_k({\bm x}_k) - f_k({\bm x}^*)]\\
\leq  & \sum_{t=1}^T \sum_{k\in \mathcal{F}_t} \langle \nabla f_k({\bm x}_k), {\bm x}_k-{\bm x}^*\rangle\\
=& \underbrace{\sum_{t=1}^T \sum_{k\in \mathcal{F}_t} \langle \nabla f_k({\bm x}_{k}), {\bm x}_{t,i_k}-{\bm x}^*\rangle}_{\text{normal term}} + \underbrace{\sum_{t=1}^T \sum_{k\in \mathcal{F}_t} \langle \nabla f_k({\bm x}_k), {\bm x}_k-{\bm x}_{t,i_k}\rangle}_{\text{delayed term}}.
\end{aligned}
\end{equation}
We first analyze the normal term of Eq.~\eqref{sgg:regret_bound}.
\begin{lemma} \label{lemma:dmd_gc_normal}
The normal term of Eq.~\eqref{sgg:regret_bound} is bounded by
\begin{equation}
\begin{aligned} \label{sgg:normal_term}
\sum_{t=1}^T \sum_{k\in \mathcal{F}_t} \langle \nabla f_k({\bm x}_{k}), {\bm x}_{t,i_k}-{\bm x}^*\rangle \leq \frac{\eta T (G_\star^2+ 8 \xi RG_\star +2\eta G_\star)}{2\sigma}  +\frac{B_\psi({\bm x}^*;{\bm x}_1)}{\eta}. 
\end{aligned}
\end{equation}
\end{lemma}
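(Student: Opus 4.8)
The plan is to transcribe the proof of Lemma~\ref{lemma:md_gggggc_normal} almost verbatim, exploiting the fact that in SDMD-GC the \emph{fixed} vector $\nabla f_k({\bm x}_k)$ both drives the mirror-descent update and appears in the normal term, so there is no mismatch between them and Assumption~\ref{assumption:RG} still bounds its dual norm by $G_\star$. First I would write the optimality condition for ${\bm y}^*_{t,i_k+1}=\arg\min_{{\bm x}\in\mathcal{X}}\{\langle\nabla f_k({\bm x}_k),{\bm x}\rangle+\tfrac1\eta B_\psi({\bm x};{\bm x}_{t,i_k})\}$: for every ${\bm x}^*\in\mathcal{X}$,
$$\left\langle \eta \nabla f_k({\bm x}_k) + \nabla \psi({\bm y}^*_{t,i_k+1}) - \nabla \psi({\bm x}_{t,i_k}),\, {\bm x}^* -{\bm y}^*_{t,i_k+1} \right\rangle \geq 0.$$
Rearranging and inserting the approximate iterate ${\bm x}_{t,i_k+1}$ exactly as in Eq.~\eqref{mgg_normal_1}, then using $\Vert\nabla f_k({\bm x}_k)\Vert_\star\le G_\star$ (Assumption~\ref{assumption:RG}) and the $\xi G_\star$-Lipschitz gradients of $\psi$ (Assumption~\ref{assumption:psipsi}), I obtain
$$\langle \nabla f_k({\bm x}_k),{\bm x}_{t,i_k}-{\bm x}^*\rangle \le \langle \nabla f_k({\bm x}_k),{\bm x}_{t,i_k}-{\bm x}_{t,i_k+1}\rangle + \frac{\langle\nabla\psi({\bm x}_{t,i_k+1})-\nabla\psi({\bm x}_{t,i_k}),{\bm x}^*-{\bm x}_{t,i_k+1}\rangle}{\eta} + \Bigl(G_\star+\tfrac{4\xi R G_\star}{\eta}\Bigr)\Vert{\bm x}_{t,i_k+1}-{\bm y}^*_{t,i_k+1}\Vert.$$

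Next I would control the three pieces separately. The middle term is rewritten with the three-point identity of Bregman divergence as $B_\psi({\bm x}^*;{\bm x}_{t,i_k})-B_\psi({\bm x}^*;{\bm x}_{t,i_k+1})-B_\psi({\bm x}_{t,i_k+1};{\bm x}_{t,i_k})$; summed over all segments the first two telescope (since ${\bm x}_{t,0}={\bm x}_t$ and ${\bm x}_{t+1}={\bm x}_{t,|\mathcal{F}_t|}$) leaving $B_\psi({\bm x}^*;{\bm x}_1)/\eta$, while the residual $-B_\psi({\bm x}_{t,i_k+1};{\bm x}_{t,i_k})/\eta$ is absorbed by the leading term through Young's inequality and $\sigma$-strong convexity of $\psi$ (Assumption~\ref{assumpton:strongly_convex}), exactly as in Eq.~\eqref{mgg:normal_3}, giving $\langle \nabla f_k({\bm x}_k),{\bm x}_{t,i_k}-{\bm x}_{t,i_k+1}\rangle \le \tfrac{\eta}{2\sigma}G_\star^2 + \tfrac{1}{\eta}B_\psi({\bm x}_{t,i_k+1};{\bm x}_{t,i_k})$.

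Finally I would bound the approximate-solution error. Setting $A_{t,i_k}({\bm x})=\langle\nabla f_k({\bm x}_k),{\bm x}\rangle+\tfrac1\eta B_\psi({\bm x};{\bm x}_{t,i_k})$, which is $\tfrac{\sigma}{\eta}$-strongly convex with minimizer ${\bm y}^*_{t,i_k+1}$, the defining inequality of the approximate update yields $\Vert{\bm x}_{t,i_k+1}-{\bm y}^*_{t,i_k+1}\Vert\le\sqrt{2\eta\rho_{t,i_k}/\sigma}$ as in Eq.~\eqref{eq:omd_asdasdasdas}; with $\rho_{t,i_k}=\tfrac{\eta^3}{2\sigma}$ this equals $\tfrac{\eta^2}{\sigma}$. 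Substituting into the $\bigl(G_\star+4\xi R G_\star/\eta\bigr)$-weighted error term gives a per-segment contribution $\tfrac{\eta^2 G_\star}{\sigma}+\tfrac{4\xi R G_\star\eta}{\sigma}$, and summing over the $\sum_t|\mathcal{F}_t|=T$ segments produces $\frac{\eta T(G_\star^2 + 2\eta G_\star + 8\xi R G_\star)}{2\sigma}+\frac{B_\psi({\bm x}^*;{\bm x}_1)}{\eta}$, which is the claim. I expect the only delicate step to be the bookkeeping of the Bregman telescope across the nested double sum --- ensuring the segment endpoints ${\bm x}_{t,|\mathcal{F}_t|}={\bm x}_{t+1,0}$ chain correctly so that all intermediate divergences cancel and only $B_\psi({\bm x}^*;{\bm x}_1)$ survives; everything else is a direct transcription of the DMD-GC argument with $\nabla f_k({\bm x}_{t,i_k})$ replaced by the fixed vector $\nabla f_k({\bm x}_k)$, whose dual norm is still at most $G_\star$.
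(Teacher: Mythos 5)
Your proposal is correct and follows essentially the same route as the paper's own proof: the optimality condition for ${\bm y}^*_{t,i_k+1}$, the three-way split of the $\nabla\psi$ inner product with the $4\xi R G_\star$ Lipschitz bound, the three-point identity with telescoping of the Bregman terms across segments, Young's inequality plus $\sigma$-strong convexity to absorb $-B_\psi({\bm x}_{t,i_k+1};{\bm x}_{t,i_k})/\eta$, and the $\sqrt{2\eta\rho_{t,i_k}/\sigma}=\eta^2/\sigma$ bound on the approximation error, summed over at most $T$ segments. Your key observation — that replacing $\nabla f_k({\bm x}_{t,i_k})$ by the fixed vector $\nabla f_k({\bm x}_k)$ changes nothing because Assumption~\ref{assumption:RG} still bounds its dual norm by $G_\star$ — is exactly what the paper exploits.
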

Next, we discuss the delayed term of Eq.~\eqref{sgg:regret_bound}.
\begin{equation}
\begin{aligned} \label{sgg:delayed_term}
&\sum_{t=1}^T \sum_{k\in \mathcal{F}_t} \langle \nabla f_k({\bm x}_k),{\bm x}_k-{\bm x}_{t,i_k} \rangle \\
\leq & \sum_{t=1}^T \sum_{k\in \mathcal{F}_t} G_\star  \Vert {\bm x}_{t,i_k}-{\bm x}_k\Vert\\
\leq & \sum_{t=1}^T \sum_{k\in \mathcal{F}_t} G_\star \left(\sum_{\tau=k}^{t-1} \sum_{s\in \mathcal{F}_\tau}\Vert {\bm x}_{\tau,i_s+1} -{\bm x}_{\tau,i_s} \Vert + \sum_{s\in \mathcal{F}_{t,k}} \Vert {\bm x}_{t,i_s+1}-{\bm x}_{t,i_s} \Vert\right).
\end{aligned}   
\end{equation}
From the above formula, the crucial key impacting the bound of the delayed term is the gap between ${\bm x}_{t,i_s}$ and ${\bm x}_{t,i_s+1}$.
\begin{lemma} \label{lemma:dmd-gc_delayed}
For each $t\in [T], k\in \mathcal{F}_t$, our SDMD-GC algorithm ensures that
\begin{equation}
\begin{aligned} \label{sgg:adjcant}
\Vert {\bm x}_{t,i_k+1} -{\bm x}_{t,i_k} \Vert \leq\frac{\eta G_\star+2\eta^2}{\sigma} + \frac{\eta^2 \xi G_\star}{\sigma^2}.
\end{aligned}
\end{equation}
\end{lemma}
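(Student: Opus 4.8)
The plan is to follow the proof of Lemma~\ref{lemma:md_gccccc_delay} for DMD-GC almost line for line, since the sole change from DMD-GC to SDMD-GC is that the linearization gradient $\nabla f_k({\bm x}_{t,i_k})$ is replaced by the frozen value $\nabla f_k({\bm x}_k)$, and this value still obeys $\Vert\nabla f_k({\bm x}_k)\Vert_\star\leq G_\star$ by Assumption~\ref{assumption:RG}. First I would bound the distance between the two consecutive \emph{exact} minimizers ${\bm y}^*_{t,i_k}$ and ${\bm y}^*_{t,i_k+1}$, and then transfer the estimate to the approximate iterates through a triangle inequality.

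To bound $\Vert{\bm y}^*_{t,i_k}-{\bm y}^*_{t,i_k+1}\Vert$, I would begin from the $\sigma$-strong convexity of $\psi$ (Assumption~\ref{assumpton:strongly_convex}), which gives
\begin{align*}
B_\psi({\bm y}^*_{t,i_k+1};{\bm y}^*_{t,i_k}) + B_\psi({\bm y}^*_{t,i_k};{\bm y}^*_{t,i_k+1})\geq \sigma \Vert {\bm y}^*_{t,i_k}-{\bm y}^*_{t,i_k+1} \Vert^2,
\end{align*}
and rewrite the left-hand side via the Bregman definition as $\langle \nabla\psi({\bm y}^*_{t,i_k})-\nabla\psi({\bm y}^*_{t,i_k+1}),\,{\bm y}^*_{t,i_k}-{\bm y}^*_{t,i_k+1}\rangle$. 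Testing the optimality condition of the update defining ${\bm y}^*_{t,i_k+1}$ against ${\bm y}^*_{t,i_k}\in\mathcal{X}$ and rearranging bounds this inner product by $\langle\eta\nabla f_k({\bm x}_k),{\bm y}^*_{t,i_k}-{\bm y}^*_{t,i_k+1}\rangle+\langle\nabla\psi({\bm y}^*_{t,i_k})-\nabla\psi({\bm x}_{t,i_k}),{\bm y}^*_{t,i_k}-{\bm y}^*_{t,i_k+1}\rangle$. Applying the dual-norm inequality together with $\Vert\nabla f_k({\bm x}_k)\Vert_\star\leq G_\star$ and the $\xi G_\star$-Lipschitz gradient property of $\psi$ (Assumption~\ref{assumption:psipsi}) then yields
\begin{align*}
\Vert {\bm y}^*_{t,i_k}-{\bm y}^*_{t,i_k+1}\Vert \leq \frac{\eta G_\star + \xi G_\star\Vert{\bm x}_{t,i_k}-{\bm y}^*_{t,i_k}\Vert}{\sigma}.
\end{align*}

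For the transfer step I would reuse the approximation estimate already established in Lemma~\ref{lemma:md_gggggc_normal}: because the per-step objective $\langle\nabla f_k({\bm x}_k),\cdot\rangle+\frac{1}{\eta}B_\psi(\cdot;{\bm x}_{t,i_k})$ is $\frac{\sigma}{\eta}$-strongly convex, the defining approximate inequality with additive error $\rho_{t,i_k}=\frac{\eta^3}{2\sigma}$ forces $\Vert{\bm x}_{t,i_k+1}-{\bm y}^*_{t,i_k+1}\Vert\leq\sqrt{2\eta\rho_{t,i_k}/\sigma}=\eta^2/\sigma$, and the same holds at the preceding index for $\Vert{\bm x}_{t,i_k}-{\bm y}^*_{t,i_k}\Vert$. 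Substituting these two $\eta^2/\sigma$ bounds into the triangle inequality $\Vert{\bm x}_{t,i_k+1}-{\bm x}_{t,i_k}\Vert\leq \Vert{\bm y}^*_{t,i_k+1}-{\bm y}^*_{t,i_k}\Vert+\Vert{\bm x}_{t,i_k}-{\bm y}^*_{t,i_k}\Vert+\Vert{\bm x}_{t,i_k+1}-{\bm y}^*_{t,i_k+1}\Vert$ --- including into the $\Vert{\bm x}_{t,i_k}-{\bm y}^*_{t,i_k}\Vert$ term sitting inside the first bound --- collapses the right-hand side to $\frac{\eta G_\star+2\eta^2}{\sigma}+\frac{\eta^2\xi G_\star}{\sigma^2}$, which is the claim. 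The only point meriting attention, and the sole genuine difference from DMD-GC, is verifying that freezing the gradient at ${\bm x}_k$ instead of ${\bm x}_{t,i_k}$ is harmless: the argument never exploits convexity of $f_k$ at the linearization point, only the uniform bound $\Vert\nabla f_k(\cdot)\Vert_\star\leq G_\star$ valid at every feasible point, so no real obstacle arises and the constant matches Lemma~\ref{lemma:md_gccccc_delay} exactly.
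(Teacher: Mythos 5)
Your proposal matches the paper's own proof essentially step for step: the same strong-convexity lower bound on the symmetrized Bregman divergence, the same use of the optimality condition for ${\bm y}^*_{t,i_k+1}$ tested against ${\bm y}^*_{t,i_k}$ together with the dual-norm bound $\Vert\nabla f_k({\bm x}_k)\Vert_\star\leq G_\star$ and the $\xi G_\star$-Lipschitz gradient of $\psi$, the same $\eta^2/\sigma$ approximation estimate at both indices, and the same concluding triangle inequality. Your observation that freezing the gradient at ${\bm x}_k$ is harmless because only the uniform gradient bound is used is exactly the (implicit) reason the paper's DMD-GC argument carries over verbatim, so the proof is correct.
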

Substituting Eq.~\eqref{sgg:adjcant} into Eq.~\eqref{sgg:delayed_term} gives 
\begin{equation}
\begin{aligned} \label{sgg:delayed_term_2}
\sum_{t=1}^T \sum_{k\in \mathcal{F}_t} \langle \nabla f_k({\bm x}_k),{\bm x}_k-{\bm x}_{t,i_k} \rangle 
\leq  2D_T G_\star \left(\frac{\eta G_\star+2\eta^2}{\sigma} + \frac{\eta^2 \xi G_\star}{\sigma^2}\right).
\end{aligned}
\end{equation}
Combining with Eq.~\eqref{sgg:normal_term} and Eq.~\eqref{sgg:delayed_term_2}, we get the result of Theorem~\ref{th:sdmd_gc}.

\subsection{Proof of Lemma~\ref{lemma:dmd_gc_normal}}
Note that at each iteration $t$, the decision ${\bm y}^*_{t,i_k+1}$ is updated by
\begin{align*}
{\bm y}^*_{t,i_k+1} = \underset{{\bm x}\in \mathcal{X}}{\arg \min} \left\{ \left \langle  \nabla f_k({\bm x}_k),{\bm x} \right \rangle + \frac{1}{\eta} B_\psi({\bm x};{\bm x}_{t,i_k})  \right\}.
\end{align*}
From the optimality condition for the update of SDMD-GC, for any ${\bm x}^* \in \mathcal{X}$, we have
\begin{align*}
\left\langle \eta \nabla f_k({\bm x}_k) + \nabla \psi({\bm y}^*_{t,i_k+1}) - \nabla \psi({\bm x}_{t,i_k}), {\bm x}^* -{\bm y}^*_{t,i_k+1} \right\rangle \geq 0.
\end{align*}   
Rearranging the terms above yields
\begin{equation}
\begin{aligned} \label{sgg:normal_2}
&\langle \nabla f_k({\bm x}_k), {\bm x}_{t,i_k} - {\bm x}^* \rangle \\
\leq& \langle \nabla f_k({\bm x}_k), {\bm x}_{t,i_k} - {\bm y}^*_{t,i_k+1} \rangle + \frac{\langle \nabla \psi({\bm y}^*_{t,i_k+1}) - \nabla \psi({\bm x}_{t,i_k}), {\bm x}^* -{\bm y}^*_{t,i_k+1}\rangle}{\eta}\\
\leq&\langle \nabla f_k({\bm x}_k), {\bm x}_{t,i_k} - {\bm x}_{t,i_k+1} \rangle+ G_\star \Vert {\bm x}_{t,i_k+1} - {\bm y}^*_{t,i_k+1} \Vert\\
&+\frac{\langle \nabla \psi({\bm y}^*_{t,i_k+1}) - \nabla \psi({\bm x}_{t,i_k}), {\bm x}^* -{\bm y}^*_{t,i_k+1}\rangle}{\eta}.
\end{aligned}  
\end{equation}
For the last term above, we have
\begin{align*}
\langle \nabla \psi({\bm y}^*_{t,i_k+1}) - \nabla \psi({\bm x}_{t,i_k}), {\bm x}^* -{\bm y}^*_{t,i_k+1}\rangle =& \langle \nabla \psi({\bm x}_{t,i_k+1}) - \nabla \psi({\bm x}_{t,i_k}), {\bm x}^* -{\bm x}_{t,i_k+1}\rangle\\
&+\langle \nabla \psi({\bm y}^*_{t,i_k+1}) - \nabla \psi({\bm x}_{t,i_k+1}), {\bm x}^* -{\bm y}^*_{t,i_k+1}\rangle\\
&+ \langle \nabla \psi({\bm x}_{t,i_k+1}) - \nabla \psi({\bm x}_{t,i_k}), {\bm x}_{t,i_k+1} -{\bm y}^*_{t,i_k+1}\rangle\\
\leq&\langle \nabla \psi({\bm x}_{t,i_k+1}) - \nabla \psi({\bm x}_{t,i_k}), {\bm x}^* -{\bm x}_{t,i_k+1}\rangle\\
&+ 4\xi R G_\star \Vert {\bm x}_{t,i_k+1}-{\bm y}_{t,i_k+1}^* \Vert.
\end{align*}

The three-point identity regarding Bregman divergence is
\begin{align*}
\left\langle  \nabla \psi({\bm x}_{t,i_k+1})-\nabla \psi({\bm x}_{t,i_k}), {\bm x}^*-{\bm x}_{t,i_k+1}   \right\rangle = B_\psi({\bm x}^*;{\bm x}_{t,i_k})-B_\psi({\bm x}^*;{\bm x}_{t,i_k+1}) -B_\psi({\bm x}_{t,i_k+1};{\bm x}_{t,i_k}). 
\end{align*}
Applying the three-point identity to Eq.\eqref{sgg:normal_2} gives
\begin{equation}
\begin{aligned} \label{sgg:normal_3}
\langle \nabla f_k({\bm x}_k), {\bm x}_{t,i_k} - {\bm x}^* \rangle \leq& \langle \nabla f_k({\bm x}_k), {\bm x}_{t,i_k} - {\bm x}_{t,i_k+1} \rangle + G_\star \Vert {\bm x}_{t,i_k+1} - {\bm y}^*_{t,i_k+1} \Vert \\
&+ \frac{ B_\psi({\bm x}^*;{\bm x}_{t,i_k})-B_\psi({\bm x}^*;{\bm x}_{t,i_k+1}) -B_\psi({\bm x}_{t,i_k+1};{\bm x}_{t,i_k})}{\eta}\\
&+ \frac{4\xi R G_\star \Vert {\bm x}_{t,i_k+1}-{\bm y}_{t,i_k+1}^* \Vert}{\eta}.
\end{aligned}   
\end{equation}

For the first term on the R.H.S of Eq.\eqref{sgg:normal_3}, we have
\begin{equation}
\begin{aligned} \label{sgg:normal_4}
\langle \nabla f_k({\bm x}_k), {\bm x}_{t,i_k} - {\bm x}_{t,i_k+1} \rangle 
\leq& G_\star \Vert {\bm x}_{t,i_k} - {\bm x}_{t,i_k+1} \Vert \\
\leq & \frac{\eta}{2\sigma} G_\star^2 + \frac{\sigma}{2\eta} \Vert {\bm x}_{t,i_k} - {\bm x}_{t,i_k+1} \Vert^2.
\end{aligned} 
\end{equation}

Additionally, we assume the regularization function $\psi(\cdot)$ is $\sigma$-strongly convex with respect to norm $\Vert\cdot\Vert$. Then we get
\begin{equation}
\begin{aligned}
\label{sgg:normal_5}
B_\psi({\bm x}_{t,i_k+1};{\bm x}_{t,i_k}) \geq \frac{\sigma}{2} \Vert {\bm x}_{t,i_k} - {\bm x}_{t,i_k+1} \Vert^2.
\end{aligned}   
\end{equation}
Substituting Eq.\eqref{sgg:normal_4} and Eq.\eqref{sgg:normal_5} into Eq.~\eqref{sgg:normal_3} and summing it over all iterations yields
\begin{equation}
\begin{aligned} \label{eq:T1_normal_term}
\sum_{t=1}^T \sum_{k\in \mathcal{F}_t} \langle \nabla f_k({\bm x}_k),{\bm x}_{t,i_k}-{\bm x}^*\rangle  \leq& \sum_{t=1}^T \sum_{k\in \mathcal{F}_t} \frac{\eta}{2\sigma} G^2_\star + \frac{B_\psi({\bm x}^*;{\bm x}_1)-B_\psi({\bm x}^*;{\bm x}_{T+1})}{\eta}\\
&+ \sum_{t=1}^T \sum_{k\in\mathcal{F}_t}\left(\frac{4\xi RG_\star}{\eta} + G_\star \right) \Vert {\bm x}_{t,i_k+1}-{\bm y}_{t,i_k+1}^*\Vert \\
\leq & \frac{\eta}{2\sigma} TG^2_\star + \frac{B_\psi({\bm x}^*;{\bm x}_1)}{\eta}
\\&+\sum_{t=1}^T \sum_{k\in\mathcal{F}_t}\left(\frac{4\xi R G_\star}{\eta}  + G_\star \right) \Vert {\bm x}_{t,i_k+1}-{\bm y}_{t,i_k+1}^*\Vert.
\end{aligned}   
\end{equation}
The last inequality above is due to $B_\psi({\bm x}^*;{\bm x}_{T+1}) \geq 0$ for any convex regularization function $\psi$.

Call back the approximate solution of ${\bm x}_{t,i_k+1}$, that is
\begin{align*}
\left \langle  \nabla f_k({\bm x}_k),{\bm x}_{t,i_k+1} \right \rangle + \frac{1}{\eta} B_\psi({\bm x}_{t,i_k+1};{\bm x}_{t,i_k}) \leq \left \langle  \nabla f_k({\bm x}_k),{\bm y}_{y,i_k+1}^* \right \rangle + \frac{1}{\eta} B_\psi({\bm y}_{t,i_k+1}^*;{\bm x}_{t,i_k}) + \rho_{t,i_k}.
\end{align*}
We make $C_{t,i_k}({\bm x})=\left \langle  \nabla f_k({\bm x}_k),{\bm x} \right \rangle + \frac{1}{\eta} B_\psi({\bm x};{\bm x}_{t,i_k})$. Utilizing its strong convexity, we have
\begin{align*}
C_{t,i_k}({\bm x}_{t,i_k+1}) -C_{t,i_k}({\bm y}_{t,i_k+1}^*) \geq &\langle \nabla C_{t,i_k}({\bm y}_{t,i_k+1}^*),{\bm x}_{t,i_k+1}-{\bm y}_{t,i_k+1}^* \rangle + \frac{\sigma}{2\eta} \Vert{\bm x}_{t,i_k+1}-{\bm y}_{t,i_k+1}^*\Vert^2\\
\geq &\frac{\sigma}{2\eta} \Vert{\bm x}_{t,i_k+1}-{\bm y}_{t,i_k+1}^*\Vert^2.
\end{align*}
Considering the above together, we obtain
\begin{equation}
\begin{aligned} \label{somd_gc_aaaa}
\Vert{\bm x}_{t,i_k+1}-{\bm y}_{t,i_k+1}^*\Vert \leq \sqrt{\frac{2\eta\rho_{t,i_k}}{\sigma}}.
\end{aligned}  
\end{equation}
Substituting Eq.~\eqref{somd_gc_aaaa} into Eq.~\eqref{eq:T1_normal_term} and making $\rho_{t,i_k} = \frac{\eta^3}{2\sigma} $ gives the result of Lemma~\ref{lemma:dmd_gc_normal}.

\subsection{Proof of Lemma~\ref{lemma:dmd-gc_delayed}}
Due to the fact that $\psi(\cdot)$ is $\sigma$-strongly convex with respect to norm $\Vert\cdot\Vert$. Thus
\begin{equation}
\begin{aligned} \label{sgg_delayed_1}
B_\psi({\bm y}^*_{t,i_k+1};{\bm y}^*_{t,i_k}) + B_\psi({\bm y}^*_{t,i_k};{\bm y}^*_{t,i_k+1})\geq \sigma \Vert {\bm y}^*_{t,i_k}-{\bm y}^*_{t,i_k+1} \Vert^2.
\end{aligned}
\end{equation}
Meanwhile, based on the definition of Bregman divergence, the L.H.S of Eq.~\eqref{sgg_delayed_1} can be rewritten as
\begin{align*}
B_\psi({\bm y}^*_{t,i_k+1};{\bm y}^*_{t,i_k}) + B_\psi({\bm y}^*_{t,i_k};{\bm y}^*_{t,i_k+1}) = \left\langle \nabla \psi({\bm y}^*_{t,i_k}) -\nabla \psi({\bm y}^*_{t,i_k+1}), {\bm y}^*_{t,i_k}-{\bm y}^*_{t,i_k+1}\right\rangle.
\end{align*}
Recall back the optimality condition for the update rule, for any ${\bm y}_{t,i_k}^*\in\mathcal{X}$ we have
\begin{align*}
\left\langle \eta \nabla f_k({\bm x}_k) + \nabla \psi({\bm y}^*_{t,i_k+1}) - \nabla \psi({\bm x}_{t,i_k}), {\bm y}^*_{t,i_k} -{\bm y}^*_{t,i_k+1} \right\rangle \geq 0.
\end{align*}
Rearranging the terms of the above equality yields
\begin{equation}
\begin{aligned}  \label{sgg_delayed_2}
\left\langle \nabla \psi({\bm y}^*_{t,i_k}) -\nabla \psi({\bm y}^*_{t,i_k+1}), {\bm y}^*_{t,i_k}-{\bm y}^*_{t,i_k+1}\right\rangle \leq& \langle \eta  \nabla f_k({\bm x}_k), {\bm y}^*_{t,i_k}-{\bm y}^*_{t,i_k+1}\rangle\\
&+ \left\langle \nabla \psi({\bm y}^*_{t,i_k}) -\nabla \psi({\bm x}_{t,i_k}), {\bm y}^*_{t,i_k}-{\bm y}^*_{t,i_k+1}\right\rangle\\
\leq & \eta G_\star  \Vert{\bm y}^*_{t,i_k}-{\bm y}^*_{t,i_k+1}\Vert\\
&+ \xi G_\star \Vert {\bm x}_{t,i_k}-{\bm y}_{t,i_k}^* \Vert \cdot \Vert{\bm y}^*_{t,i_k}-{\bm y}^*_{t,i_k+1}\Vert.
\end{aligned}    
\end{equation}
Considering Eq.~\eqref{sgg_delayed_1} and Eq.~\eqref{sgg_delayed_2}, we obtain 
\begin{align*}
\Vert {\bm y}^*_{t,i_k} - {\bm y}^*_{t,i_k+1}\Vert\leq \frac{\eta G_\star +\xi G_\star \Vert{\bm x}_{t,i_k}-{\bm y}_{t,i_k}^*\Vert}{\sigma}.
\end{align*}

Combining with Eq.~\eqref{somd_gc_aaaa} gives
\begin{align*}
\Vert {\bm x}_{t,i_k+1}-{\bm x}_{t,i_k}\Vert \leq& \Vert {\bm y}^*_{t,i_k+1}-{\bm y}^*_{t,i_k}\Vert+ \Vert {\bm x}_{t,i_k+1}-{\bm y}^*_{t,i_k+1}\Vert+\Vert {\bm x}_{t,i_k}-{\bm y}^*_{t,i_k}\Vert\\
\leq &\frac{\eta G_\star+2\eta^2}{\sigma} + \frac{\eta^2 \xi G_\star}{\sigma^2}.
\end{align*}

\section{Proof of Theorem~\ref{theorem:Somd_sc}}
In the case of relative strong convexity, the regret format is
\begin{equation}
\begin{aligned} \label{sss:regret}
\operatorname{Reg}_T =& \sum_{t=1}^T [f_t({\bm x}_t)-f_t({\bm x}^*)]\\
\leq & \sum_{t=1}^T \sum_{k\in \mathcal{F}_t} \langle \nabla f_k({\bm x}_k),{\bm x}_k-{\bm x}^*\rangle - \sum_{t=1}^T \gamma B_\psi({\bm x}^*;{\bm x}_t)\\
=& \underbrace{\sum_{t=1}^T \sum_{k\in \mathcal{F}_t}\langle \nabla f_k({\bm x}_k), {\bm x}_t-{\bm x}^* \rangle- \sum_{t=1}^T \gamma B_\psi({\bm x}^*;{\bm x}_t)}_{\text{normal term}} + \underbrace{\sum_{t=1}^T \sum_{k\in \mathcal{F}_t}\langle \nabla f_k({\bm x}_k), {\bm x}_k-{\bm x}_t \rangle}_{\text{delayed term}}.
\end{aligned}
\end{equation}
The bound of the normal term in Eq.~\eqref{sss:regret} is as follows.
\begin{lemma} \label{lemma:dmd_sc_normal}
The normal term of Eq.~\eqref{sss:regret} is bounded by
\begin{equation}
\begin{aligned} \label{sss:normal_term}
\sum_{t=1}^T \sum_{k\in \mathcal{F}_t}\langle \nabla f_k({\bm x}_k), {\bm x}_t-{\bm x}^* \rangle- \sum_{t=1}^T \gamma B_\psi({\bm x}^*;{\bm x}_t) \leq \frac{2dG_\star}{\sigma \gamma^2} +   \frac{(dG^2_\star+8 \xi RG_\star) (1+\ln T)}{2\sigma \gamma}. 
\end{aligned}
\end{equation}    
\end{lemma}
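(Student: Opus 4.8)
The plan is to mirror the argument for the DMD-RSC normal term (Lemma~\ref{lemma:dddd-sc_nomal}), adapting it to the two distinctive features of SDMD-RSC: the aggregated gradient $g_t := \sum_{k\in\mathcal{F}_t}\nabla f_k({\bm x}_k)$ is formed from values taken at the query points ${\bm x}_k$ rather than at ${\bm x}_t$, and the step size is $\eta_t = \frac{1}{\gamma t}$, tied to the number of completed decisions instead of the number of received feedbacks. First I would invoke the optimality condition of the exact update ${\bm y}^*_{t+1} = \arg\min_{{\bm x}\in\mathcal{X}}\{\langle g_t,{\bm x}\rangle + \frac{1}{\eta_t}B_\psi({\bm x};{\bm x}_t)\}$, namely $\langle \eta_t g_t + \nabla\psi({\bm y}^*_{t+1}) - \nabla\psi({\bm x}_t),\, {\bm x}^* - {\bm y}^*_{t+1}\rangle \geq 0$, and rearrange it to upper bound $\langle g_t, {\bm x}_t - {\bm x}^*\rangle$.

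Next I would insert the approximate iterate ${\bm x}_{t+1}$ as an intermediary, splitting the $\nabla\psi$ inner product into a term $\langle \nabla\psi({\bm x}_{t+1}) - \nabla\psi({\bm x}_t), {\bm x}^* - {\bm x}_{t+1}\rangle$ plus residuals controlled by Assumptions~\ref{assumption:RG} and \ref{assumption:psipsi} (the $\xi G_\star$-Lipschitz gradient of $\psi$ together with $\Vert{\bm x}\Vert \leq R$), which produces a factor $4\xi R G_\star \Vert{\bm x}_{t+1} - {\bm y}^*_{t+1}\Vert$ exactly as in Eq.~\eqref{aqq:normal_3}. Applying the three-point identity turns $\langle \nabla\psi({\bm x}_{t+1}) - \nabla\psi({\bm x}_t), {\bm x}^* - {\bm x}_{t+1}\rangle$ into the telescoping pair $B_\psi({\bm x}^*;{\bm x}_t) - B_\psi({\bm x}^*;{\bm x}_{t+1})$ minus $B_\psi({\bm x}_{t+1};{\bm x}_t)$; I would then treat $\langle g_t, {\bm x}_t - {\bm x}_{t+1}\rangle \leq |\mathcal{F}_t|G_\star\Vert{\bm x}_t - {\bm x}_{t+1}\Vert$ by Young's inequality and absorb the resulting quadratic into $-B_\psi({\bm x}_{t+1};{\bm x}_t) \leq -\frac{\sigma}{2}\Vert{\bm x}_t - {\bm x}_{t+1}\Vert^2$, leaving the clean residual $\frac{\eta_t|\mathcal{F}_t|^2 G_\star^2}{2\sigma}$. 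The approximation error is handled as in the preceding proofs: since the per-step objective is $\frac{\sigma}{\eta_t}$-strongly convex and $\rho_t = \frac{\eta_t^3}{2\sigma}$, one obtains $\Vert{\bm x}_{t+1} - {\bm y}^*_{t+1}\Vert \leq \sqrt{2\eta_t\rho_t/\sigma} = \eta_t^2/\sigma$.

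The decisive step is the summation. Folding in the relative-strong-convexity term $-\gamma B_\psi({\bm x}^*;{\bm x}_t)$ available per iteration and using $\frac{1}{\eta_t} - \gamma = \gamma(t-1) = \frac{1}{\eta_{t-1}}$, the Bregman terms telescope to a nonpositive quantity (with $1/\eta_0 = 0$), so only the residuals survive: $\sum_t \frac{\eta_t|\mathcal{F}_t|^2 G_\star^2}{2\sigma}$ together with $\sum_t \big(|\mathcal{F}_t|G_\star + \frac{4\xi R G_\star}{\eta_t}\big)\cdot\frac{\eta_t^2}{\sigma}$ coming from the approximation error. With $\eta_t = \frac{1}{\gamma t}$ and $|\mathcal{F}_t| \leq d$, the error sums become $\sum_t \frac{|\mathcal{F}_t| G_\star}{\sigma\gamma^2 t^2} \leq \frac{2dG_\star}{\sigma\gamma^2}$ (using $\sum_t t^{-2} \leq 2$) and $\sum_t \frac{4\xi R G_\star}{\sigma\gamma t} \leq \frac{8\xi R G_\star(1+\ln T)}{2\sigma\gamma}$, matching two of the three terms in the stated bound.

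The main obstacle is bounding $\sum_t \frac{\eta_t|\mathcal{F}_t|^2 G_\star^2}{2\sigma} = \sum_t \frac{|\mathcal{F}_t|^2 G_\star^2}{2\sigma\gamma t}$ by $\frac{dG_\star^2(1+\ln T)}{2\sigma\gamma}$. Bounding $|\mathcal{F}_t|^2 \leq d|\mathcal{F}_t|$ reduces this to showing $\sum_t \frac{|\mathcal{F}_t|}{t} \leq 1+\ln T$. Unlike the DMD-RSC denominator $\sum_{\tau\leq t}|\mathcal{F}_\tau|$, which makes the analogous estimate routine, here the denominator $t$ forces me to exploit the structural fact that feedback cannot arrive before it is queried, i.e. $S_t := \sum_{\tau\leq t}|\mathcal{F}_\tau| \leq t$ for every $t$. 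Writing $|\mathcal{F}_t| = S_t - S_{t-1}$ and applying Abel summation, $\sum_t \frac{S_t - S_{t-1}}{t} = \frac{S_T}{T} + \sum_{t<T} \frac{S_t}{t(t+1)} \leq 1 + \sum_{t<T}\frac{1}{t+1} \leq 1+\ln T$, which is precisely where the choice $\eta_t \propto 1/t$ (rather than $1/S_t$) is paid for. Collecting all residuals then yields the claimed bound.
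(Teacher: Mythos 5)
Your proposal is correct and follows essentially the same route as the paper's proof: the same optimality-condition rearrangement with the approximate iterate inserted, the three-point identity, Young's inequality absorbed into $-B_\psi({\bm x}_{t+1};{\bm x}_t)$, the $\eta_t^2/\sigma$ approximation-error bound, and the telescoping via $\frac{1}{\eta_t}-\gamma=\frac{1}{\eta_{t-1}}$. The only (immaterial) divergence is in bounding $\sum_t \eta_t\vert\mathcal{F}_t\vert^2 G_\star^2/(2\sigma)$: you use Abel summation on $\sum_t\vert\mathcal{F}_t\vert/t$, while the paper compares $t\geq\sum_{\tau\leq t}\vert\mathcal{F}_\tau\vert$ termwise and invokes $\sum_t\vert\mathcal{F}_t\vert/\sum_{\tau\leq t}\vert\mathcal{F}_\tau\vert\leq 1+\ln T$ --- both rest on the same structural fact that feedback cannot arrive before it is queried.
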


Next, we discuss the delayed term of Eq.~\eqref{sss:regret}.
\begin{equation}
\begin{aligned} \label{sss:delayed_term}
\sum_{t=1}^T \sum_{k\in \mathcal{F}_t} \langle \nabla f_k({\bm x}_k), {\bm x}_k-{\bm x}_t \rangle
\leq & \sum_{t=1}^T \sum_{k\in \mathcal{F}_t} \Vert \nabla f_k({\bm x}_k) \Vert_\star \cdot \Vert {\bm x}_k-{\bm x}_t \Vert\\
\leq & \sum_{t=1}^T \sum_{k\in \mathcal{F}_t} G_\star \sum_{\tau=k}^{k+d_k-1} \Vert {\bm x}_{\tau+1}-{\bm x}_\tau \Vert\\
\leq& d G_\star \sum_{t=1}^T \Vert {\bm x}_{t+1} -{\bm x}_t \Vert.
\end{aligned}
\end{equation}

The key is the difference between ${\bm x}_t$ and ${\bm x}_{t+1}$.
\begin{lemma}  \label{lemma:dmd_sc_delay}
For each $t\in [T]$, our SDMD-RSC algorithm ensures that
\begin{equation}
\begin{aligned} \label{sss:adjcanet}
\Vert {\bm x}_{t+1} -{\bm x}_t \Vert \leq \frac{\eta_t \vert\mathcal{F}_t \vert G_\star + \eta_t^2+\eta_{t-1}^2}{\sigma}+ \frac{ \eta_{t-1}^2 \xi G_\star}{\sigma^2}.
\end{aligned}
\end{equation}
\end{lemma}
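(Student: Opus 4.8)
The plan is to mirror the argument already used for Lemma~\ref{lemma:dddd_sc_delay}, the only structural change being that the feedback gradients are now evaluated at the decision points $\bm{x}_k$ rather than at the current center $\bm{x}_t$. Since Assumption~\ref{assumption:RG} guarantees $\|\nabla f_k(\bm{x}_k)\|_\star \leq G_\star$, this substitution leaves every norm bound intact. I would first control the drift of the \emph{exact} minimizers $\bm{y}^*_t$ and $\bm{y}^*_{t+1}$, and then transfer the estimate to the approximate iterates $\bm{x}_t,\bm{x}_{t+1}$ via the triangle inequality combined with the approximate-solution error.

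The first step bounds $\|\bm{y}^*_t - \bm{y}^*_{t+1}\|$. Using the $\sigma$-strong convexity of $\psi$ (Assumption~\ref{assumpton:strongly_convex}) together with the definition of the Bregman divergence, I obtain
\begin{align*}
\sigma\|\bm{y}^*_t - \bm{y}^*_{t+1}\|^2 \leq B_\psi(\bm{y}^*_{t+1};\bm{y}^*_t) + B_\psi(\bm{y}^*_t;\bm{y}^*_{t+1}) = \langle \nabla\psi(\bm{y}^*_t) - \nabla\psi(\bm{y}^*_{t+1}), \bm{y}^*_t - \bm{y}^*_{t+1}\rangle.
\end{align*}
Invoking the first-order optimality condition for $\bm{y}^*_{t+1}$ tested at the feasible point $\bm{y}^*_t$, namely $\langle \eta_t\sum_{k\in\mathcal{F}_t}\nabla f_k(\bm{x}_k) + \nabla\psi(\bm{y}^*_{t+1}) - \nabla\psi(\bm{x}_t),\, \bm{y}^*_t - \bm{y}^*_{t+1}\rangle \geq 0$, and splitting the left-hand inner product through $\nabla\psi(\bm{x}_t)$, the right-hand side is bounded by $\eta_t\sum_{k\in\mathcal{F}_t}\langle\nabla f_k(\bm{x}_k), \bm{y}^*_t - \bm{y}^*_{t+1}\rangle + \langle\nabla\psi(\bm{y}^*_t) - \nabla\psi(\bm{x}_t), \bm{y}^*_t - \bm{y}^*_{t+1}\rangle$. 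Applying H\"older with Assumption~\ref{assumption:RG} to the first term and the $\xi G_\star$-Lipschitz gradient property of $\psi$ (Assumption~\ref{assumption:psipsi}) to the second, then cancelling one factor of $\|\bm{y}^*_t - \bm{y}^*_{t+1}\|$, yields
\begin{align*}
\|\bm{y}^*_{t+1} - \bm{y}^*_t\| \leq \frac{\eta_t|\mathcal{F}_t|G_\star + \xi G_\star\|\bm{x}_t - \bm{y}^*_t\|}{\sigma}.
\end{align*}

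The second step controls the approximate-solution residual. Since $B_t(\bm{x}) := \langle\sum_{k\in\mathcal{F}_t}\nabla f_k(\bm{x}_k), \bm{x}\rangle + \tfrac{1}{\eta_t}B_\psi(\bm{x};\bm{x}_t)$ is $\tfrac{\sigma}{\eta_t}$-strongly convex and $\bm{y}^*_{t+1}$ is its minimizer, the defining inequality of the approximate solution with $\rho_t = \tfrac{\eta_t^3}{2\sigma}$ gives $\|\bm{x}_{t+1} - \bm{y}^*_{t+1}\| \leq \sqrt{2\eta_t\rho_t/\sigma} = \tfrac{\eta_t^2}{\sigma}$, and hence $\|\bm{x}_t - \bm{y}^*_t\| \leq \tfrac{\eta_{t-1}^2}{\sigma}$ at the previous iteration. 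Combining through $\|\bm{x}_{t+1} - \bm{x}_t\| \leq \|\bm{y}^*_{t+1} - \bm{y}^*_t\| + \|\bm{x}_{t+1} - \bm{y}^*_{t+1}\| + \|\bm{x}_t - \bm{y}^*_t\|$ and substituting $\|\bm{x}_t - \bm{y}^*_t\| \leq \eta_{t-1}^2/\sigma$ into both the minimizer-drift term and the final term produces exactly
\begin{align*}
\|\bm{x}_{t+1} - \bm{x}_t\| \leq \frac{\eta_t|\mathcal{F}_t|G_\star + \eta_t^2 + \eta_{t-1}^2}{\sigma} + \frac{\eta_{t-1}^2\xi G_\star}{\sigma^2}.
\end{align*}

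The main obstacle is the careful bookkeeping of the approximate-solution error across consecutive iterations: the residual $\|\bm{x}_t - \bm{y}^*_t\|$ inherited from step $t-1$ resurfaces twice, once directly in the triangle inequality and once, amplified by the Lipschitz constant $\xi$, inside the minimizer-drift bound, so one must track which learning rate ($\eta_t$ versus $\eta_{t-1}$) governs each occurrence to land on the stated coefficients. Everything else is a faithful transcription of the DMD-RSC computation, with $\nabla f_k(\bm{x}_t)$ replaced throughout by $\nabla f_k(\bm{x}_k)$.
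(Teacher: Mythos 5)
Your proposal is correct and follows essentially the same route as the paper's own proof: bounding $\Vert{\bm y}^*_t-{\bm y}^*_{t+1}\Vert$ via the symmetrized Bregman divergence, the optimality condition for ${\bm y}^*_{t+1}$, and the $\xi G_\star$-Lipschitz gradient of $\psi$, then transferring to ${\bm x}_t,{\bm x}_{t+1}$ through the approximate-solution residuals $\eta_t^2/\sigma$ and $\eta_{t-1}^2/\sigma$ and the triangle inequality. The bookkeeping of where $\eta_{t-1}$ versus $\eta_t$ enters matches the paper exactly.
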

Note that we make $\eta_0=\eta_1$. Considering $\eta_t = \frac{1}{\gamma t}$ and substituting Eq.~\eqref{sss:adjcanet} into Eq.~\eqref{sss:delayed_term} gives
\begin{equation}
\begin{aligned} \label{sss:delayed_term_2}
\sum_{t=1}^T \sum_{k\in \mathcal{F}_t} \langle \nabla f_k({\bm x}_k), {\bm x}_k-{\bm x}_t \rangle \leq   \frac{dG^2_\star (1+\ln T)}{\sigma \gamma}+ \frac{4dG_\star}{\sigma \gamma^2}+\frac{2d \xi G_\star}{\sigma^2\gamma^2}.
\end{aligned}
\end{equation}

Combining with Eq.~\eqref{sss:normal_term} and Eq.~\eqref{sss:delayed_term_2}, we get the result of Theorem~\ref{theorem:Somd_sc}.

\subsection{Proof of Lemma~\ref{lemma:dmd_sc_normal}}
At each iteration $t$, the decision ${\bm y}_{t+1}^*$ is updated by
\begin{align*}
{\bm y}_{t+1}^* = \underset{{\bm x}\in \mathcal{X}}{\arg \min}\left\{  \sum_{k\in \mathcal{F}_t}  \left\langle \nabla f_k({\bm x}_k),{\bm x} \right\rangle + \frac{1}{\eta_t} B_\psi({\bm x};{\bm x}_t) \right\}.
\end{align*}
From the optimality condition for the update rule, for any ${\bm x}^* \in \mathcal{X}$, we have
\begin{align*}
\left\langle \eta_{t} \sum_{k\in \mathcal{F}_t} \nabla f_k({\bm x}_k) + \nabla \psi({\bm y}^*_{t+1}) -\nabla \psi({\bm x}_{t}),{\bm x}^* - {\bm y}^*_{t+1} \right\rangle \geq 0.
\end{align*}   
Rearranging the terms above yields
\begin{align*}
 \sum_{k\in \mathcal{F}_t}\left\langle \nabla f_k({\bm x}_k),{\bm x}_{t}-{\bm x}^* \right\rangle
\leq&  \sum_{k\in \mathcal{F}_t} \left\langle \nabla f_k({\bm x}_k),{\bm x}_{t}-{\bm y}^*_{t+1} \right\rangle + \frac{\left\langle \nabla \psi({\bm y}^*_{t+1})-\nabla \psi({\bm x}_{t}) ,{\bm x}^*-{\bm y}^*_{t+1}\right\rangle}{\eta_{t}}\\
\leq &\sum_{k\in \mathcal{F}_t} \left\langle \nabla f_k({\bm x}_k),{\bm x}_{t}-{\bm x}_{t+1} \right\rangle + \vert \mathcal{F}_t\vert G_\star \cdot\Vert{\bm x}_{t+1}-{\bm y}^*_{t+1} \Vert \\
&+\frac{\left\langle \nabla \psi({\bm y}^*_{t+1})-\nabla \psi({\bm x}_{t}) ,{\bm x}^*-{\bm y}^*_{t+1}\right\rangle}{\eta_{t}}.
\end{align*}
For the last term above, we have
\begin{align*}
\left\langle \nabla \psi({\bm y}^*_{t+1})-\nabla \psi({\bm x}_{t}) ,{\bm x}^*-{\bm y}^*_{t+1}\right\rangle 
\leq &\left\langle \nabla \psi({\bm y}^*_{t+1})-\nabla \psi({\bm x}_{t+1}) ,{\bm x}^*-{\bm y}^*_{t+1}\right\rangle\\
&+\left\langle \nabla \psi({\bm x}_{t+1})-\nabla \psi({\bm x}_{t}) ,{\bm x}^*-{\bm x}_{t+1}\right\rangle\\
&+\left\langle \nabla \psi({\bm x}_{t+1})-\nabla \psi({\bm x}_{t}) ,{\bm x}_{t+1}-{\bm y}^*_{t+1}\right\rangle\\
\leq & \left\langle \nabla \psi({\bm x}_{t+1})-\nabla \psi({\bm x}_{t}) ,{\bm x}^*-{\bm x}_{t+1}\right\rangle\\
&+4 \xi R G_\star   \Vert {\bm x}_{t+1}-{\bm y}_{t+1}^*\Vert.
\end{align*}

Applying the three-point identity of Bregman divergence gives
\begin{equation}
\begin{aligned} \label{sss:delayed_1}
\left\langle \nabla \psi({\bm x}_{t+1})-\nabla \psi({\bm x}_{t}) ,{\bm x}_{t+1}-{\bm y}^*_{t+1}\right\rangle\leq {B_\psi({\bm x}^*;{\bm x}_{t})-B_\psi({\bm x}^*;{\bm x}_{t+1}) -B_\psi({\bm x}_{t+1};{\bm x}_{t})}.
\end{aligned}    
\end{equation}

For the first term on the R.H.S of Eq.~\eqref{sss:delayed_1}, we have
\begin{equation}
\begin{aligned} \label{sss:delayed_2}
\sum_{k\in \mathcal{F}_t} \left\langle \nabla f_k({\bm x}_k),{\bm x}_{t}-{\bm x}_{t+1} \right\rangle \leq \vert \mathcal{F}_t\vert G_\star  \Vert {\bm x}_t-{\bm x}_{t+1}\Vert \leq \frac{\eta_t}{2\sigma}\vert\mathcal{F}_t\vert^2 G^2_\star + \frac{\sigma}{2\eta_t}\Vert {\bm x}_{t}-{\bm x}_{t+1}\Vert^2.
\end{aligned}
\end{equation}
Additionally, we assume the regularization function $\psi(\cdot)$ is $\sigma$-strongly convex with respect to norm $\Vert\cdot\Vert$. Then we have
\begin{equation}
\begin{aligned} \label{sss:delayed_3}
B_\psi({\bm x}_{t+1};{\bm x}_t) \geq \frac{\sigma}{2}\Vert {\bm x}_t-{\bm x}_{t+1}\Vert^2.
\end{aligned}
\end{equation}

Combining with Eq.~\eqref{sss:delayed_1}, Eq.~\eqref{sss:delayed_2} and Eq.~\eqref{sss:delayed_3} yields
\begin{equation}
\begin{aligned} \label{sss:delayed_4}
\sum_{t=1}^T\sum_{k\in \mathcal{F}_t} \left\langle \nabla f_k({\bm x}_k),{\bm x}_{t}-{\bm x}^* \right\rangle
\leq& \sum_{t=1}^T \frac{\eta_t}{2\sigma}\vert\mathcal{F}_t\vert^2 G^2_\star + \sum_{t=1}^T\frac{B_\psi({\bm x}^*;{\bm x}_{t})-B_\psi({\bm x}^*;{\bm x}_{t+1})}{\eta_{t}}\\
&+\sum_{t=1}^T \left(\vert\mathcal{F}_t\vert G_\star+ \frac{4\xi R G_\psi }{\eta_t}\right)  \Vert {\bm x}_{t+1}-{\bm y}_{t+1}^*\Vert.
\end{aligned}  
\end{equation}

Subtracting $\sum_{t=1}^T \gamma B_\psi({\bm x}^*;{\bm x}_{t})$ from Eq.~\eqref{sss:delayed_4} and substituting $\eta_t = \frac{1}{\gamma t}$, we obtain
\begin{equation}
\begin{aligned}\label{asdsadsadasdasd_1}
&\sum_{t=1}^T\sum_{k\in \mathcal{F}_t} \left\langle \nabla f_k({\bm x}_k),{\bm x}_{t}-{\bm x}^* \right\rangle - \sum_{t=1}^T \gamma B_\psi({\bm x}^*;{\bm x}_{t})\\
\leq & \sum_{t=1}^T \frac{\vert\mathcal{F}_t\vert^2 G_\star^2}{2\sigma\gamma t} + \sum_{t=1}^T \left[ (t-1)\gamma B_\psi({\bm x}^*;{\bm x}_t) - t\gamma B_\psi({\bm x}^*;{\bm x}_{t+1})\right] \\
&+\sum_{t=1}^T \left(\vert\mathcal{F}_t\vert G_\star+ \frac{4 \xi RG_\star }{\eta_t}\right)  \Vert {\bm x}_{t+1}-{\bm y}_{t+1}^*\Vert \\
\leq & \sum_{t=1}^T \frac{d\vert\mathcal{F}_t\vert G_\star^2}{2\sigma\gamma \sum_{\tau=1}^t\vert\mathcal{F}_\tau\vert} +\sum_{t=1}^T \left(\vert\mathcal{F}_t\vert G_\star+ \frac{4 \xi RG_\star }{\eta_t}\right)  \Vert {\bm x}_{t+1}-{\bm y}_{t+1}^*\Vert\\
\leq& \frac{dG^2_\star (1+\ln T)}{2\sigma \gamma}+\sum_{t=1}^T \left(\vert\mathcal{F}_t\vert G_\star+ \frac{4\xi R G_\star }{\eta_t}\right)  \Vert {\bm x}_{t+1}-{\bm y}_{t+1}^*\Vert.
\end{aligned}  
\end{equation}

Consider the approximate solution of ${\bm x}_{t+1}$, we have
\begin{align*}
\sum_{k\in \mathcal{F}_t}  \left\langle \nabla f_k({\bm x}_k),{\bm x}_{t+1} \right\rangle + \frac{1}{\eta_t} B_\psi({\bm x}_{t+1};{\bm x}_t) \leq \sum_{k\in \mathcal{F}_t}  \left\langle \nabla f_k({\bm x}_k),{\bm y}_{t+1}^* \right\rangle + \frac{1}{\eta_t} B_\psi({\bm y}_{t+1}^*;{\bm x}_t)+\rho_t.
\end{align*}
We make $D_t({\bm x}) = \sum_{k\in \mathcal{F}_t}  \left\langle \nabla f_k({\bm x}_k),{\bm x} \right\rangle + \frac{1}{\eta_t} B_\psi({\bm x};{\bm x}_t)$. It is fact that $D_t$ is $\frac{\sigma}{\eta_t}$-strongly convex with respect to norm $\Vert\cdot\Vert$, then
\begin{align*}
D_t({\bm x}_{t+1}) -D_t({\bm y}_{t+1}^*) \geq& \langle \nabla D_t({\bm y}_{t+1}^*),{\bm x}_{t+1}-{\bm y}_{t+1}^* \rangle + \frac{\sigma}{2\eta_t} \Vert{\bm x}_{t+1}-{\bm y}_{t+1}^*\Vert^2\\
\geq &\frac{\sigma}{2\eta_t} \Vert{\bm x}_{t+1}-{\bm y}_{t+1}^*\Vert^2.
\end{align*}
Combining above and $\rho_t = \frac{\eta_t^3}{2\sigma}$ gives
\begin{equation}
\begin{aligned} \label{asdsadsadasdasd}
\Vert{\bm x}_{t+1}-{\bm y}_{t+1}^*\Vert \leq \sqrt{\frac{2\eta_t\rho_t}{\sigma}} = \frac{\eta_t^2}{\sigma}.
\end{aligned} 
\end{equation}
Substituting Eq.~\eqref{asdsadsadasdasd} into Eq.~\eqref{asdsadsadasdasd_1}, we obtain the result of Lemma~\ref{lemma:dmd_sc_normal}.

\subsection{Proof of Lemma~\ref{lemma:dmd_sc_delay}}
Due to the fact that $\psi(\cdot)$ is $\sigma$-strongly convex with respect to norm $\Vert\cdot\Vert$. Thus
\begin{equation}
\begin{aligned} \label{sss:dd_1}
B_\psi({\bm y}^*_{t+1};{\bm y}^*_{t}) + B_\psi({\bm y}^*_{t};{\bm y}^*_{t+1})\geq \sigma \Vert {\bm y}^*_{t}-{\bm y}^*_{t+1} \Vert^2.
\end{aligned}
\end{equation}
Meanwhile, based on the definition of Bregman divergence, the L.H.S of Eq.~\eqref{sss:dd_1} can be rewritten as
\begin{align*}
B_\psi({\bm y}^*_{t+1};{\bm y}^*_{t}) + B_\psi({\bm y}^*_{t};{\bm y}^*_{t+1}) = \left\langle \nabla \psi({\bm y}^*_{t}) -\nabla \psi({\bm y}^*_{t+1}), {\bm y}^*_{t}-{\bm y}^*_{t+1}\right\rangle.
\end{align*}
Recall back the optimality condition for the update rule, for any ${\bm y}_t^*$ we have
\begin{align*}
\left\langle \eta_{t} \sum_{k\in \mathcal{F}_t} \nabla f_k({\bm x}_k) + \nabla \psi({\bm y}^*_{t+1}) -\nabla \psi({\bm x}_{t}),{\bm y}^*_t - {\bm y}^*_{t+1} \right\rangle \geq 0.
\end{align*}
Rearranging the terms of the above equality yields
\begin{equation}
\begin{aligned}  \label{sss:dd_2}
\left\langle \nabla \psi({\bm y}^*_{t}) -\nabla \psi({\bm y}^*_{t+1}), {\bm y}^*_{t}-{\bm y}^*_{t+1}\right\rangle \leq& \eta_t  \sum_{k\in \mathcal{F}_t} \langle \nabla f_k({\bm x}_k), {\bm y}^*_{t}-{\bm y}^*_{t+1}\rangle\\
&+ \langle \nabla \psi({\bm y}^*_{t}) -\nabla \psi({\bm x}_{t}), {\bm y}^*_{t}-{\bm y}^*_{t+1} \rangle\\
\leq & \eta_t \vert \mathcal{F}_t\vert G_\star \Vert {\bm y}_t^*-{\bm y}_{t+1}^*\Vert\\
&+ \xi G_\star \Vert {\bm x}_t-{\bm y}_t^* \Vert\cdot\Vert {\bm y}_t^*-{\bm y}_{t+1}^*\Vert.
\end{aligned}    
\end{equation}
Considering Eq.~\eqref{sss:dd_1} and Eq.~\eqref{sss:dd_2} gives
\begin{align*} 
\Vert {\bm y}^*_{t} - {\bm y}^*_{t+1}\Vert \leq  \frac{\eta_t \vert \mathcal{F}_t\vert G_\star + \xi G_\star \Vert {\bm x}_t-{\bm y}_t^*\Vert}{\sigma}.
\end{align*}    
Then combining with Eq.~\eqref{asdsadsadasdasd}, we have the result of Lemma~\ref{lemma:dmd_sc_delay}
\begin{align*}
\Vert {\bm x}_{t+1}-{\bm x}_t\Vert \leq& \Vert {\bm y}^*_{t+1}-{\bm y}^*_t\Vert  + \Vert {\bm x}_{t+1}-{\bm y}^*_{t+1}\Vert +\Vert {\bm x}_{t}-{\bm y}^*_t\Vert  \\
\leq & \frac{\eta_t \vert\mathcal{F}_t \vert G_\star + \eta_t^2+\eta_{t-1}^2}{\sigma}+ \frac{ \eta_{t-1}^2 \xi G_\star}{\sigma^2}.
\end{align*}

\end{document}